\documentclass[shortAfour,sageh,times]{cls/sagej}
\usepackage{moreverb, url}

\usepackage{amsmath,amsfonts}
\usepackage{algorithmic}
\usepackage{array}
\usepackage{textcomp}
\usepackage{stfloats}
\usepackage{verbatim}
\usepackage{graphicx}
\hyphenation{op-tical net-works semi-conduc-tor IEEE-Xplore}

\usepackage[table,usenames,dvipsnames]{xcolor}      
\usepackage[noadjust]{cite}
\usepackage{amsmath,amssymb,amsfonts,amsthm,dsfont,mathtools}
\usepackage{nicematrix}

\usepackage{adjustbox}
\usepackage{graphicx,tabularx,adjustbox,color}
\usepackage{multirow}
\usepackage[font=footnotesize]{caption}
\usepackage[font=footnotesize]{subcaption}

\usepackage[ruled,vlined]{algorithm2e}
\usepackage{stackengine}

\usepackage{dirtytalk}
\allowdisplaybreaks


\usepackage[colorlinks,bookmarksopen,bookmarksnumbered,citecolor=red,urlcolor=red]{hyperref}


\newcommand{\norm}[1]{\left\lVert#1\right\rVert}

\DeclareMathOperator*{\argmin}{arg\,min}

\newtheorem{theorem}{Theorem}[section]
\newtheorem{proposition}[theorem]{Proposition}
\newtheorem{lemma}[theorem]{Lemma}
\newtheorem{Assumption}[theorem]{Assumption}

\newtheorem{remark}[theorem]{Remark}
\theoremstyle{definition}
\newtheorem{definition}[theorem]{Definition}

\newtheorem*{problem*}{Problem}


\newcommand{\map}[3]{#1:#2 \rightarrow #3}
\newcommand{\longthmtitle}[1]{\mbox{}\textup{\textbf{(#1):}}}


\newcommand\BibTeX{{\rmfamily B\kern-.05em \textsc{i\kern-.025em b}\kern-.08em
T\kern-.1667em\lower.7ex\hbox{E}\kern-.125emX}}

\newcommand\xqed[1]{%
  \leavevmode\unskip\penalty9999 \hbox{}\nobreak\hfill
  \quad\hbox{#1}}
\newcommand\demo{\xqed{$\bullet$}}

\setcounter{secnumdepth}{3}


\newcommand{\calB}{{\cal B}}
\newcommand{\calC}{{\cal C}}
\newcommand{\calD}{{\cal D}}
\newcommand{\calE}{{\cal E}}
\newcommand{\calF}{{\cal F}}

\newcommand{\calK}{{\cal K}}
\newcommand{\calL}{{\cal L}}
\newcommand{\calM}{{\cal M}}

\newcommand{\calO}{{\cal O}}
\newcommand{\calP}{{\cal P}}

\newcommand{\calU}{{\cal U}}

\newcommand{\calX}{{\cal X}}



\newcommand{\bff}{\mathbf{f}}
\newcommand{\bfg}{\mathbf{g}}

\newcommand{\bfk}{\mathbf{k}}

\newcommand{\bfn}{\mathbf{n}}

\newcommand{\bfq}{\mathbf{q}}

\newcommand{\bfu}{\mathbf{u}}

\newcommand{\bfx}{\mathbf{x}}
\newcommand{\bfy}{\mathbf{y}}
\newcommand{\bfz}{\mathbf{z}}

\newcommand{\bfeta}{\boldsymbol{\eta}}

\newcommand{\bfxi}{\boldsymbol{\xi}}


\newcommand{\bfF}{\mathbf{F}}

\newcommand{\bfI}{\mathbf{I}}

\newcommand{\bfR}{\mathbf{R}}


\newcommand{\bbE}{\mathbb{E}}

\newcommand{\bbN}{\mathbb{N}}

\newcommand{\bbP}{\mathbb{P}}
\newcommand{\bbQ}{\mathbb{Q}}
\newcommand{\bbR}{\mathbb{R}}

\newcommand{\ubfu}{\underline{\bfu}}

\begin{document}

\runninghead{Long et al.}

\title{Sensor-Based Distributionally Robust Control for Safe Robot Navigation in Dynamic Environments} 




\author{Kehan Long \quad Yinzhuang Yi \quad Zhirui Dai \quad Sylvia Herbert \quad Jorge Cort{\'e}s \quad \newline Nikolay Atanasov}

\affiliation{Contextual Robotics Institute, University of California San Diego, La Jolla, CA 92093, USA.}

\corrauth{Kehan Long, Contextual Robotics Institute, University of California San Diego, La Jolla, CA 92093, USA.}
\email{k3long@ucsd.edu}

\begin{abstract}
We introduce a novel method for mobile robot navigation in dynamic, unknown environments, leveraging onboard sensing and distributionally robust optimization to impose probabilistic safety constraints. Our method introduces a distributionally robust control barrier function (DR-CBF) that directly integrates noisy sensor measurements and state estimates to define safety constraints. This approach is applicable to a wide range of control-affine dynamics, generalizable to robots with complex geometries, and capable of operating at real-time control frequencies. Coupled with a control Lyapunov function (CLF) for path following, the proposed CLF-DR-CBF control synthesis method achieves safe, robust, and efficient navigation in challenging environments. We demonstrate the effectiveness and robustness of our approach for safe autonomous navigation under uncertainty and dynamic obstacles in simulations and real-world experiments with differential-drive robots.
\end{abstract}


\keywords{Control Barrier Function, Robot Safety, Distributionally Robust Optimization, Autonomous Vehicle Navigation}

\maketitle

\section{Introduction}
\label{sec: intro}

Ensuring real-time, high-frequency robot control with safety guarantees in dynamic and unstructured environments is crucial for the effective deployment of autonomous mobile robots. \citet{potential-field} introduced the seminal artificial potential fields approach to enable collision avoidance for real-time control of mobile robots. This concept has inspired a wealth of research into the joint consideration of path planning and control, including  navigation functions \citep{navigation-function}, dynamic windows \citep{Fox1997TheDW}, and velocity vector fields \citep{de2013navigation}. However, many joint planning and control works typically rely on accurate map representations \citep{voxblox_Oleynikova, herbert2017fastrack, arslan2019sensor, brito2019model, schaefer2021leveraging,kondo2023robust, liu2023radius} updated from onboard sensing to facilitate safe autonomous navigation. In practice, this reliance creates computational bottlenecks, as high-level planning and map updates often occur at lower frequencies and may not adequately account for the uncertainties inherent in high-frequency control \citep{Fox1997TheDW,corke2000elastic,frazzoli2002real,Huang2023_TRO_planning_control}.

Certificate functions have been introduced as powerful tools to assert properties of dynamical systems, such as stability and  safety. Among these, Lyapunov functions \citep{Artstein1983StabilizationWR, SONTAG1989117} guarantee asymptotic stability for dynamical systems, while barrier functions \citep{prajna2004safety} certify forward invariance for desired safe sets. In recent years, control barrier functions (CBFs) have marked a significant advancement in encoding safety constraints for dynamical systems. In conjunction with control Lyapunov functions (CLFs), safe and stable controllers can be synthesized online for control-affine systems via quadratic programming (QP) \citep{cbf}. Due to their computational efficiency and formal guarantees, the CLF-CBF QP framework has become a mainstream approach for synthesizing safe and stable controls for various robot systems \citep{Manavendra_2022_navigate, choi2023constraint, Li_2023_RAL, liu2023realtime}.

However, conventional CBF-based methods commonly assume exact knowledge of robot states, precise system dynamics, and accurate CBF representations. These assumptions rarely hold in practice due to compounded uncertainties from sensor noise, model inaccuracies, and errors in state estimation. Such uncertainties can degrade theoretical safety guarantees and limit practical robustness. Recent approaches have focused on mitigating these issues by estimating CBFs directly from onboard measurements in unknown environments \citep{Long2022RAL, xiao2022differentiable, abdi2023safe, hamdipoor2023safe, Abuaish_2023_radial_NN_ACC}. Nevertheless, these techniques rely heavily on accurate localization and simplified robot geometries, and typically involve computationally intensive environment reconstruction processes from sensory data.

Motivated by these critical limitations, we propose a novel formulation that leverages distributionally robust optimization (DRO) \citep{Esfahani2018DatadrivenDR} to explicitly and efficiently handle multiple realistic uncertainty sources in safe robot navigation. Our distributionally robust control barrier function (DR-CBF) approach directly integrates uncertain sensor measurements and state estimates into safety constraints without requiring precise environmental reconstruction or explicit uncertainty quantification. By using noisy CBF samples within the DRO framework, our formulation significantly enhances the robustness of safe robot navigation in complex, dynamic real-world scenarios. The main \textbf{contributions} of this paper are as follows.

\begin{itemize}

    \item We develop a novel \emph{distributionally robust control barrier function (DR-CBF)} formulation that provides probabilistic safety guarantees by explicitly handling uncertainties in sensor measurements and state estimation;

    \item Coupling the safety constraint with a control Lyapunov function (CLF), we introduce a CLF-DR-CBF quadratic program for synthesizing safe stabilizing controls for general nonlinear control-affine systems; 

    \item Our approach leverages onboard sensor data and state estimates directly as noisy CBF samples, eliminating the need for precise environmental reconstruction and uncertainty quantification;

    \item We validate the safety, efficiency, and robustness of our approach in simulated and real experiments with autonomous differential-drive robots navigating in unknown and dynamic environments, illustrated in Fig.~\ref{fig:jackal_robot}. An open-source implementation of our CLF-DR-CBF controller is available on our project page\footnote{Project page: \href{https://existentialrobotics.org/DRO_Safe_Navigation/}{https://existentialrobotics.org/DRO\_Safe\_Navigation/}}.

\end{itemize}

\begin{figure}[t]
  \centering
  \subcaptionbox{Indoor environment\label{fig:1a}}{\includegraphics[width=0.49\linewidth]{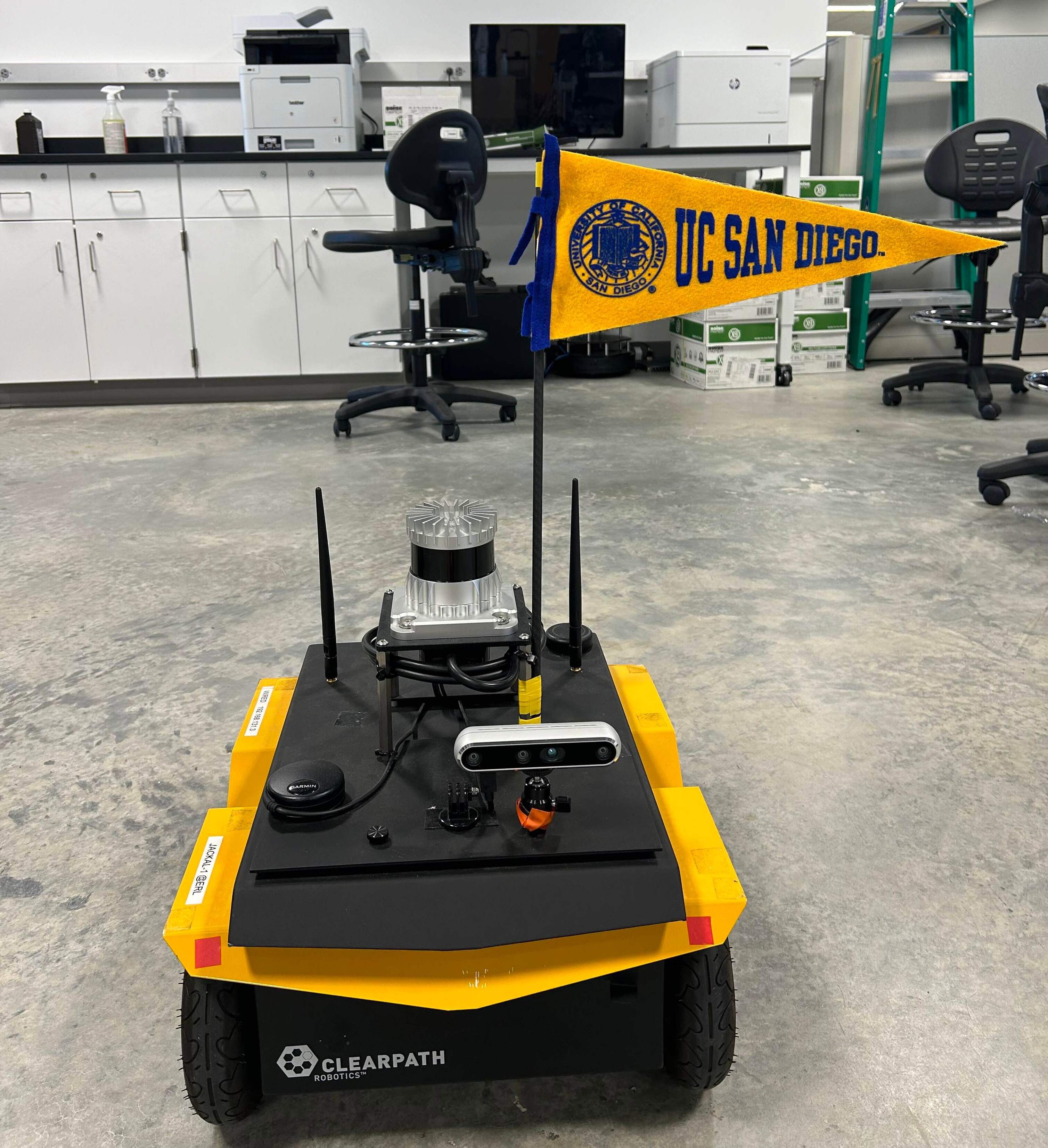}}%
  \hfill%
  \subcaptionbox{Outdoor environment\label{fig:1b}}{\includegraphics[width=0.49\linewidth]{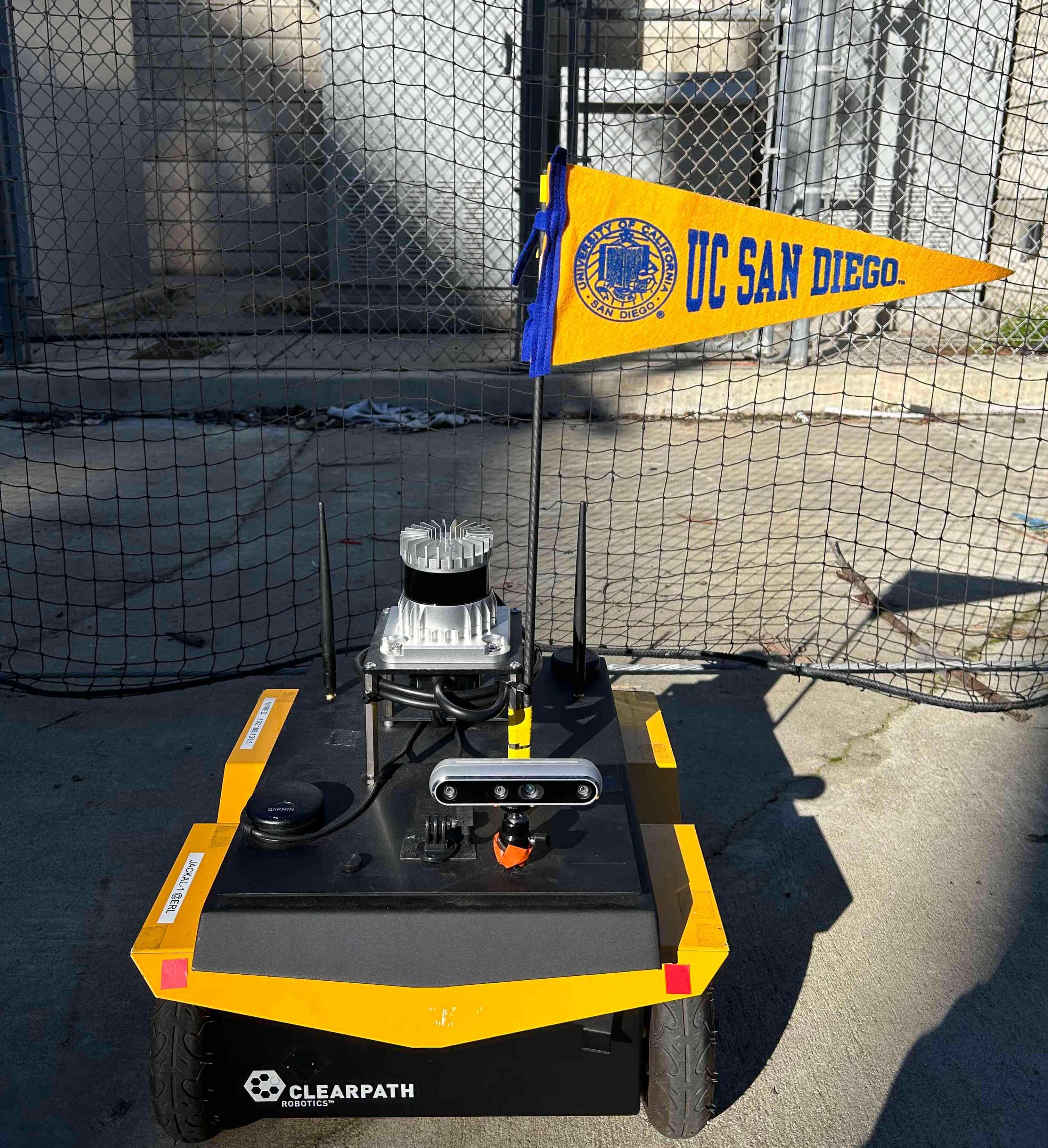}}
  \caption{ClearPath Jackal robot equipped with a LiDAR sensor navigating in unknown environments.}
  \label{fig:jackal_robot}
\end{figure}
\section{Related Work}
\label{sec: related}


This section reviews related work on dynamic obstacle avoidance, distributionally robust optimization, and CLF-CBF techniques for safe stabilizing control.  

\textbf{Dynamic obstacle avoidance.} Robot motion planning algorithms have a rich history, dating back to the 1950s with the introduction of the Dijkstra \citep{DIJKSTRA1959} and $A^*$ \citep{A_star_planning} algorithms for search-based planning. Since then, a substantial amount of research has been dedicated to algorithms for motion planning and trajectory tracking control \citep{lozano_1983_planning, corke2000elastic,frazzoli2002real,Huang2023_TRO_planning_control}. A significant contribution was made by \citet{potential-field}, who introduced artificial potential fields to enable collision avoidance during not only the motion planning stage but also the real-time control of a mobile robot. The formulation was extended to a virtual force field \citep{borenstein1991vector}, facilitating safe navigation in uncertain environments. \citet{navigation-function} developed navigation functions, a particular form of artificial potential functions, which simultaneously ensure collision avoidance and stabilization to a goal configuration. The dynamic window concept was introduced by \citet{Fox1997TheDW} to handle dynamic obstacles by proactively filtering out unsafe control actions. More recently, \citet{bajcsy_2019_cdc_safety_static} leveraged Hamilton-Jacobi reachability for autonomous navigation, providing provable safety guarantees in unknown static environments while enabling real-time updates from sensor measurements and accommodating general nonlinear dynamics and planners. In the domain of safe motion planning for dynamic environments, various methods have been developed to handle uncertainties such as obstacle locations \citep{liu2023radius}, human behavior prediction \citep{schaefer2021leveraging}, and localization errors \citep{summers2018_dr_rrt}. Meanwhile, reinforcement learning techniques have advanced real-time autonomous navigation \citep{pfeiffer2018reinforced, everett_2021_rl, chen2022deep_rl}, often formulating the problem as a partially observable Markov decision process to enable model-free learning of control policies.

Many of these methods require knowledge or estimation of the environment geometry and dynamics, typically in the form of topological \citep{dudek1978robotic} or metric \citep{chatila1985position, borenstein1991vector} map representations. The signed distance function (SDF) has emerged as a particularly valuable tool in this regard \citep{voxblox_Oleynikova, luxin2019fiesta, wu_2023_tro}, offering distance and gradient information for safe control and navigation. However, constructing occupancy or SDF maps at control frequency is challenging in complex and dynamic environments.


\textbf{Distributionally robust optimization.} Distributionally robust optimization (DRO) considers parameter uncertainty in optimization problems, and is particularly effective when only a limited number of uncertainty samples are available. The method compensates for the potential discrepancy between empirical and true uncertainty distributions by utilizing uncertainty descriptors like moment ambiguity sets \citep{Parys2015monent}, Kullback–Leibler ambiguity sets \citep{Jiang2016DatadrivenCC}, and Wasserstein ambiguity sets \citep{Esfahani2018DatadrivenDR, Xie2021OnDR}. The DRO framework has been increasingly utilized for its robust performance guarantees against distributional uncertainty in control \citep{DB-JC-SM:24-tac,PM-KL-NA-JC:23-csl, long2023dro_lf, chriat2023wasserstein} and robotics \citep{ren2022distributionally_ral, coulson2021distributionally, Ryu_2024_icra_mpc_dro}.

\citet{lathrop2021distributionally} developed a Wasserstein safe variant of RRT, offering finite-sample probabilistic safety guarantees.
A DRO approach was introduced by \citet{ren2022distributionally_ral} to enhance policy robustness by iteratively training with adversarial environments generated through a learned generative model. \citet{Long2023_acc_drccp} proposed a DRO formulation for safe stabilizing control under model uncertainty, assuming that nominal safety and stability certificates are provided. \citet{hakobyan2022distributionally} introduced a distributionally robust risk map as a safety specification tool for mobile robots, reformulating optimal control problem over an infinite-dimensional probability distribution space into a tractable semidefinite program. \citet{DB-JC-SM:23-cdc} proposed a distributionally robust coverage control algorithm for a team of robots to optimally deploy in region with an unknown event probability density. A Wasserstein tube MPC is presented in \citet{aolaritei2023wasserstein} for stochastic systems, which utilizes Wasserstein ambiguity sets to construct tubes around nominal trajectories, enhancing robustness and efficiency with limited noise samples available. Compared to existing applications of DRO in robotics and control, our work explicitly incorporates multiple realistic sources of uncertainty, including noisy sensor measurements and state estimation errors, into a unified distributionally robust control barrier function (DR-CBF) formulation. Crucially, we ensure that the resulting safety constraints remain convex, preserving computational tractability and enabling efficient real-time control synthesis. 

%
%



\textbf{Safe stabilizing control.} Quadratic programming that integrates CLF and CBF constraints offers an efficient approach for synthesizing safe stabilizing control inputs for multi-robot navigation \citep{zhang2023neural}, legged locomotion \citep{grandia_2021_legged}, and humanoid operation \citep{khazoom_humanoid_2022}. Despite the efficiency of the CLF-CBF QP in synthesizing safe stabilizing controls, it typically relies on perfect knowledge of system dynamics, state estimates, and barrier function constraints. However, in many robotics applications, various sources of uncertainty can significantly impact performance and reliability. Some recent studies have begun to explore this area, to address uncertainty in system dynamics \citep{dhiman_2023_tac_probabilistic, yousef_2021_tro}, state estimates \citep{Das_2022_cdc_robust, Wang_2023_acc_dob}, and barrier function constraints \citep{Long_learningcbf_ral21, hamdipoor2023safe}. These approaches typically utilize robust and probabilistic models to integrate uncertainty into the QP formulation, leading to convex reformulations that enhance robustness. 

Particularly in safe robot navigation, a robot might need to estimate barrier functions based on its (noisy) observations. \citet{Long_learningcbf_ral21} introduced an incremental online learning approach for estimating the barrier function from LiDAR data and proposed a robust reformulation of the CLF-CBF QP by incorporating estimation errors. \citet{majd_rrt_cbf_iros} integrate time-based rapidly-exploring random trees (RRTs) with CBFs to enable safe navigation in dynamic environments densely populated by pedestrians.  \citet{dawson2022learning} developed an approach to learn observation-space CBFs utilizing distance data and proposed a two-mode hybrid controller to avoid deadlocks in navigation. A reactive planning algorithm was presented in \citet{liu2023realtime} for the safe operation of a bipedal robot with multiple obstacles, utilizing a single differentiable CBF derived from LiDAR point clouds. \citet{abdi2023safe} proposed a method for learning vision-based CBF from RGB-D images with pre-training, enabling safe navigation of autonomous vehicles in unseen environments. By effectively decomposing and predicting the spatial interactions of multiple obstacles, \citet{Yu_sequential_CBF_2023} proposed compositional learning of sequential CBFs, enabling obstacle avoidance in dense dynamic environments. \citet{keyumarsi_LiDAR_CBF} introduced an efficient Gaussian Process-based method for synthesizing CBFs from LiDAR data, showcasing its effectiveness in a turtlebot navigation task. \citet{zhang2024online} introduced an efficient LiDAR-based framework for goal-seeking and exploration of mobile robots in dynamic environments, utilizing minimum bounding ellipses to represent obstacles and Kalman filters to estimate their velocities. 

\textbf{Conformal Prediction.} A related line of research to our work is the application of conformal prediction in robot navigation tasks. 
Conformal prediction \citep{shafer2008tutorial, zhao2024conformal} is a statistical tool for uncertainty quantification that provides valid prediction regions with a user-specified risk tolerance, making it particularly useful for ensuring safety in dynamic environments. Several recent works have explored the integration of conformal prediction into motion planning and control frameworks. \citet{Lindemann_2022_conformal_mpc} used conformal prediction to obtain prediction regions for a model predictive controller. \citet{yang2023safe} employed conformal prediction to quantify state estimation uncertainty and design a robust CBF controller based on the estimated uncertainty. An adaptive conformal prediction algorithm was developed by \citet{dixit2023adaptive} to dynamically quantify prediction uncertainty and plan probabilistically safe paths around dynamic agents.

\section{Preliminaries}
\label{sec: background}
This section introduces our notation and offers a brief review of CLF-CBF QP.

\subsection{Notation}
\label{sec: notation}
The sets of real, non-negative real, and natural numbers are denoted by $\bbR$, $\bbR_{\geq 0}$, and $\bbN$, respectively. For $N \in \bbN$, we write $[N] := \{1,2, \dots N\}$. We denote the distribution and expectation of a random variable $Y$ by $\mathbb{P}$ and $\bbE_{\bbP}(Y)$, respectively. We use $\boldsymbol{0}$ and $\boldsymbol{1}$ to denote the vector with all entries equal to $0$ and $1$, respectively. For a scalar $x$, we define $(x)_+ := \max(x,0)$. We denote by $\bfI_n \in \bbR^{n \times n}$ the identity matrix. 
For a scalar $x$ and $y$, we use $\text{atan2}(y, x)$ to denote the angle between the positive $x$-axis and the point $(x, y)$ in radians.
The interior and boundary of a set $\calC \subset \bbR^n$ are denoted by $\text{Int}(\calC)$ and $\partial \calC$. For a vector $\bfx$, the notation $|\bfx|$ represents its element-wise absolute value, while $\|\bfx \|_1$, $\|\bfx\|$, and $\|\bfx\|_{\infty}$ denote its $L_1$, $L_2$, and $L_{\infty}$ norms, respectively. 
The gradient of a differentiable function $\map{V}{\bbR^n}{\bbR}$ is denoted by $\nabla V$, while its Lie derivative along a vector field $\map{\bff}{\bbR^n}{\bbR^n}$ is $\calL_{\bff} V  = \nabla V^\top \bff$. A continuous function $\alpha: [0,a)\rightarrow [0,\infty )$ is of class $\calK$ if it is strictly increasing and $\alpha(0) = 0$. A continuous function $\alpha: \bbR \rightarrow \mathbb{R}$ is of extended class $\calK_{\infty}$ if it is strictly increasing, $\alpha(0) = 0$, and $\lim_{r \rightarrow \infty} \alpha(r) = \infty$. The special orthogonal group of dimension $p$ is denoted by $\text{SO}(p)$, which is defined as the set of all $p \times p$ orthogonal matrices with determinant equal to 1:
$\text{SO}(p) = \{\bfR \in \bbR^{p \times p} \mid \bfR^\top \bfR = \bfI_p, \det(\bfR) = 1\}$.

\subsection{CLF-CBF Quadratic Program}

Consider a non-linear control-affine system,
\begin{equation}
\label{eq: dynamic}
    \dot{\bfx} = \bff(\bfx) +\bfg(\bfx) \bfu =     [\bff(\bfx) \; \bfg(\bfx)] \begin{bmatrix}
    1 \\
    \bfu
    \end{bmatrix} =: \bfF(\bfx)\ubfu,
\end{equation}
where $\bfx \in \calX \subseteq \mathbb{R}^{n}$ is the  state, $\bfu \in \mathbb{R}^{m}$ is the control input, and $\map{\bff}{\mathbb{R}^{n}}{\mathbb{R}^{n}}$ and $\map{\bfg}{\mathbb{R}^{n}}{\mathbb{R}^{n \times m}}$ are locally Lipschitz continuous functions.  

The notion of a control Lyapunov function (CLF)~\citep{Artstein1983StabilizationWR, SONTAG1989117} plays a key role in certifying the stabilizability of control-affine systems.

\begin{definition}
A continuously differentiable function $\map{V}{\bbR^n}{\bbR}$ is a \emph{control Lyapunov function} (CLF) on $\calX$ for system \eqref{eq: dynamic} if $V(\bfx)>0$, $\forall \bfx \in \calX \setminus \{\boldsymbol{0}\}$, $V(\boldsymbol{0}) = 0$, and
\begin{equation}\label{eq: clf}
    \inf_{\bfu \in \bbR^m} \text{CLC}(\bfx,\bfu) \leq 0, \quad \forall \bfx \in \calX,
\end{equation}
where $\text{CLC}(\bfx,\bfu) := \mathcal{L}_{\bff} V(\bfx) + \mathcal{L}_{\bfg} V(\bfx)\bfu + \alpha_V( V(\bfx))$
is the \emph{control Lyapunov constraint} (CLC) defined for some class $\calK$ function $\alpha_V$.
\end{definition}
%
%


To facilitate safe control synthesis, we consider a time-varying set $\calC(t)$ defined as the zero superlevel set of a continuously differentiable function $h: \calX \times \bbR_{\geq 0} \rightarrow \bbR$:
\begin{equation}
\label{eq: safe_set}  
    \calC(t) := \{\bfx \in \calX: h(\bfx, t) \geq 0 \}.
\end{equation}
Safety of the system \eqref{eq: dynamic} can then be ensured by keeping the state $\bfx$ within the safe set $\calC(t)$.

\begin{definition}
\label{def: tv_cbf}
A continuously differentiable function $h: \mathbb{R}^n \times \bbR_{\geq 0} \rightarrow {\mathbb{R}}$ is a \emph{time-varying control barrier function} (TV-CBF) on $\mathcal{X} \subseteq \mathbb{R}^n$ for \eqref{eq: dynamic} if there exists an extended class $\mathcal{K}_{\infty}$ function $\alpha_h$ with:
\begin{equation}\label{eq:tv_cbf}
    \sup_{\bfu\in \mathcal{U}} \text{CBC}(\bfx,\bfu, t) \geq 0, \quad \forall \; (\bfx,t) \in \calX \times \bbR_{\geq 0},
\end{equation}
where the \emph{control barrier constraint (CBC)} is:
\begin{align}
\label{eq:tvcbc_define}
    &\text{CBC}(\bfx,\bfu, t) := \dot{h}(\bfx, t) + \alpha_h(h(\bfx,t)) \\
    & = \mathcal{L}_{\bff} h(\bfx, t) + \mathcal{L}_{\bfg} h(\bfx, t)\bfu + \frac{\partial h(\bfx,t)}{\partial t} + \alpha_h(h(\bfx,t)). \notag
\end{align}
\end{definition}

Definition~\ref{def: tv_cbf} allows us to consider the set of control values $K_{\text{CBF}}(\bfx, t) := \left \{  \bfu \in \bbR^m: \text{CBC}(\bfx,\bfu, t) \geq 0 \right \}$ that render the set $\calC(t)$
forward invariant. 

\begin{definition}\label{def:tvfi}
Let $t_0$ be a fixed initial time. A time-varying set $\calC(t)$ is said to be \emph{forward invariant} under control law $\bfu : [t_0, \infty) \rightarrow \mathbb{R}^m$ if, for any initial state $\bfx_0 \in \calC(t_0)$, there exists a unique maximal solution $\bfx : [t_0, t_1) \rightarrow \mathbb{R}^n$ to the system dynamics in~\eqref{eq: dynamic} with $\bfx(t_0) = \bfx_0$, such that $\bfx(t) \in \calC(t)$ for all $t \in [t_0, t_1)$.
\end{definition}
%
%

Suppose we are given a baseline feedback controller $\bfu = \bfk(\bfx)$ and we aim to ensure the safety and stability of the control-affine system \eqref{eq: dynamic}. By observing that both the stability and safety constraints in \eqref{eq: clf}, \eqref{eq:tv_cbf} are affine in the control input $\bfu$, a quadratic program \citep{ames2016control} can be formulated to synthesize a safe stabilizing controller:
\begin{equation}
\begin{aligned}
\label{eq: clf_cbf_qp}
    (\bfu(\bfx, t), \delta) \in &\argmin_{\bfu \in \bbR^m,\delta \in \bbR} \| \bfu - \bfk(\bfx)\|^2 + \lambda \delta^2,  \\
    \mathrm{s.t.} \, \,  &\text{CLC}(\bfx,\bfu) \leq \delta,  \text{CBC}(\bfx,\bfu, t) \geq 0,
\end{aligned}
\end{equation}
where $\delta$ denotes a slack variable that relaxes the CLF constraint to ensure feasibility of the QP, controlled by the scaling factor $\lambda > 0$. As discussed in~\citet{ames2016control,PM-AA-JC:23-scl}, if the CBF $h$ has relative degree of $1$ with respect to the system dynamics in \eqref{eq: dynamic}, the controller in \eqref{eq: clf_cbf_qp} is Lipschitz continuous in $\bfx$ and piecewise continuous in $t$. This guarantees unique solutions for the closed-loop system, ensuring the safe set $\calC(t)$ remains forward invariant. However, a few practical limitations should be noted: (a) the use of the slack variable $\delta$ relaxes the strict satisfaction of the CLF stability constraint, creating a trade-off between safety and stability that depends on the design of the safe set; (b) in numerical implementations, the CBF-QP must be solved at a fixed rate, whereas the theoretical guarantees are valid for executions in continuous time; and (c) while the CBF-QP is generally feasible, it may become infeasible if input constraints limit the available control actions. Researchers have proposed methods to address these challenges, including employing Input-Constrained Control Barrier Functions (ICCBFs) \citep{Agrawal_2021_cdc}, using neural network-based CBFs to account for input saturation \citep{liu2023safe}, and tuning of the class $\mathcal{K}_\infty$ functions \citep{hardik_2022_rate_cbf}.
\section{Problem Formulation}\label{sec:problem}

We consider a mobile robot that relies on noisy range measurements to traverse an unknown dynamic environment towards a desired goal. The robot's motion is governed by control-affine dynamics as in \eqref{eq: dynamic}. Let $\phi: \calX \rightarrow \mathbb{R}^p$ be the mapping that projects the robot's state $\bfx$ to its position vector $\phi(\bfx) \in \mathbb{R}^p$. We denote the robot orientation as $\bfR(\bfx) \in \text{SO}(p)$.

Let \( \mathcal{B}_0 \subset \mathbb{R}^p \) represent the robot's shape in its local coordinate frame, i.e., when the robot is at the origin with no rotation or velocity. To describe the shape of the robot in a general state \( \mathbf{x} \), we introduce the following linear transformation:
\begin{equation}
\label{eq: robot_shape_transformation}
\mathcal{B}(\mathbf{x}) = A(\mathbf{x}) \mathcal{B}_0 + \mathbf{b}(\mathbf{x}),
\end{equation}
where \( A: \mathbb{R}^n \to \mathbb{R}^{p \times p} \) maps the robot’s state \( \mathbf{x} \) to an invertible transformation matrix (e.g., rotation), and \( \mathbf{b}: \mathbb{R}^n \to \mathbb{R}^p \) defines the translation, both being continuously differentiable.

%
%

We use a \emph{signed distance function} (SDF) to describe the robot'a shape. Let \( \mathcal{S} \subset \mathbb{R}^p \) be a closed set. The signed distance function \( d: \mathbb{R}^p \rightarrow \mathbb{R} \) computes the signed distance from a point \( \mathbf{q} \in \mathbb{R}^p \) to the set boundary \( \partial \mathcal{S} \):
\begin{equation}
\label{eq:general_sdf}
    d_{\mathcal{S}}(\mathbf{q}) =
    \begin{cases}
        -\min\limits_{\mathbf{q}^* \in \partial \mathcal{S}} \| \mathbf{q} - \mathbf{q}^* \|, & \text{if } \mathbf{q} \in \mathcal{S}, \\
        \phantom{-}\min\limits_{\mathbf{q}^* \in \partial \mathcal{S}} \| \mathbf{q} - \mathbf{q}^* \|, & \text{if } \mathbf{q} \notin \mathcal{S}.
    \end{cases}
\end{equation}
This SDF provides a measure of how far a point \( \mathbf{q} \) is from the set \( \mathcal{S} \). The function takes negative values for points inside \( \mathcal{S} \), positive values for points outside, and is zero on the boundary \( \partial \mathcal{S} \). 

When the set \( \mathcal{S} \) represents the robot's body \( \mathcal{B}(\mathbf{x}) \), we call its signed distance function the \emph{robot SDF}. To express the SDF of the robot's body in terms of its reference shape \( \mathcal{B}_0 \), we apply the linear transformation in \eqref{eq: robot_shape_transformation}:
\begin{equation}
\label{eq: robot_sdf_transformation}
    d(\mathcal{B}(\mathbf{x}), \mathbf{q}) = d_{\mathcal{B}_0}\left( A(\mathbf{x})^{-1} (\mathbf{q} - \mathbf{b}(\mathbf{x})) \right),
\end{equation}
where \( d_{\mathcal{B}_0}(\cdot) \) is the SDF of the reference shape \( \mathcal{B}_0 \). Closed-form robot SDF expressions are available for simple geometries (e.g., circles, spheres) and can be approximated using feed-forward neural networks for more complex geometries \citep{deepsdf, koptev_neural_jsdf_2022}.

The robot is equipped with a range sensor (e.g., LiDAR) mounted at a fixed position and orientation in the robot's body frame.  The sensor emits multiple rays, each corresponding to a direction in the sensor's field of view, and generates distance measurements along these rays. The measurements are subject to additive noise, modeled as $\bfeta(\phi(\bfx), \bfR(\bfx)) = \bar{\bfeta}(\phi(\bfx), \bfR(\bfx)) + \bfn$, where $\bar{\bfeta}(\phi(\bfx), \bfR(\bfx))$ represents the true distances, and $\bfn$ is the noise vector, assumed to be bounded and independent for each ray. The resulting noisy measurements are denoted as $\bfeta(\phi(\bfx), \bfR(\bfx)) = [\eta_1, \ldots, \eta_K]^\top \in [\eta_{\text{min}}, \eta_{\text{max}}]^K$, where $K$ denotes the number of rays per sensor observation, and $\eta_{\text{min}}, \eta_{\text{max}}$ denote the sensor's minimum and maximum range, respectively.

The environment includes both static and dynamic obstacles. Let the obstacle set at the initial time \( t = 0 \) be represented as \( \mathcal{O}_0 \subset \mathbb{R}^p \). At any time \( t \), the obstacle space \( \mathcal{O}(t) \subset \mathbb{R}^p \) is obtained by applying a smooth transformation to the initial obstacle set:
\begin{equation}
\mathcal{O}(t) = A_{\mathcal{O}}(t) \mathcal{O}_0 + \mathbf{b}_{\mathcal{O}}(t),
\label{eq: obstacle_set_transformation}
\end{equation}
where \( A_{\mathcal{O}}: \mathbb{R} \to \mathbb{R}^{p \times p} \) is a time-dependent transformation function mapping time \( t \) to an unknown transformation matrix, and \( \mathbf{b}_{\mathcal{O}}: \mathbb{R} \to \mathbb{R}^p \) is a time-dependent translation function. To ensure smooth changes in the environment, we assume that both \( A_{\mathcal{O}} \) and \( \mathbf{b}_{\mathcal{O}} \) are continuously differentiable, preventing obstacles from experiencing sudden jumps, expansions, or contractions. Additionally, obstacles are assumed to have finite thickness, bounded velocities and accelerations, and smooth boundaries to ensure reliable detection.

At time $t$, the boundary of the obstacle space \( \partial \mathcal{O}(t) \) is estimated from the noisy sensor measurements \( \bfeta \). The free space, where the robot can operate safely, is the open set \( \mathcal{F}(t) = \mathbb{R}^p \setminus \mathcal{O}(t) \).

\begin{problem*}
\label{problem:navigation}
Consider a mobile robot with dynamics given by \eqref{eq: dynamic}, equipped with a range sensor, operating in an unknown dynamic environment. Design a control policy that drives the robot safely and efficiently to a desired goal position $\mathbf{q}_G \in \mathbb{R}^p$. The policy must ensure that the robot's body remains within the free space, i.e., $\calB(\bfx(t)) \subset \calF(t)$ for all $t \geq 0$, while accounting for uncertainties in sensing and state estimation. 
\end{problem*}

\section{System Overview}
\label{sec: overview}

\begin{figure}[t]
  \centering
  \includegraphics[width=0.98\linewidth]{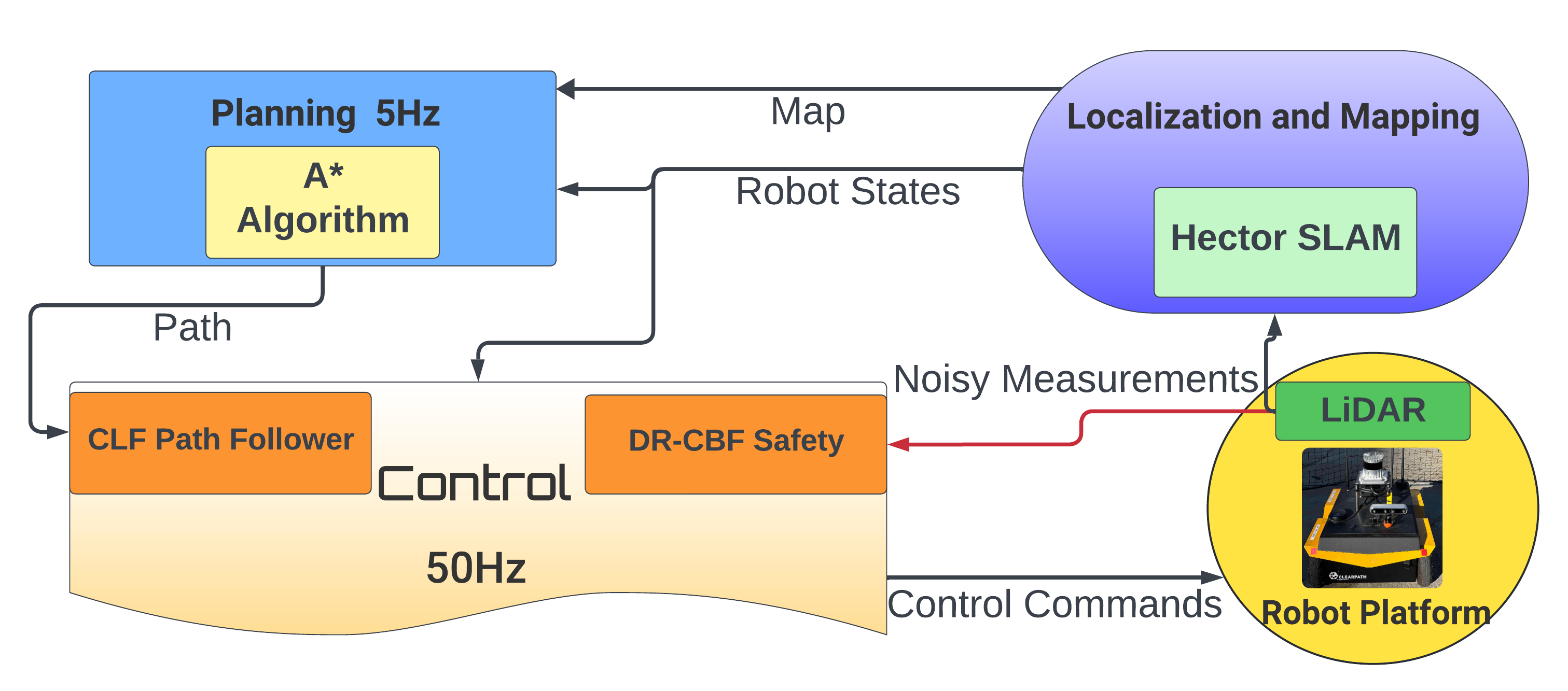}\\
  \caption{Overview of our approach for safe robot navigation in unknown dynamic environments. The system consists of three main components: (1) localization and mapping, (2) path planning, and (3) control. The contribution of this work lies in the control component, where a novel distributionally robust control barrier function is used to ensure safety in real-time, directly utilizing sensor data, and a control Lyapunov function is used to navigate cluttered and dynamic environments.}\label{fig: system_overview}
\end{figure}

This section describes the methods we use for localization and mapping, path planning, and control to enable safe robot navigation, Fig.~\ref{fig: system_overview} presents an overview of the robot autonomy components. Our contribution is the control method, which includes a distributionally robust time-varying control barrier function to guarantee safety in dynamic environments, presented in Sec.~\ref{sec: dro_cbf_navigation}, and a control Lyapunov function for stable path-following control introduced in Sec.~\ref{sec: clf_based_path_following}. 


\textbf{Localization and Mapping.} The robot is equipped with a LiDAR scanner and uses the Hector SLAM algorithm \citep{hector_slam_2011} to estimate its pose and build an occupancy map of the environment. We emphasize that the efficiency and accuracy of the constructed occupancy map may not be sufficient to ensure safe navigation in a dynamic environment. We use the map for high-level path planning and employ a low-level controller to guarantee safe tracking using CLF and CBF techniques.

\textbf{Path Planning.} We use the $A^*$ planning algorithm \citep{A_star_planning} to generate a path $\gamma: [0,1] \mapsto \mathbb{R}^p$ from the robot's current position $\phi(\bfx) \in \mathbb{R}^p$ to the goal $\bfq_G \in \mathbb{R}^p$. The path $\gamma(s)$ is parametrized by a scalar $s \in [0,1]$ such that $\gamma(0) = \phi(\bfx)$ and $\gamma(1) = \bfq_G$. As the robot navigates through the environment, the map is continuously updated by the SLAM algorithm and the path is continuously replanned to adapt to changes in the map.


\textbf{Control.} Our control approach simultaneously guarantees the robot's tracking of the planned path $\gamma$ and its safety from collisions in the dynamically changing environment. In Sec.~\ref{sec: dro_cbf_navigation}, we develop a distributionally robust CBF that uses noisy distance measurements and estimated robot states to directly to guarantee safety with respect to dynamic obstacles. In Sec.~\ref{sec: clf_based_path_following}, we develop a CLF to track the path $\gamma$ with stability guarantees.

\section{Distributionally Robust Safe Control}
\label{sec: dro_cbf_navigation}


In this section, we present a distributionally robust control barrier function (DR-CBF) formulation that enables real-time safety guarantees in cluttered and dynamic environments by utilizing sensor data directly. This formulation is applicable to general control-affine systems \eqref{eq: dynamic}, as introduced in Sec.~\ref{sec:problem}.


We consider a CBF \(h(\mathbf{x}, t)\) with super zero-level set $
\mathcal{C}(t) = \{\mathbf{x} \in \mathcal{X} \mid h(\mathbf{x}, t) \geq 0 \}$ that satisfies 
\begin{equation*}
\mathcal{C}(t) \subseteq \{\mathbf{x} \in \mathcal{X} \mid \mathcal{B}(\mathbf{x}(t)) \subseteq \mathcal{F}(t)\},
\end{equation*} 
where \(\mathcal{B}(\mathbf{x}(t))\) represents the robot’s body at state \(\mathbf{x}(t)\), and \(\mathcal{F}(t)\) is the free space at time \(t\). This establishes a connection between the CBF \(h\) and the environment geometry. To develop the DR-CBF formulation, we make the following assumption on the unknown CBF.


\begin{Assumption}
\label{ass: cbf_properties}
The CBF $h$ has a uniform relative degree of 1 with respect to the system dynamics \eqref{eq: dynamic}, i.e., the time derivative of $h(\bfx, t)$ along \eqref{eq: dynamic} depends explicitly on the control input $\bfu$. 
\end{Assumption}

Under Assumption \ref{ass: cbf_properties}, we can write the control barrier constraint associated with $h(\bfx, t)$ as:
\begin{align}
\label{eq: cbc_rewrite_w}
&\text{CBC}(\bfx,\bfu, t) = \\
&[\nabla_{\bfx} h(\bfx, t)]^\top \bfF(\bfx)\ubfu + \frac{\partial h(\bfx,t)}{\partial t} + \alpha_h(h(\bfx, t)) = \notag \\
&\underbrace{[\nabla_{\bfx} h(\bfx, t)^\top \bfF(\bfx), \; \; \alpha_h(h(\bfx, t)), \; \; \frac{\partial h(\bfx,t)}{\partial t}]}_{\bfxi^\top(\mathbf{x}, t)} \begin{bmatrix} \ubfu\\1\\1
\end{bmatrix} \geq 0, \notag 
\end{align}
where we define an uncertainty vector $\bfxi(\mathbf{x}, t) \in \mathbb{R}^{m+3}$, containing the elements of the time derivative of the barrier function, for each $(\bfx, t) \in \calX \times \bbR$.
%
Since the environment is unknown and both the sensor measurements and the robot state estimates are noisy, $\bfxi$ cannot be determined exactly.

%
%

Our formulation addresses uncertainties arising from both the barrier function $h$ and the state estimations $\mathbf{x}$. Unlike existing robust or probabilistic CBF approaches, which often focus on uncertainties in system dynamics \citep{clark_2019_acc_robust_cbf,Long2022RAL,dhiman_2023_tac_probabilistic, Wang_2023_acc_dob, breeden2023robust, das2024robust}, few works tackle the challenges posed by state estimation errors. This stems from the nonlinearity of the dynamics model $\bfF$ and the barrier function $h$, which makes it difficult to propagate state estimation errors in the control barrier constraint (CBC) in \eqref{eq: cbc_rewrite_w}.

Our formulation addresses this challenge by leveraging the power of distributionally robust optimization. Instead of explicitly propagating state estimation errors through the system dynamics and barrier functions, we assume access to $M$ state samples $\{\mathbf{x}_j\}_{j=1}^M$ from a state estimation algorithm. These samples can be obtained, for example, from the Gaussian distributions provided by a Kalman filter \citep{kalman1960new}, particles generated by a particle filter \citep{djuric2003particle}, or a graph-based localization algorithm \citep{grisetti2010tutorial}. The state samples, combined with estimates of $h$ (e.g., obtained directly from the distance measurements $\bfeta$), are directly used to construct samples of the uncertainty vector $\{\boldsymbol{\xi}_i\}_{i=1}^N$, as detailed in Sec.~\ref{sec: unicycle_dr_cbf_samples}. By managing uncertainties in $\bfxi$ using distributionally robust optimization, we ensure the satisfaction of constraint \eqref{eq: cbc_rewrite_w} without requiring explicit error propagation.

Before introducing the control synthesis formulation, we review the preliminaries of chance constraints and distributionally robust optimization.


\subsection{Chance Constraints and Distributionally Robust Optimization}
\label{sec: prelim_cvar_ccp}

Consider a random vector $\boldsymbol{\xi}$ with (unknown) distribution $\mathbb{P}^*$ supported on the set $\Xi \subseteq \bbR^k$. Let $G :\bbR^m \times \Xi \to \bbR$ define an inequality constraint $G(\bfu,\boldsymbol{\xi}) \leq 0$ (e.g., the CBC in \eqref{eq: cbc_rewrite_w}). Consider, then, the chance-constrained program,
\begin{equation}
\begin{aligned}
\label{eq: ccp}
    &\min_{\bfu \in \bbR^m} c(\bfu),  \\
    \mathrm{s.t.} \, \, & \mathbb{P}^*(G(\bfu, \bfxi) \leq 0) \geq 1 - \epsilon, 
\end{aligned}
\end{equation}
where $c: \bbR^m \mapsto \bbR$ is a convex objective function (e.g., the objective function in~\eqref{eq: clf_cbf_qp}) and $\epsilon \in (0,1)$ denotes a user-specified risk tolerance. Generally, the chance constraint in~\eqref{eq: ccp} leads to a non-convex feasible set. To address this, \citet{Nemirovski2006ConvexAO} propose a convex conditional value-at-risk (CVaR) approximation of the original chance constraint. 

Value-at-risk (VaR) at confidence level $1 - \epsilon$ for $\epsilon \in (0,1)$ is defined as $\text{VaR}_{1-\epsilon}^{\mathbb{P}_q}(Q) := \inf_{s \in \bbR}\{s \; | \; \bbP_q(Q \leq s) \geq 1 - \epsilon\}$ for a random variable $Q$ with distribution $\bbP_q$.  As VaR does not provide information about the right tail of the distribution and leads to intractable optimization in general, one can employ CVaR instead, defined as $\text{CVaR}_{1-\epsilon}^{\mathbb{P}_q}(Q) = \bbE_{\mathbb{P}_q} [ Q \; | \; Q \geq \text{VaR}_{1-\epsilon}^{\mathbb{P}_q}(Q)] $. The resulting constraint
\begin{align}\label{eq: cvar_ccp}
\text{CVaR}_{1-\epsilon}^{\mathbb{P}^*}(G(\bfu,\boldsymbol{\xi}))\leq0
\end{align}
creates a convex feasible set, which is a subset of the feasible set in the original chance-constrained problem~\eqref{eq: ccp}. Additionally, CVaR can be written as the following convex program~\citep{Rockafellar00optimizationof}:
\begin{equation}
\label{eq: cvar_opti_def}   
    \text{CVaR}_{1-\epsilon}^{\mathbb{P}^*}(G(\bfu,\bfxi)) := \inf_{s \in \mathbb{R}}[\epsilon^{-1}\mathbb{E}_{\mathbb{P}^*}[(G(\bfu,\bfxi)+s)_+]-s].
\end{equation}

The formulations in \eqref{eq: ccp} and \eqref{eq: cvar_ccp} require knowledge of $\bbP^*$ to be utilized. However, in many robotics applications, usually only samples of the uncertainty $\bfxi$ are available (e.g., obtained from LiDAR distance measurements). This motivates us to consider distributionally robust formulations~\citep{Esfahani2018DatadrivenDR, Xie2021OnDR}. 

Assuming finitely many samples $\{\bfxi_i\}_{i \in [N]}$ from the true distribution of $\bbP^*$ are available, we first describe a way of constructing an ambiguity set of distributions that agree with the empirical distribution. Let $\calP_p(\Xi) \subseteq \calP(\Xi)$ be the set of Borel probability measures with finite $p$-th moment with $p \geq 1$. The $p$-Wasserstein distance between two probability measures $\mu$, $\nu$ in $\calP_p(\Xi)$ is defined as:
%
\begin{equation}
\label{eq: wasserstein_def}
    W_{p}(\mu,\nu) := \left(\inf_{\beta \in \bbQ(\mu,\nu)} \left[ \int_{\Xi \times \Xi} \eta(\boldsymbol{\xi},\boldsymbol{\xi}')^p \text{d}\beta(\boldsymbol{\xi},\boldsymbol{\xi}') 
    \right] \right)^{\frac{1}{p}}, 
\end{equation}
where $\bbQ(\mu,\nu)$ denotes the collection of all measures on $\Xi \times \Xi$ with marginals $\mu$  and $\nu$ on the first and second factors, and $\eta$ denotes the metric in the space $\Xi$. Throughout the paper, we take $\eta(\boldsymbol{\xi},\boldsymbol{\xi}') = \|\boldsymbol{\xi} - \boldsymbol{\xi}'\|_1$ and consider the ambiguity set corresponding to the $1$-Wasserstein distance. We denote by $\mathbb{P}_N :=  \frac{1}{N}\sum_{i=1}^N \delta_{\bfxi_i}$ the discrete empirical distribution of the available samples $\{\bfxi_i\}_{i \in [N]}$, and define an ambiguity set, $\calM_{N}^{r} := \{\mu \in \calP_p(\Xi) \; | \; W_p(\mu,\mathbb{P}_{N} ) \leq r\}$, as a ball of distributions with radius $r$ centered at $\mathbb{P}_N$.

\begin{remark}\longthmtitle{Choice of Wasserstein ball radius}\label{rem:choice-r}
{\rm
There is a connection between the sample size $N$ and the Wasserstein radius $r$ for constructing the ambiguity set $\calM_N^r$. A distribution $\mathbb{P}$ is light-tailed if there exists an exponent $\rho$ such that $A := \mathbb{E}_{\mathbb{P}}[\exp{\norm{\boldsymbol{\xi}}^\rho}]=\int_{\Xi}\exp{\norm{\boldsymbol{\xi}}^\rho}\mathbb{P}(d\boldsymbol{\xi})<\infty$.
If the true distribution $\mathbb{P}^*$ is light-tailed, the choice of $r=r_N(\bar{\epsilon}) $ given in~\citet[Theorem 3.5]{Esfahani2018DatadrivenDR} is
    \begin{align}
    \label{eq: wasserstein_r_guarantee}
        r_N(\bar{\epsilon}) = \begin{cases}
            (\frac{\log(c_1\bar{\epsilon}^{-1})}{c_2 N})^{\frac{1}{\max\{k,2\}}} \quad &\text{if} \ N\geq \frac{\log(c_1\bar{\epsilon}^{-1})}{c_2}, \\
            (\frac{\log(c_1\bar{\epsilon}^{-1})}{c_2 N})^{\frac{1}{\rho}} \quad &\text{else},
        \end{cases}
    \end{align}
    where $c_1, c_2$ are 
    positive constants that depend on $\rho, A$ and $k$,
    ensures that the ambiguity ball $\calM_N^{r_{N}(\bar{\epsilon})}$ contains $\mathbb{P}^*$ with probability at least $1-\bar{\epsilon}$. 
    } \demo
\end{remark}

\begin{figure}[t]
  \centering
  \includegraphics[width=0.9\linewidth]{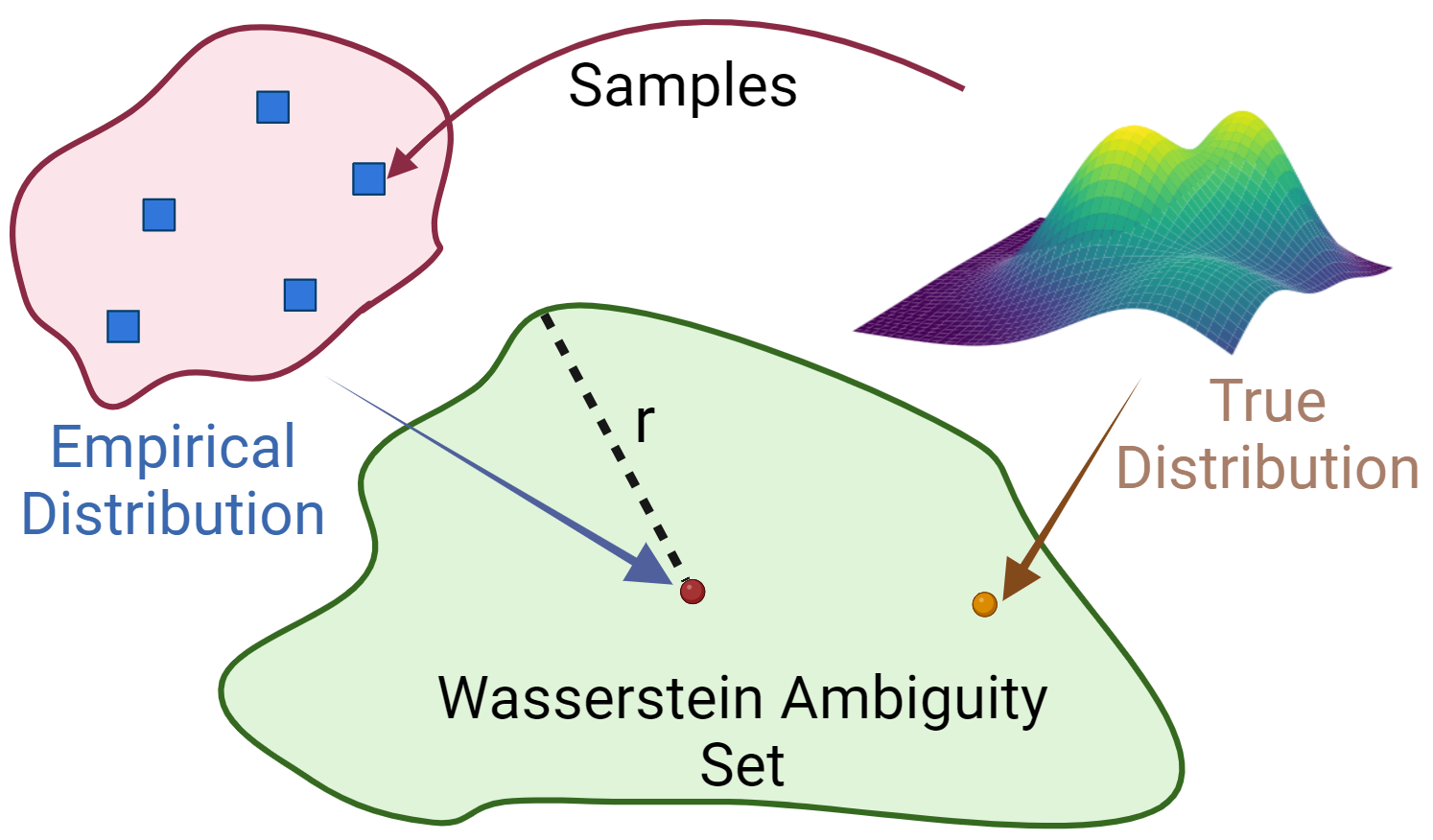}\\
  \caption{Wasserstein ambiguity set illustration. The figure shows the relationship between the samples, empirical distribution, true distribution, and the Wasserstein ambiguity set. The blue squares represent the available samples from the true distribution (yellow dot), which form the empirical distribution (red dot). The Wasserstein ambiguity set (green region) is constructed as a ball of distributions centered at the empirical distribution, with a radius $r$ that depends on the sample size and the desired confidence level. The ambiguity set aims to contain the true distribution with high probability.}
  \label{fig: dro_illustrate}
  \vspace{-2ex}
\end{figure}

Fig.~\ref{fig: dro_illustrate} provides an illustration of the Wasserstein ambiguity set and its relation to the samples, the empirical distribution, and the true distribution. 

\subsection{Distributionally Robust Safety Constraint}\label{sec: dr_safety_formualte}

Consistently with our exposition of DRO in the previous section, we make the following assumption. 

\begin{Assumption}\label{ass: cbf_samples}
At each $(\bfx, t) \in \calX \times \bbR$, $N$ samples of the vector $\bfxi$ in \eqref{eq: cbc_rewrite_w} can be obtained, denoted by $\{\bfxi_i\}_{i \in [N]}$. 
\end{Assumption}

The samples $\{\bfxi_i\}_{i \in [N]}$ can be obtained using sensor measurements and state estimation (we discuss this in detail in Sec.~\ref{sec: unicycle_dr_cbf_samples}). In many robotic systems, sensing and state estimation may operate at lower frequencies than the control loop. For example, LiDAR measurements or SLAM-based localization may provide updates at about $10$ Hz, while the control loop may require computations at $50$ Hz. Our DR-CBF formulation addresses this challenge by incorporating samples of 
$\bfxi$ derived from potentially delayed or uncertain sensor data and state estimations. By accounting for both the asynchrony and uncertainty inherent in sensing and estimation, our approach ensures probabilistic safe performance under realistic conditions.

Inspired by the CLF-CBF QP formulation in~\eqref{eq: clf_cbf_qp}, we consider the following distributionally robust formulation to ensure safety with high probability:
\begin{subequations}
\label{eq: clc_cbc_cvar_approximate}
\begin{align}
    (\bfu(\bfx, t), \delta) &= \argmin_{\bfu \in \bbR^m,\delta \in \bbR} \| \bfu - \bfk(\bfx)\|^2 + \lambda \delta^2, \notag \\
    \text{s.t.} \quad &\text{CLC}(\bfx,\bfu) \leq \delta, \label{eq: clc_constraint} \\
    &\inf_{\mathbb{P} \in \calM_{N}^{r}}\mathbb{P}(\text{CBC}(\bfx, \bfu, \bfxi)) \geq 0 ) \geq 1 - \epsilon, \label{eq: cvar_cbc_constraint}
\end{align}
\end{subequations}
where $\calM_N^r$ denotes the ambiguity set with radius $r$ around the empirical distribution $\bbP_N$. The explicit time dependency of $\bfu$ on $t$ stems from the random vector $\bfxi(\bfx,t)$ in the CBF constraint. The formulation in \eqref{eq: clc_cbc_cvar_approximate} addresses the inherent uncertainty in the safety constraint without assuming a specific probabilistic model for $\bfxi$. The Wasserstein radius $r$ defines the acceptable deviation of the true distribution of $\bfxi$ from the empirical distribution $\mathbb{P}_N$. 


If a controller $\bfu^*(\bfx, t)$ satisfies \eqref{eq: cvar_cbc_constraint}, the following result ensures that the closed-loop system satisfies a chance constraint under the true distribution.

\begin{lemma}\longthmtitle{Chance-constraint satisfaction under the true distribution}\label{lemma: dr_safety_guarantee}
Assume the distribution $\bbP^*$ of $\bfxi$ is light-tailed and the Wasserstein radius $r_N(\bar{\epsilon})$ is set according to~\eqref{eq: wasserstein_r_guarantee}. If the controller $\bfu^*(\bfx, t)$ satisfies \eqref{eq: cvar_cbc_constraint} with $r=r_N(\bar{\epsilon})$, then
\begin{equation}
\label{eq: dr_safety_guarantee}
\bbP^*(\operatorname{CBC}(\bfx, \bfu^*(\bfx, t), \bfxi)) \geq 0 ) \geq (1-\epsilon)(1 - \bar{\epsilon}).     
\end{equation}
\end{lemma}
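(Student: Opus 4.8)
The plan is to decompose the event whose probability we want to bound into two parts: the event that the true distribution $\bbP^*$ lies inside the ambiguity ball $\calM_N^{r_N(\bar\epsilon)}$, and, conditioned on that, the event that the CBC constraint holds under $\bbP^*$. First I would invoke Remark~\ref{rem:choice-r}: since $\bbP^*$ is light-tailed and $r_N(\bar\epsilon)$ is chosen as in~\eqref{eq: wasserstein_r_guarantee}, the random ambiguity set $\calM_N^{r_N(\bar\epsilon)}$ (random because it is centered at the sample-based empirical distribution $\bbP_N$) contains $\bbP^*$ with probability at least $1-\bar\epsilon$, where this probability is taken over the draw of the $N$ samples $\{\bfxi_i\}_{i\in[N]}$.

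Next I would exploit the distributionally robust guarantee~\eqref{eq: cvar_cbc_constraint} itself. On the event $\{\bbP^*\in\calM_N^{r_N(\bar\epsilon)}\}$, the infimum over $\bbP\in\calM_N^{r_N(\bar\epsilon)}$ in~\eqref{eq: cvar_cbc_constraint} is a lower bound for the value at $\bbP=\bbP^*$, so the controller $\bfu^*(\bfx,t)$ satisfying~\eqref{eq: cvar_cbc_constraint} forces $\bbP^*(\operatorname{CBC}(\bfx,\bfu^*(\bfx,t),\bfxi)\geq 0)\geq 1-\epsilon$. In other words, conditioned on the sample-dependent event that the ambiguity ball captures the truth, the desired chance constraint holds under $\bbP^*$ with the nominal confidence $1-\epsilon$.

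Finally I would combine the two pieces. Writing $\calE_1$ for the event $\{\bbP^*\in\calM_N^{r_N(\bar\epsilon)}\}$ over sample draws, and noting that on $\calE_1$ the conditional probability (under $\bbP^*$) of $\{\operatorname{CBC}\geq 0\}$ is at least $1-\epsilon$, the overall probability of safety is at least $\bbP(\calE_1)\cdot(1-\epsilon)\geq(1-\bar\epsilon)(1-\epsilon)$, which is exactly~\eqref{eq: dr_safety_guarantee}. The step that deserves the most care is bookkeeping the two distinct sources of randomness — the outer randomness over the $N$ samples that determines whether $\calM_N^{r_N(\bar\epsilon)}\ni\bbP^*$, and the inner randomness of $\bfxi\sim\bbP^*$ appearing in the CBC event — and making sure the product bound is applied to genuinely independent (or at least appropriately conditioned) events; this is where one must be precise about what "$\bbP^*$" means on the left-hand side of~\eqref{eq: dr_safety_guarantee} versus the sample-level probability invoked from Remark~\ref{rem:choice-r}. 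The rest is a short chain of inequalities with no heavy computation.
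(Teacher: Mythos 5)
Your proposal is correct and follows essentially the same route as the paper's proof: bound the probability that the ambiguity ball captures $\bbP^*$ (via the choice of $r_N(\bar\epsilon)$ from \citet{Esfahani2018DatadrivenDR}), note that on this event the infimum in \eqref{eq: cvar_cbc_constraint} lower-bounds the probability under $\bbP^*$, and multiply the two bounds. Your explicit care about the two sources of randomness (sample draw versus the draw of $\bfxi$) is, if anything, slightly more precise bookkeeping than the paper's condensed chain $\bbP^*(B)\geq\bbP^*(B\mid A)\bbP^*(A)$, but the argument is the same.
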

\begin{proof}
Consider the events $A \!:=\! \{\bbP^* \in \calM_N^{r_N(\bar{\epsilon})}\}$ and $B \!:=\! \{ \text{CBC}(\bfx, \bfu^*(\bfx,t), \bfxi)) \geq 0 \}$.
From \citet[Theorem 3.4]{Esfahani2018DatadrivenDR}, we have $\bbP^*(A) \geq 1- \bar{\epsilon}$. From \eqref{eq: cvar_cbc_constraint}, we have that
\begin{equation}
\label{eq: dr_condition_x}
\inf_{\mathbb{P}\in \calM_{N}^{r_N(\bar{\epsilon})}}\bbP(B) \geq 1-\epsilon.
\end{equation}
Now, consider the probability of the event $B$ under the true distribution $\bbP^*$:
\begin{align}
\bbP^*(B) &\geq \bbP^*(B \cap A) = \bbP^*(B | A)\bbP^*(A) \\
&\geq \left(\inf_{\bbP \in \calM_{N}^{r_N(\bar{\epsilon})}}\bbP(B)\right)\bbP^*(A) \geq (1-\epsilon)(1-\bar{\epsilon}). \qed \notag
\end{align}             
\end{proof}

Our previous work \citep{long2024distributionally_policy} presents a similar result but for a CLF in the context of stabilization. According to Lemma~\ref{lemma: dr_safety_guarantee}, the safety of the closed-loop system is guaranteed with high probability. However, the optimization problem in \eqref{eq: clc_cbc_cvar_approximate} is intractable \citep{Hota2019DataDrivenCC, Esfahani2018DatadrivenDR} due to the infimum over the Wasserstein ambiguity set. In Sec.~\ref{sec: dr_cbf_formulate}, we discuss our approach to identify tractable reformulations of \eqref{eq: clc_cbc_cvar_approximate} and facilitate online safe control synthesis.

\subsection{Tractable Convex Reformulation}
\label{sec: dr_cbf_formulate}

Next, we demonstrate how the samples $\{\bfxi_i\}_{i \in [N]}$ from Assumption \ref{ass: cbf_samples} can be used to obtain a tractable reformulation of \eqref{eq: clc_cbc_cvar_approximate}.


\begin{proposition}\longthmtitle{Distributionally robust safe control synthesis}
\label{proposition: dro_cbf}
Given samples $\{\bfxi_i\}_{i \in [N]}$ of $\bfxi$ as in \eqref{eq: cbc_rewrite_w}, if $(\bfu^*, \delta^*, s^*, \{\beta_i^*\}_{i \in [N]})$ is a solution to the quadratic program:
%
%
\begin{align}\label{eq: clc_cbc_dr_result}
    &\min_{\bfu \in \bbR^m,\delta \in \bbR, s \in \bbR, \beta_i \in \bbR} \| \bfu - \bfk(\bfx)\|^2 + \lambda \delta^2,  \\
    \mathrm{s.t.} \, \,  &\operatorname{CLC}(\bfx,\bfu) \leq \delta, \notag \\ 
    & r \|\ubfu\|_{\infty}  \leq s\epsilon - \frac{1}{N} \sum_{i=1}^{N} \beta_i, \notag \\
    & \beta_i \! \geq \! s \!-\! [\ubfu \; \; 1 \; \; 1]^\top \bfxi_i, \;  \beta_i \!\geq \!0, \;  \forall i \in [N] ,\notag
\end{align}
then $(\bfu^*, \delta^*)$ is also a solution to the distributionally robust chance-constrained program in \eqref{eq: clc_cbc_cvar_approximate}.
\end{proposition}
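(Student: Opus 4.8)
The plan is to pass from the intractable constraint~\eqref{eq: cvar_cbc_constraint} to the quadratic program~\eqref{eq: clc_cbc_dr_result} in three reductions: (i) replace the distributionally robust chance constraint by a distributionally robust $\operatorname{CVaR}$ constraint; (ii) use the variational form of $\operatorname{CVaR}$ together with Wasserstein strong duality to eliminate the infimum over $\calM_N^r$; and (iii) linearize the positive parts with epigraph variables. Throughout, abbreviate $\bfv := [\ubfu;\,1;\,1] \in \bbR^{m+3}$, so that by~\eqref{eq: cbc_rewrite_w} we have $\operatorname{CBC}(\bfx,\bfu,\bfxi) = \bfv^\top\bfxi$, and set $G(\bfu,\bfxi) := -\bfv^\top\bfxi$, so that the event $\{\operatorname{CBC}(\bfx,\bfu,\bfxi)\ge 0\}$ equals $\{G(\bfu,\bfxi)\le 0\}$.

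By the $\operatorname{CVaR}$/chance-constraint relation recalled around~\eqref{eq: cvar_ccp} (the $\operatorname{CVaR}$ feasible set is contained in the chance-constraint feasible set), for every fixed $\bbP$ the inequality $\operatorname{CVaR}_{1-\epsilon}^{\bbP}(G(\bfu,\bfxi)) \le 0$ implies $\bbP(G(\bfu,\bfxi)\le 0)\ge 1-\epsilon$; taking $\inf_{\bbP\in\calM_N^r}$ on both sides shows that $\sup_{\bbP\in\calM_N^r}\operatorname{CVaR}_{1-\epsilon}^{\bbP}(G(\bfu,\bfxi)) \le 0$ is sufficient for~\eqref{eq: cvar_cbc_constraint}. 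By the variational identity~\eqref{eq: cvar_opti_def}, $\operatorname{CVaR}_{1-\epsilon}^{\bbP}(G) = \inf_{s\in\bbR}[\epsilon^{-1}\bbE_{\bbP}[(s-\bfv^\top\bfxi)_+]-s]$; since $\sup_{\bbP}\inf_s \le \inf_s\sup_{\bbP}$, this distributionally robust $\operatorname{CVaR}$ constraint holds whenever there is an $s$ with $\sup_{\bbP\in\calM_N^r}[\epsilon^{-1}\bbE_{\bbP}[(s-\bfv^\top\bfxi)_+]-s]\le 0$, and we promote $s$ to a decision variable.

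It then remains to evaluate the worst-case expectation $\sup_{\bbP\in\calM_N^r}\bbE_{\bbP}[(s-\bfv^\top\bfxi)_+]$. The loss $\bfxi\mapsto(s-\bfv^\top\bfxi)_+$ is the pointwise maximum of the affine maps $0$ and $s-\bfv^\top\bfxi$, hence Lipschitz with respect to $\|\cdot\|_1$ with modulus $\|\bfv\|_\infty$. Applying Wasserstein strong duality for worst-case expectations of piecewise-affine losses over the support $\Xi$, as in~\citet{Esfahani2018DatadrivenDR} and paralleling the CLF analysis in~\citet{long2024distributionally_policy}, one finds that the inner supremum $\sup_{\bfzeta\in\Xi}[(s-\bfv^\top\bfzeta)_+ - \lambda\|\bfzeta-\bfxi_i\|_1]$ equals $(s-\bfv^\top\bfxi_i)_+$ for $\lambda\ge\|\bfv\|_\infty$ and is $+\infty$ otherwise, so the remaining outer objective $\lambda r + N^{-1}\sum_{i=1}^N(s-\bfv^\top\bfxi_i)_+$, being nondecreasing in $\lambda$, is minimized over $\lambda\ge 0$ at $\lambda=\|\bfv\|_\infty$. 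Since $\ubfu=[1;\bfu]$ yields $\|\bfv\|_\infty=\max\{\|\ubfu\|_\infty,1\}=\|\ubfu\|_\infty$, the constraint becomes $r\|\ubfu\|_\infty + N^{-1}\sum_{i=1}^N(s-[\ubfu\;1\;1]^\top\bfxi_i)_+ \le s\epsilon$. Introducing $\beta_i\ge 0$ and $\beta_i\ge s-[\ubfu\;1\;1]^\top\bfxi_i$ (hence $\beta_i\ge(s-[\ubfu\;1\;1]^\top\bfxi_i)_+$), this holds for some $\{\beta_i\}$ iff $r\|\ubfu\|_\infty\le s\epsilon - N^{-1}\sum_{i=1}^N\beta_i$, with $\beta_i=(s-[\ubfu\;1\;1]^\top\bfxi_i)_+$ recovering equality; the $\ell_\infty$ term is polyhedral, and the objective and the $\operatorname{CLC}$ constraint~\eqref{eq: clc_constraint} are untouched, so~\eqref{eq: clc_cbc_dr_result} follows and its feasible set, projected onto $(\bfu,\delta)$, coincides with that of the distributionally robust $\operatorname{CVaR}$ restriction of~\eqref{eq: clc_cbc_cvar_approximate}. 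Because the objective does not involve $(s,\{\beta_i\})$, any minimizer $(\bfu^*,\delta^*,s^*,\{\beta_i^*\})$ of~\eqref{eq: clc_cbc_dr_result} gives $(\bfu^*,\delta^*)$ feasible for---and optimal within the $\operatorname{CVaR}$-certified region of---~\eqref{eq: clc_cbc_cvar_approximate}.

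The main obstacle is the Wasserstein-duality step: carefully splitting $\sup_{\bfzeta\in\Xi}[(s-\bfv^\top\bfzeta)_+ - \lambda\|\bfzeta-\bfxi_i\|_1]$ on its two affine branches, using that $\|\cdot\|_1$ has dual norm $\|\cdot\|_\infty$, and treating the (unbounded) support so that the worst-case expectation collapses to the closed form $r\|\ubfu\|_\infty + N^{-1}\sum_{i=1}^N(s-[\ubfu\;1\;1]^\top\bfxi_i)_+$ with the dual variable pinned at $\lambda=\|\bfv\|_\infty$. A secondary point is that, although the decision variable $\bfu$ enters inside the norm, the reformulated constraint $r\|\ubfu\|_\infty\le s\epsilon - N^{-1}\sum_i\beta_i$ is polyhedral (introduce an auxiliary scalar bounding $\|\ubfu\|_\infty$ from above), so that~\eqref{eq: clc_cbc_dr_result} is genuinely a convex quadratic program.
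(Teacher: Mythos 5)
Your proposal is correct and follows essentially the same route as the paper's proof: chance constraint $\to$ distributionally robust CVaR restriction $\to$ variational form of CVaR with the Wasserstein worst-case expectation bounded via the $\ell_1$/$\ell_\infty$ Lipschitz constant $\|\ubfu\|_\infty$ of the affine CBC $\to$ epigraph variables $\beta_i$ yielding the QP. The only difference is that you unpack the steps the paper delegates to citations (the sufficient condition from \citet{Hota2019DataDrivenCC} and \citet{Esfahani2018DatadrivenDR}, and the bi-level-to-QP conversion from \citet{Long2023_acc_drccp}) by sketching the duality and epigraph arguments explicitly, which is consistent with the paper's conservative-approximation chain.
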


\begin{proof}
The safety constraint~\eqref{eq: cvar_cbc_constraint} is equivalent to
$\sup_{\bbP \in \calM_{N}^{r}}\mathbb{P}(-\text{CBC}(\bfx, \bfu, \bfxi) \geq 0) \leq \epsilon$. Using the CVaR approximation of the chance constraint~\eqref{eq: cvar_ccp}, we obtain a convex conservative approximation of~\eqref{eq: cvar_cbc_constraint}: 
\begin{equation}
    \sup_{\bbP \in \calM_{N}^{r}} \text{CVaR}_{1-\epsilon}^{\mathbb{P}}(-\text{CBC}(\bfx, \bfu, \bfxi)) \leq 0. 
\end{equation}
From~\eqref{eq: cvar_opti_def}, this is equivalent to 
\begin{equation}
\label{eq: sup_inf_drcc_cbc}
\sup_{\mathbb{P}\in \calM_{N}^{r}}\inf_{s \in \mathbb{R}}[\frac{1}{\epsilon}\mathbb{E}_{\bbP}[(-\text{CBC}(\bfx,\bfu,\bfxi)+s)_{+}] -s]  \leq  0. 
\end{equation}
Based on \citet[Lemma V.8]{Hota2019DataDrivenCC} and \citet[Theorem 6.3]{Esfahani2018DatadrivenDR}, with the 1-Wasserstein distance, the following inequality is a sufficient condition for~\eqref{eq: sup_inf_drcc_cbc} to hold:
\begin{equation}
\label{eq: sample_average_regular}
r L(\bfu) \!+\! \inf_{s \in \mathbb{R}}\left[\frac{1}{N} \sum_{i=1}^{N} (-\text{CBC}(\bfx,\bfu,\bfxi_i)+s)_{+} -s\epsilon \right] \! \leq \! 0
\end{equation}
where $L(\bfu)$ is the Lipschitz constant of $-\text{CBC}(\bfx,\bfu,\bfxi)$ in $\bfxi$.
Now, from~\eqref{eq: cbc_rewrite_w}, we have 
$\text{CBC}(\bfx,\bfu, \bfxi) = [\ubfu \; \;  1 \; \; 1]^\top \bfxi$. Therefore, we can define the convex function $L: \bbR^m \mapsto \mathbb{R}_{>0}$ by
\begin{equation}
\label{eq: L_h_define}
    L(\bfu) = \|[\ubfu \; \;1 \; \; 1]^\top\|_{\infty} = \max \{1, \|\ubfu\|_{\infty}\} = \|\ubfu\|_{\infty}
\end{equation}
The function $\bfxi \mapsto -\text{CBC} (\bfx,\bfu,\bfxi)$ is Lipschitz in $\bfxi$ with constant $L(\bfu)$. This is because the Lipschitz constant of a differentiable affine function equals the dual norm of its gradient, and the dual norm of the $L_1$ norm is the $L_{\infty}$ norm. Thus, 
the following is a conservative approximation of~\eqref{eq: clc_cbc_cvar_approximate},
\begin{align}\label{eq: dr_cbc_Convex_Reformulation}
&\min_{\bfu \in \bbR^m,\delta \in \bbR} \| \bfu - \bfk(\bfx)\|^2 + \lambda \delta^2,\\
\mathrm{s.t.} \, \, 
&\text{CLC}(\bfx,\bfu) \leq \delta, \notag \\ 
r L(&\bfu) \!+\! \inf_{s \in \mathbb{R}}\left[\frac{1}{N} \sum_{i=1}^{N} (-\text{CBC}(\bfx,\bfu,\bfxi_i) + s)_+ - s \epsilon\right] \!\leq \!0. \notag  
\end{align}
Lastly, as shown in \citet[Proposition~IV.1]{Long2023_acc_drccp}, the bi-level optimization in~\eqref{eq: dr_cbc_Convex_Reformulation} can be rewritten as~\eqref{eq: clc_cbc_dr_result}.
\qed
\end{proof}

Formally, the safe set $\calC(t)$ is defined by a single CBF $h(\bfx,t)$, but its value, gradient, and time derivative (defining $\bfxi(\bfx, t)$) are not directly known in practice. By leveraging observations of $\bfxi$ derived from sensor measurements and state estimates, and applying Proposition~\ref{proposition: dro_cbf}, we are able to synthesize safe controllers with distributionally robust guarantees. Due to the convex reformulation of \eqref{eq: cvar_cbc_constraint}, our approach inherently introduces some conservatism. However, in practice, this conservatism can be effectively managed by tuning the Wasserstein radius \( r \) and the safety probability \( \epsilon \). These parameters provide flexibility to adapt the formulation based on specific application requirements, sensor capabilities, and state estimation errors. 

The Lipschitz continuity and regularity of distributionally robust controllers are characterized in \citet{PM-KL-NA-JC:23-csl}. Together with Lemma~\ref{lemma: dr_safety_guarantee}, this enables safe robot control with point-wise probabilistic guarantees in unknown dynamic environments.

\begin{remark}\longthmtitle{Uncertainty in System Dynamics}
{\rm 
We have assumed that there is no uncertainty in the system dynamics \( \bfF \) to simplify the presentation in Sec. \ref{sec: dr_cbf_formulate}. However, our approach can be extended if this is not the case as long as samples of \( \bfF(\bfx) \) are available. These samples can be combined with samples of robot state and \( h(\bfx, t) \) to construct the uncertainty vector \( \{\bfxi_i\}_{i=1}^N \) and ensure the validity of Proposition~\ref{proposition: dro_cbf}. } \demo
\end{remark}

\section{Control Lyapunov Function Based Path Following}
\label{sec: clf_based_path_following}

In this section, we introduce a control strategy that accurately tracks the planned path~$\gamma$. This control strategy will serve as the basis for specifying the control Lyapunov constraint (CLC) in our distributionally robust safe control synthesis in~\eqref{eq: clc_cbc_dr_result}.

%
%
We begin by stating the following assumptions:

\begin{Assumption}
\label{ass: compact_state_space}
The state space $\calX \subset \mathbb{R}^n$ is compact. 
\end{Assumption}


\begin{Assumption}
\label{ass: clf_properties}
Given a desired reference point $\bfq \in \bbR^p$, let $\calE(\bfq) = \{\bfx \in \calX : \phi(\bfx) = \bfq\}$. For each $\bfq$, assume that there exists a continuously differentiable function $V : \calX \times \bbR^p \rightarrow \bbR_{\geq 0}$ with the following properties.
\begin{enumerate}
\item The function $V(\bfx)$ is positive definite with respect to the error $\phi(\bfx) - \bfq$, i.e., $V(\bfx) > 0$ for all $\bfx \not \in \calE(\bfq)$ and $V(\bfx) = 0$ if $\bfx \in \calE(\bfq)$.

\item There exists a continuous control law $\hat{\bfu}(\bfx, \bfq)$ such that the time derivative of $V$ along the trajectories of the system \eqref{eq: dynamic} satisfies $\dot{V}(\bfx) < 0$ for all $\bfx \not \in \calE(\bfq)$.


\item For all $\bfx \in \calE(\bfq)$, we have $\bff(\bfx) + \bfg(\bfx)\hat{\bfu}(\bfx, \bfq) = \boldsymbol{0}$.
\end{enumerate}
\end{Assumption}

A specific construction of such a function is provided in Sec.~\ref{sec: CLF_stabilize_pt}.

\subsection{Stabilization to a Goal Position}
\label{sec: stab_to_1_point}

We now establish the asymptotic stability of the goal set $\calE(\bfq)$ for the closed-loop system dynamics~\eqref{eq: dynamic} with control law $\hat{\bfu}(\bfx, \bfq)$.

\begin{lemma}\longthmtitle{Asymptotic stability of the goal set}\label{lemma: closed_loop_convergence}
Under Assumptions \ref{ass: compact_state_space} and \ref{ass: clf_properties}, the goal set $\calE(\bfq) = \{\bfx \in \calX : \phi(\bfx) = \bfq\}$ is asymptotically stable for the closed-loop dynamics \eqref{eq: dynamic} under the control law $\hat{\bfu}(\bfx, \bfq)$.
\end{lemma}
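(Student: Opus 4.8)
The plan is to invoke the standard Lyapunov stability theorem for the set $\calE(\bfq)$, using $V(\cdot, \bfq)$ from Assumption~\ref{ass: clf_properties} as the Lyapunov function and $\hat{\bfu}(\bfx,\bfq)$ as the feedback law. First I would fix the reference point $\bfq$ and consider the closed-loop vector field $\bfx \mapsto \bff(\bfx) + \bfg(\bfx)\hat{\bfu}(\bfx,\bfq)$, which is continuous by the hypotheses on $\bff$, $\bfg$, and $\hat{\bfu}$. Property~3 of Assumption~\ref{ass: clf_properties} guarantees that every point of $\calE(\bfq)$ is an equilibrium of this closed-loop system, so $\calE(\bfq)$ is an invariant set. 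Property~1 gives that $V(\cdot,\bfq)$ is continuous, vanishes exactly on $\calE(\bfq)$, and is strictly positive elsewhere on $\calX$; since $\calX$ is compact (Assumption~\ref{ass: compact_state_space}) and $V$ is continuous, the sublevel sets $\{\bfx \in \calX : V(\bfx,\bfq) \le c\}$ are compact and, for $c$ small, form a neighborhood basis of $\calE(\bfq)$ relative to $\calX$. Property~2 gives $\dot V(\bfx) = \calL_{\bff} V(\bfx) + \calL_{\bfg}V(\bfx)\hat{\bfu}(\bfx,\bfq) < 0$ for all $\bfx \notin \calE(\bfq)$.

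The argument then proceeds in two stages: Lyapunov stability and attractivity. For stability, given any relatively open neighborhood $\calU \supseteq \calE(\bfq)$ in $\calX$, I would pick $c>0$ small enough that the compact sublevel set $\Omega_c := \{\bfx \in \calX : V(\bfx,\bfq)\le c\}$ is contained in $\calU$ (possible by continuity and positive-definiteness of $V$ together with compactness of $\calX$), and then pick a neighborhood of $\calE(\bfq)$ inside $\Omega_c$; since $\dot V \le 0$ along trajectories, $\Omega_c$ is forward invariant, so trajectories starting in it never leave $\calU$. For attractivity, I would take any initial condition $\bfx_0 \in \Omega_c$; the trajectory stays in the compact set $\Omega_c$, so it is defined for all $t\ge 0$, and $V(\bfx(t),\bfq)$ is nonincreasing and bounded below, hence converges to some limit $c^\star \ge 0$. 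A LaSalle-type / Barbalat-type argument shows $c^\star = 0$: the $\omega$-limit set of the trajectory is nonempty, compact, invariant, and contained in $\{\dot V = 0\} \cap \Omega_c = \calE(\bfq)$ by Property~2, and on it $V \equiv c^\star$; since $V$ vanishes only on $\calE(\bfq)$, we get $c^\star = 0$, i.e., $\bfx(t)$ converges to $\calE(\bfq)$. Combining the two stages yields asymptotic stability of $\calE(\bfq)$.

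The main obstacle, and the place requiring the most care, is the attractivity step: Property~2 only asserts $\dot V < 0$ pointwise off $\calE(\bfq)$, not a strict decrease bounded away from zero, so one cannot directly conclude convergence without a compactness/invariance argument. The clean way around this is the LaSalle invariance principle (valid here because trajectories are precompact, living in $\Omega_c \subset \calX$ compact): the largest invariant set contained in $\{\bfx \in \Omega_c : \dot V(\bfx) = 0\}$ is exactly $\calE(\bfq)$, which forces convergence to $\calE(\bfq)$. I would also note that Assumption~\ref{ass: compact_state_space} is what ensures completeness of solutions (no finite-time escape) and compactness of sublevel sets, both of which LaSalle's principle requires; and that the conclusion is stability/attractivity of the \emph{set} $\calE(\bfq)$ rather than of an isolated equilibrium, since $\phi$ need not be injective.
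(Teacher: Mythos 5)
Your proposal is correct and follows essentially the same route as the paper: compactness of $\calX$ (Assumption~\ref{ass: compact_state_space}) together with the properties in Assumption~\ref{ass: clf_properties} and LaSalle's invariance principle, which forces trajectories into the largest invariant set inside $\{\dot V = 0\} \subseteq \calE(\bfq)$. Your write-up is simply more explicit than the paper's, in particular in making the Lyapunov-stability step precise via forward-invariant compact sublevel sets of $V$ before applying LaSalle for attractivity, which the paper's terse proof leaves implicit.
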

%

\begin{proof}
By Assumption \ref{ass: compact_state_space}, $\calX$ is compact. Since $\calE$ is a closed subset of $\calX$, it follows that $\calE(\bfq)$ is also compact. 
In addition, by LaSalle's Invariance Principle~\citep{khalil2002nonlinear}, the conditions in Assumption \ref{ass: clf_properties} imply that the closed-loop system trajectory converges to the largest invariant set contained in $\{\bfx \in \calX : \dot{V}(\bfx) = 0\} \subset \calE(\bfq)$. This implies that $\calE$ is asymptotically attractive too. Therefore, $\calE(\bfq)$ is asymptotically stable for the closed-loop system dynamics under the control law $\hat{\bfu}(\bfx, \bfq)$.
\qed
\end{proof}

Lemma~\ref{lemma: closed_loop_convergence} establishes that, under the control law $\hat{\bfu}(\bfx, \bfq)$ in Assumption~\ref{ass: clf_properties}, the position of the system in \eqref{eq: dynamic} satisfies $\phi(\bfx(t)) \rightarrow \bfq$ as $t \rightarrow \infty$.


\subsection{CLF-Based Path Following}
\label{sec: clf_path_follow}

Building upon the stability result in Lemma \ref{lemma: closed_loop_convergence}, we now extend the position convergence to path following. Our goal is to achieve smooth navigation by dynamically adjusting a moving goal point along the planned path $\gamma$. Inspired by reference governor control techniques \citep{garone2015explicit, li2020fast}, we consider a scalar $g(t) \in [0,1]$ with dynamics:
\begin{equation}
\label{eq: ref_pt_dynamics}
\dot{g} = \frac{k}{1 + \|\phi(\bfx) - \gamma(g)\|} (1 - g^{\zeta}),
\end{equation}
where $k \in \bbR_{>0}$ is a scaling factor, and $\zeta \in \bbN$ ensures that $g$ asymptotically approaches but never exceeds $1$. The dynamics in \eqref{eq: ref_pt_dynamics} are designed such that the reference point $\gamma(g)$ moves along the path $\gamma$ at a speed inversely proportional to the distance between the current robot position $\phi(\bfx)$ and $\gamma(g)$, facilitating a responsive path following behavior. The initial condition for $g$ is set to $g(0) = 0$, corresponding to the starting point of the path $\gamma(0)$.

\begin{lemma}\longthmtitle{Asymptotic stability of the governor dynamics}\label{lemma: governor_convergence}
The equilibrium point $g^* = 1$ is asymptotically stable for the governor dynamics in \eqref{eq: ref_pt_dynamics}.
\end{lemma}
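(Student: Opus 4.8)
The plan is to treat the state-dependent factor $\kappa(\bfx,g) := \tfrac{k}{1+\|\phi(\bfx)-\gamma(g)\|}$ as a bounded, strictly positive gain and then build a strict Lyapunov function for the scalar dynamics. First I would record the uniform bounds: by Assumption~\ref{ass: compact_state_space} the set $\calX$ is compact and $\phi$ and $\gamma$ are continuous, so $\|\phi(\bfx)-\gamma(g)\|$ attains a finite maximum $D$ over $\calX\times[0,1]$; hence $\kappa_{\min} := \tfrac{k}{1+D} \le \kappa(\bfx,g) \le k =: \kappa_{\max}$ for all $(\bfx,g)$. In particular $\dot g$ has the same sign as $1-g^\zeta$, so $\dot g>0$ for $g<1$ and $\dot g<0$ for $g>1$, which makes the interval $[0,1]$ forward invariant and singles out $g^*=1$ as the only equilibrium reachable from the initial condition $g(0)=0$.

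Next I would factor $1-g^\zeta = (1-g)\,q(g)$ with $q(g) := \sum_{j=0}^{\zeta-1} g^j$, noting $q$ is continuous and $q(1)=\zeta>0$; hence there is $\epsilon\in(0,1)$ such that $q(g)\ge \zeta/2$ on the neighborhood $\calN := (1-\epsilon,\,1+\epsilon)$ of $g^*=1$. Taking the candidate $W(g) := \tfrac12 (g-1)^2$, which is positive definite about $g^*=1$, its derivative along~\eqref{eq: ref_pt_dynamics} is
\[
\dot W = (g-1)\dot g = -\,\kappa(\bfx,g)\,(g-1)^2\,q(g) \;\le\; -\,\tfrac{\zeta}{2}\,\kappa_{\min}\,(g-1)^2 \;=\; -\,\zeta\,\kappa_{\min}\,W
\]
for all $g\in\calN\setminus\{1\}$. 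Thus $W$ is a strict Lyapunov function, so $g^*=1$ is locally asymptotically stable (indeed exponentially); combining this Lyapunov decrease with the forward invariance and monotonicity of $\dot g$ observed above shows the region of attraction contains all of $[0,1]$, so in particular the governor initialized at $g(0)=0$ satisfies $g(t)\to 1$.

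The only mild subtlety — not really an obstacle — is that~\eqref{eq: ref_pt_dynamics} is non-autonomous, since the gain depends on the robot trajectory $\bfx(t)$. This is handled cleanly by the uniform bounds $\kappa_{\min}\le\kappa(\bfx,g)\le\kappa_{\max}$, which make the Lyapunov inequality hold along every admissible trajectory irrespective of how $\bfx(t)$ evolves; alternatively one could invoke the comparison lemma against $\dot{\bar g}=\kappa_{\min}(1-\bar g^{\zeta})$. Some care is also needed to state the conclusion on the physically relevant domain $g\in[0,1]$, where $g\mapsto g^\zeta$ is monotone regardless of the parity of $\zeta$, rather than on all of $\bbR$.
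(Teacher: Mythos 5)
Your proof is correct and follows essentially the same route as the paper: both use the quadratic Lyapunov candidate $\tfrac12(1-g)^2$ together with forward invariance of $[0,1]$ and the sign structure of $(1-g)(1-g^{\zeta})$. Your additional step of uniformly bounding the state-dependent gain $k/(1+\|\phi(\bfx)-\gamma(g)\|)$ below via compactness of $\calX$ (and the factorization $1-g^{\zeta}=(1-g)\sum_{j=0}^{\zeta-1}g^{j}$) is a welcome refinement that makes the decrease strict uniformly in the non-autonomous gain, a point the paper's proof leaves implicit.
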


\begin{proof}
Note that $[0,1]$ is forward invariant under \eqref{eq: ref_pt_dynamics}. Consider the candidate Lyapunov function:
\begin{equation}
\label{eq: governor_lyapunov}
V_g(g) = \frac{1}{2}(1 - g)^2.
\end{equation}
Note that $V_g$ is positive definite with respect to $g=1$. Its time derivative along the trajectories of \eqref{eq: ref_pt_dynamics} is
\begin{align*}
\dot{V}_g(g) &= -(1 - g)\dot{g} = -\frac{k(1 - g)}{1 + \|\phi(\bfx) - \gamma(g)\|} (1 - g^{\zeta})
\end{align*}
Since $\dot{V}_g(g) \leq 0$ for all $g \in [0, \infty)$ and $\dot{V}_g(g) = 0$ if and only if $g = 1$, we conclude that $g = 1$ is globally asymptotically stable (over $[0, \infty)$) for the governor dynamics \eqref{eq: ref_pt_dynamics}. \qed
\end{proof}

%
%
From Lemma~\ref{lemma: closed_loop_convergence}, for a goal position $\bfq$, the control-affine system under the control law $\hat{\bfu}(\bfx, \bfq)$ converges to the set of equilibrium points $\calE(\bfq)=\{\bfx \in \calX : \phi(\bfx) = \bfq\}$. In the path-following context, we make $\bfq$ move along the path $\gamma$, resulting in the control law $\hat{\bfu}(\bfx, \gamma(g))$ and the equilibrium set $\calE(\gamma(g))$. The following result formalizes the asymptotic convergence of the interconnected system.

\begin{theorem}\longthmtitle{Asymptotic stability of the interconnected system}\label{theorem: interconnected_convergence}
Consider the interconnected system consisting of the governor dynamics \eqref{eq: ref_pt_dynamics} and the closed-loop dynamics \eqref{eq: dynamic} with the control law $\hat{\bfu}(\bfx, \gamma(g))$. Under Assumptions \ref{ass: compact_state_space} and \ref{ass: clf_properties}, there exists a sufficiently small $k^* > 0$ such that, for all $k \in (0, k^*]$, the equilibrium set
$\calE(\gamma(1)) \times \{1\}$
is asymptotically stable for the interconnected system.
\end{theorem}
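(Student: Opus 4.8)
The plan is to treat this as a singular-perturbation / cascade stability problem: the governor state $g$ evolves slowly (controlled by the small gain $k$) and drives the reference point $\gamma(g)$, while the fast subsystem $\bfx$ tracks the moving equilibrium set $\calE(\gamma(g))$ under $\hat\bfu(\bfx,\gamma(g))$. First I would establish the obvious ingredients already available: by Lemma~\ref{lemma: governor_convergence}, $g=1$ is asymptotically stable for \eqref{eq: ref_pt_dynamics} on $[0,1]$ (and $[0,1]$ is forward invariant, so $g(t)\to 1$ monotonically); and by Lemma~\ref{lemma: closed_loop_convergence}, for each \emph{frozen} reference point $\bfq$ the set $\calE(\bfq)$ is asymptotically stable for the $\bfx$-subsystem with a Lyapunov function $V(\bfx,\bfq)$ supplied by Assumption~\ref{ass: clf_properties}. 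The target equilibrium set is $\calE(\gamma(1))\times\{1\}$, consistent with both limits.

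\textbf{Main argument.} I would use a composite Lyapunov function of the form $W(\bfx,g) = V(\bfx,\gamma(g)) + c\,V_g(g)$ for a suitable constant $c>0$, with $V_g(g)=\tfrac12(1-g)^2$ as in \eqref{eq: governor_lyapunov}. Differentiating along the interconnected dynamics gives
\begin{align*}
\dot W &= \frac{\partial V}{\partial \bfx}^\top\bigl(\bff(\bfx)+\bfg(\bfx)\hat\bfu(\bfx,\gamma(g))\bigr) + \frac{\partial V}{\partial \bfq}^\top \frac{d\gamma}{dg}\,\dot g + c\,\dot V_g(g).
\end{align*}
The first term is $\dot V|_{\bfq\text{ frozen}}<0$ away from $\calE(\gamma(g))$ by Assumption~\ref{ass: clf_properties}(2), and equals $\boldsymbol 0$ on $\calE(\gamma(g))$ by Assumption~\ref{ass: clf_properties}(3). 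The third term is $\le 0$ and vanishes only at $g=1$ (Lemma~\ref{lemma: governor_convergence}). The cross term is the only indefinite one; crucially $\dot g$ carries a factor $k$, so by Assumption~\ref{ass: compact_state_space} (compactness of $\calX$, hence boundedness of $\partial V/\partial \bfq$, of $d\gamma/dg$, and of the $(1-g^\zeta)$ and distance factors on the invariant compact set $\calX\times[0,1]$) the cross term is bounded in magnitude by $k\cdot(\text{const})$. I would then argue, in standard two-time-scale fashion, that there is $k^*>0$ such that for $k\in(0,k^*]$ the negative definiteness of the first term dominates the cross term outside any neighborhood of $\calE(\gamma(g))$, while near $\calE(\gamma(g))$ one uses that the first term is zero there and the remaining terms, $c\dot V_g + (\text{cross})$, are dominated by choosing $c$ large relative to $k$; this yields $\dot W < 0$ on $(\calX\times[0,1])\setminus(\calE(\gamma(1))\times\{1\})$. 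Combined with LaSalle's invariance principle on the compact invariant set $\calX\times[0,1]$, the largest invariant subset of $\{\dot W = 0\}$ is $\calE(\gamma(1))\times\{1\}$, giving asymptotic stability of $\calE(\gamma(1))\times\{1\}$.

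\textbf{Expected obstacle.} The delicate point is making the cross-term domination rigorous when $V(\bfx,\gamma(g))$ can vanish without $\dot V|_{\bfq\text{ frozen}}$ providing a margin — i.e., controlling behavior in a neighborhood of the moving set $\calE(\gamma(g))$ where the "fast" Lyapunov decrease degenerates. One clean way around this is to avoid a single composite function and instead invoke a cascade/reduction argument: show first that $g(t)\to 1$ (Lemma~\ref{lemma: governor_convergence}, independent of $\bfx$), so the $\bfx$-subsystem is an asymptotically autonomous system converging to $\dot\bfx = \bff(\bfx)+\bfg(\bfx)\hat\bfu(\bfx,\gamma(1))$, and then apply a converse-Lyapunov / total-stability (perturbed-system) result for the limiting system whose equilibrium set $\calE(\gamma(1))$ is asymptotically stable by Lemma~\ref{lemma: closed_loop_convergence}; the smallness of $k$ guarantees the perturbation $\gamma(g(t))-\gamma(1)$ decays and stays small enough to preserve stability. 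Either route requires care with the fact that $\calE(\bfq)$ is a \emph{set} of equilibria (not a point), so I would phrase stability in terms of the distance to the set and use the set-stability versions of LaSalle and of the singular-perturbation lemmas; I expect reconciling the set-valued equilibria with the standard scalar-potential estimates to be the part that needs the most bookkeeping.
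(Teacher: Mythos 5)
Your proposal is conceptually the same two-time-scale argument the paper uses, but the execution differs in an instructive way. The paper never builds a composite Lyapunov function: it verifies the two hypotheses of the standard singular-perturbation machinery --- the boundary-layer system $\tfrac{d\bfx}{d\tau}=\bff(\bfx)+\bfg(\bfx)\hat{\bfu}(\bfx,\gamma(\bar g))$ has $\calE(\gamma(\bar g))$ asymptotically stable for each frozen $\bar g$ (Lemma~\ref{lemma: closed_loop_convergence}), and the reduced slow system obtained by substituting the quasi-steady state $\phi(\bfx)=\gamma(g)$ is $\dot g=k(1-g^{\zeta})$, globally asymptotically stable at $g^{*}=1$ --- and then directly invokes the singular-perturbation stability result in \citet[Appendix C.3]{khalil2002nonlinear} to conclude the existence of $k^{*}$. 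Your main argument instead reconstructs the composite Lyapunov function $W=V(\bfx,\gamma(g))+c\,V_g(g)$, which is essentially the proof technique underlying the result the paper cites; what you gain is an explicit mechanism (the $O(k)$ cross-term bound on the compact set $\calX\times[0,1]$), and what you pay is exactly the obstacle you flag: Assumption~\ref{ass: clf_properties} supplies only pointwise $\dot V<0$ with no quantitative decrease margin, the cross term presumes $V$ and the planned path $\gamma$ are differentiable in the reference argument (not guaranteed for an $A^{*}$ path), and the domination near the moving set $\calE(\gamma(g))$ and near $g=1$ requires the set-stability bookkeeping you acknowledge. Your fallback cascade/asymptotically-autonomous route (first $g(t)\to 1$, uniformly because the denominator in \eqref{eq: ref_pt_dynamics} is bounded on the compact state space, then a total-stability argument for the limiting $\bfx$-system with target $\calE(\gamma(1))$) is a genuinely different path from the paper's and arguably more elementary; the paper's citation-based route avoids all of this detail, but, much like your sketch, it leaves the quantitative hypotheses of the cited singular-perturbation theorem (uniformity of the boundary-layer stability in $\bar g$, regularity of the interconnection) implicit rather than verified.
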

%
%
\begin{proof}
We prove the result using singular perturbation theory \citep{khalil2002nonlinear}. We view the control-affine dynamics with state $\bfx \in \calX$ as the fast subsystem and the governor dynamics with state $g \in [0, 1]$ as the slow subsystem.
First, we analyze the reduced-order model, obtained by setting $\dot{g} = 0$ in \eqref{eq: ref_pt_dynamics}:
\begin{equation}
\label{eq: reduced_order_model}
0 = \frac{k}{1 + \|\phi(\bfx) - \gamma(g)\|} (1 - g^{\zeta}).
\end{equation}
The solution to \eqref{eq: reduced_order_model} is $g = 1$, which corresponds to the endpoint of the path $\gamma(1)$. By Lemma~\ref{lemma: closed_loop_convergence}, when $g=1$, the equilibrium set $ \calE(\gamma(1))$ for the constant reference point $\gamma(1)$ is asymptotically stable for the control-affine dynamics with the control law $\hat{\bfu}(\bfx, \gamma(1))$.

Next, we consider the boundary-layer system, obtained by introducing a fast time scale $\tau = t/\nu$ and taking the limit $\nu \to 0$:
\begin{equation}
\label{eq: boundary_layer_system}
\frac{d\bfx}{d\tau} = \bff(\bfx) + \bfg(\bfx)\hat{\bfu}(\bfx, \gamma(\bar{g})),
\end{equation}
where $\bar{g} \in [0, 1]$ is treated as a fixed parameter. By Lemma~\ref{lemma: closed_loop_convergence}, for each fixed $\bar{g}$, the equilibrium set $\calE(\gamma(\bar{g}))$ is asymptotically stable for the boundary-layer system \eqref{eq: boundary_layer_system}. 

Next, we analyze the reduced slow system, obtained by substituting the quasi-steady-state solution $\bfx(g) \in \calE(\gamma(1))$ into the slow subsystem:
\begin{equation}
\label{eq: reduced_slow_system}
\dot{g} = \frac{k}{1 + \|\gamma(g) - \gamma(g)\|} (1 - g^{\zeta}) = k(1 - g^{\zeta}).
\end{equation}
This system has a unique equilibrium point $g^* = 1$, which is globally asymptotically stable on $[0,\infty)$.

Consequently, as discussed in \citet[Appendix C.3]{khalil2002nonlinear}, there exists a sufficiently small $k^* > 0$ such that, for all $k \in (0, k^*]$, the equilibrium set $\calE(\gamma(1)) \times \{1\}$ is asymptotically stable for the interconnected system. 
\qed
\end{proof}

In practical implementations, the control input is applied at a finite sampling rate. To preserve stability guarantees, we use a sampling frequency of 50 Hz in the simulations and experiments, ensuring that the discretized behavior closely approximates the continuous-time result.

\begin{remark}\longthmtitle{Practical considerations for control bounds}\label{remark: control_bounds}
{\rm
The original CLF-CBF QP formulation in \eqref{eq: clf_cbf_qp} assumes no control bounds, a condition often not met in real-world robot applications due to physical limitations, such as maximum speed and acceleration. To ensure the applicability of our approach within these practical constraints, one can tune the parameters $k$ in the governor dynamics \eqref{eq: ref_pt_dynamics} to establish a smooth path-following behavior that respects the control bounds. Specifically, increasing \(k\) slows down the progression of the reference goal \(\gamma(g)\), reducing the control effort required.
} \demo
\end{remark}

\begin{remark}\longthmtitle{Practical considerations for convergence}
\label{rm: practical_path_follow}
{\rm 
Theorem~\ref{theorem: interconnected_convergence} establishes the asymptotic stability of the equilibrium set $\bar{\calE}(\gamma(1)) \times {1}$ for the interconnected system, which implies that the robot's position $\phi(\bfx(t))$ converges to the endpoint $\gamma(1)$ of the path as $t \to \infty$. However, in practical applications, it is important to consider the finite-time convergence of the robot to its destination.
To address this, we introduce a threshold $\mu^* > 0$ and consider the robot to have effectively reached its destination when $|\phi(\bfx(t)) - \gamma(1)| \leq \mu^*$. By setting an appropriate value for $\mu^*$, we can guarantee that the robot completes its navigation task within a finite time, while still ensuring that it reaches a sufficiently close vicinity of the goal $\gamma(1)$.
} \demo
\end{remark}

\section{Application to Differential-Drive Robot}
\label{sec: unicycle}

%
%

\begin{figure}[t]
  \centering
  \subcaptionbox{Path tracking\label{fig:clf_tracking}}
  {\includegraphics[width=0.49\linewidth]{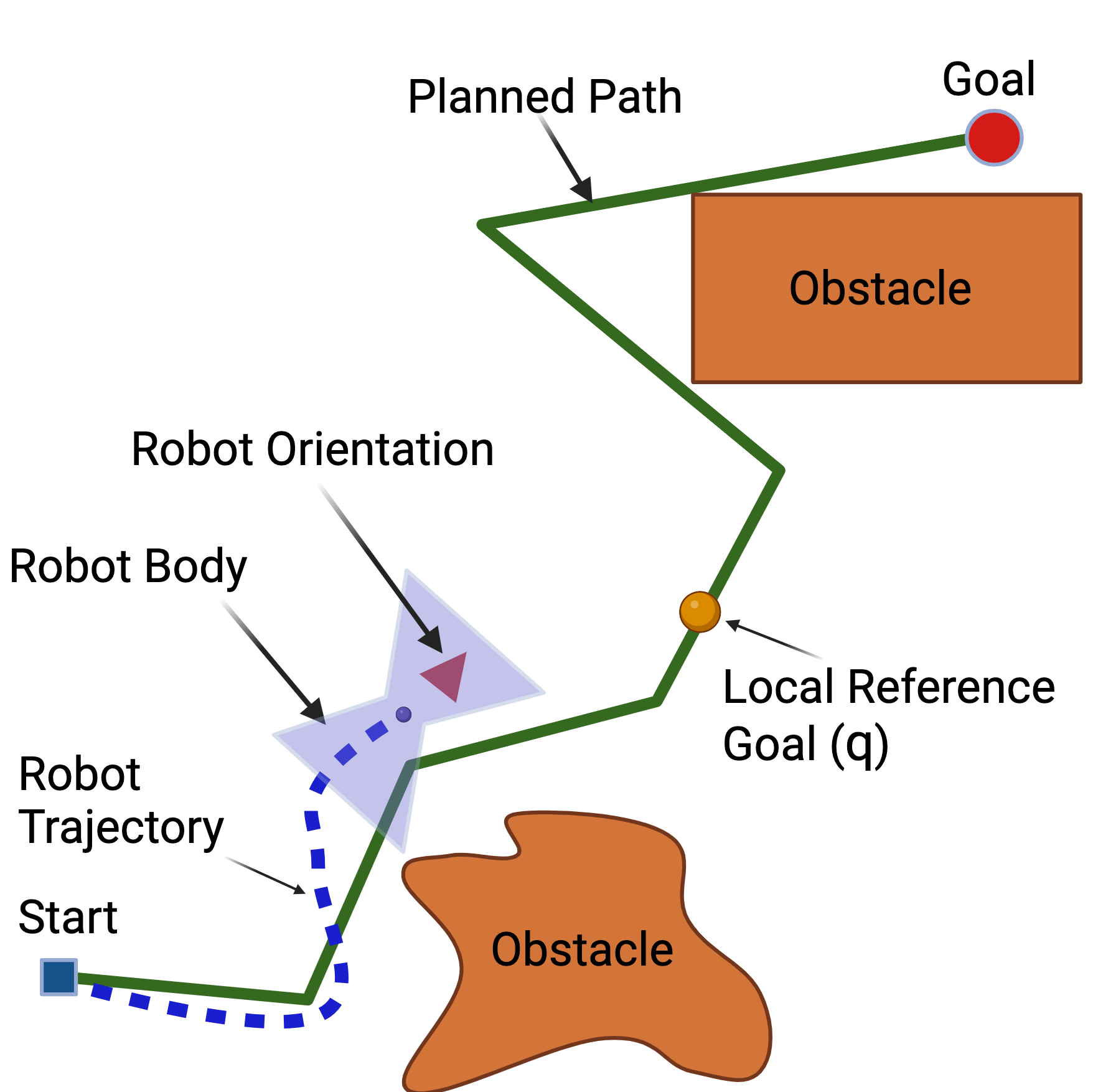}}%
  \hfill%
  \subcaptionbox{Safe navigation\label{fig:dr_safety}}{\includegraphics[width=0.49\linewidth]{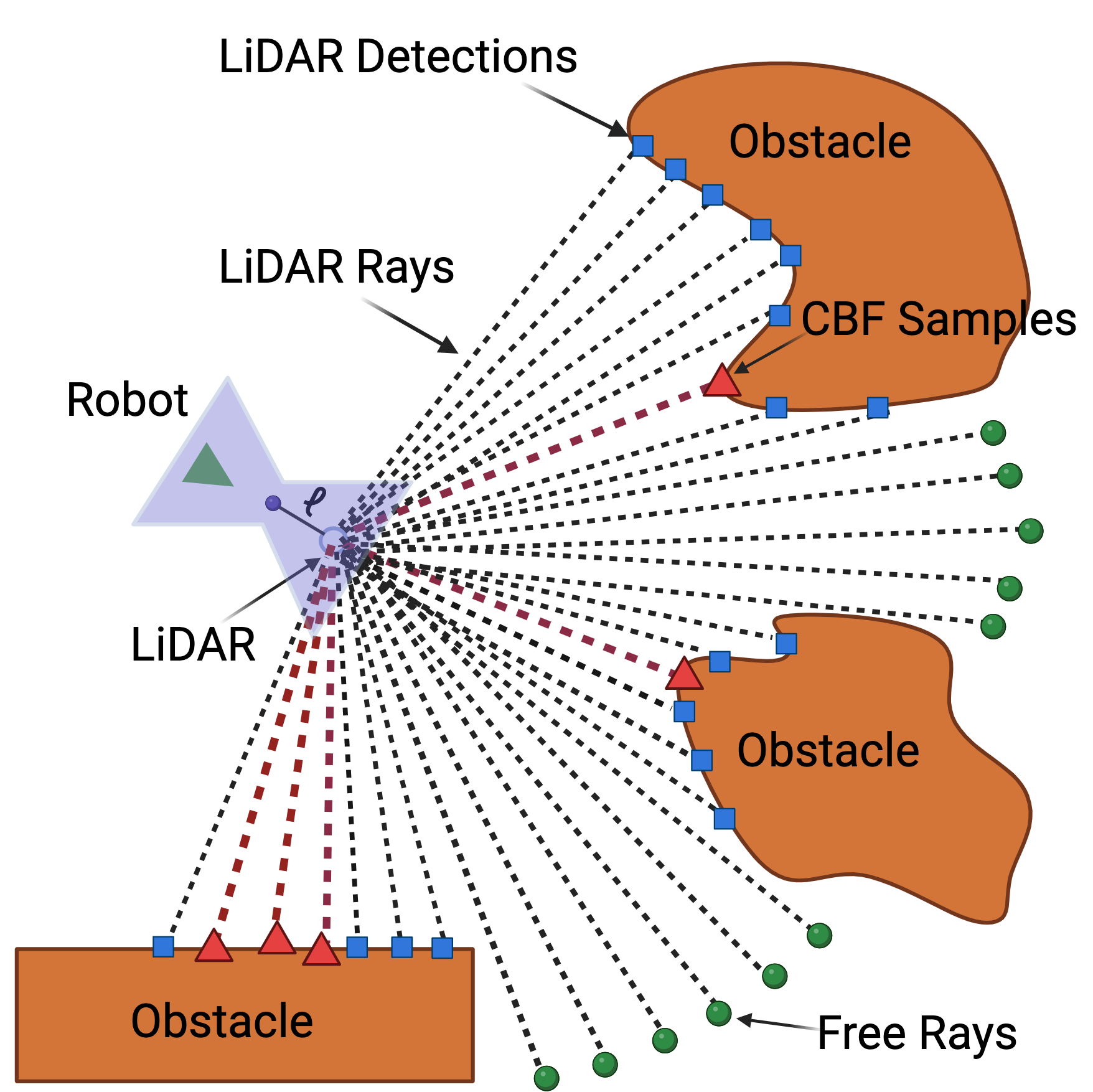}}\\
  \caption{(a) A robot is depicted following a path generated by a motion planning algorithm, and a dynamic local reference goal is highlighted in yellow. (b) The robot senses the environment with a $360$-degree LiDAR sensor mounted at $\tilde{\bfx}$. The CBF samples $\{h_{i}(\bfx)\}_{i=1}^N$ are the rays with boundary points highlighted as red triangles, and are selected based on the distance from the LiDAR detections to the robot body.}\label{fig:2}
\end{figure}

While the theory of our control method is applicable to general control-affine systems~\eqref{eq: dynamic}, our experiments focus on a wheeled differential-drive robot.

We consider a robot with state $\bfx := [x, y ,\theta]^\top \in \calX \subseteq \mathbb{R}^2 \times [-\pi,\pi)$, input $\bfu := [v,\omega]^\top \in \calU \subset \mathbb{R}^2$, and dynamics:
\begin{equation}\label{eq: unicycle_model}
\dot{\bfx} = \left[\begin{matrix} \cos(\theta) & 0 \\ \sin(\theta) & 0\\ 0 & 1 \end{matrix}\right] \left[\begin{matrix} v \\ \omega \end{matrix}\right].
\end{equation}
The function $\phi: \calX \rightarrow \mathbb{R}^2$ projecting the robot state $\bfx$ to its position is $\phi(\bfx) = \phi([x, y, \theta]^\top) = [x, y]^\top$. 

We define the state space of the robot as $\calX = \calD \times [-\pi, \pi)$, where $\calD \subset \bbR^2$ is a sufficiently large compact set containing the environment of interest, including the goal point $\bfq_G$ and the planned path $\gamma$.

In this section, we instantiate the control strategy developed in the previous sections to the case of differential-drive dynamics in \eqref{eq: unicycle_model}. We first design a CLF that enables stabilization to a desired goal point, as requested by Assumption~\ref{ass: clf_properties}. Then, we discuss the validity of using the SDF as a CBF candidate for ensuring safety. Next, we provide a discussion on how to select CBF samples based on range sensor measurements, which is crucial for the practical implementation of the proposed DR-CBF formulation. To address uncertainties in both localization and sensor data, we also introduce a unified error model that captures how localization errors propagate to LiDAR measurements, impacting the robot's perception of obstacles. By combining the unicycle-specific CLF and the data-driven CBF, we obtain a CLF-DR-CBF QP formulation tailored to the unicycle dynamics, enabling safe and efficient navigation in unknown dynamic environments.

\subsection{Unicycle Stabilization to a Goal Position}
\label{sec: CLF_stabilize_pt}

We design a CLF that enables stabilization of the unicycle dynamics \eqref{eq: unicycle_model} to a desired goal point $\bfq_G \in \bbR^2$. 

Inspired by \citet{icsleyen2023feedback}, for a local reference point $\bfq \in \bbR^2$, we define $V$ as follows:
\begin{equation}
\label{eq: clf_definition}
V(\bfx) \!=\!
\begin{cases}
\frac{1}{2}(k_v \|\bfq \!-\! \phi(\bfx)\|^2 \!+\! k_{\omega} \text{atan2}(e_v^{\perp}, e_v)^2), & \! \phi(\bfx) \!\neq \!\bfq, \\
0, & \! \phi(\bfx) \!=\! \bfq,
\end{cases}
\end{equation}
where $\phi(\bfx)$ denotes the current robot position, and $k_{v}, k_{\omega} > 0$ are user-specified control gains for linear and angular errors. 
The error terms $e_{v}$ and $e_{v}^{\perp}$ are defined as:
\begin{equation}
\label{eq: err_terms_definition}
    e_v = \begin{bmatrix}
	\cos \theta \\
	\sin \theta 
	\end{bmatrix}^\top \hspace*{-4pt} (\bfq - \phi(\bfx)), \;
	e_v^\perp =
	\begin{bmatrix}
	-\sin \theta \\
	\cos \theta 
	\end{bmatrix}^\top \hspace*{-4pt} (\bfq - \phi(\bfx)).
\end{equation}
When $\phi(\bfx) \neq \bfq$, the first part of $V(\bfx)$ represents the squared Euclidean distance between $\phi(\bfx)$ and $\bfq$, while the second part quantifies the squared angular alignment error. Fig.~\ref{fig:clf_tracking} illustrates the unicycle robot tracking a local reference point $\bfq$
on the planned path.

The time derivative of $V(\bfx)$ is:
\begin{equation}
\label{eq: point_clf_derivative}
\dot{V}(\bfx) = \mathcal{L}_{\bfg}V(\bfx) \bfu = \begin{bmatrix} -k_{v} e_{v} + k_{\omega} \dfrac{\text{atan2}(e_{v}^{\perp} , e_{v})}{\| \bfq - \phi(\bfx)\|^2} e_{v}^{\perp}  \\ - k_{\omega} \text{atan2}(e_{v}^{\perp}, e_{v}) \end{bmatrix}^{\top} \bfu. 
\end{equation}

The following result provides a control law that ensures the satisfaction of the CLC in \eqref{eq: clf}.

\begin{lemma}\longthmtitle{Control Lyapunov constraint satisfaction}\label{lemma: clf_pt}
Let $\alpha_V$ be a class $\calK$ function satisfying $\lim_{r \rightarrow 0^+} \frac{\alpha_V(r)}{r} = 0$.
%
%
For any state $\bfx \in \calX$, the following control law
\begin{equation}
\label{eq: min_norm_u_clf}
\bfu(\bfx)= 
\begin{cases}
- \alpha_{V} (V(\bfx)) \frac{\mathcal{L}_{\bfg}V(\bfx)^{\top}}{\| \mathcal{L}_{\bfg}V(\bfx)^{\top} \|^{2}}, \quad & \phi(\bfx) \neq \bfq, \\
\boldsymbol{0}, \quad & \phi(\bfx) = \bfq,
\end{cases}
\end{equation}
ensures that $\mathcal{L}_{\bff} V(\bfx) + \mathcal{L}_{\bfg} V(\bfx)\bfu(\bfx) + \alpha_V( V(\bfx)) \leq 0$ is satisfied.  Furthermore, the control law is Lipschitz continuous for $\phi(\bfx) \neq \bfq$ and continuous at $\bfq$.

%
\end{lemma}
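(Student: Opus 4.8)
The plan is to verify the control Lyapunov constraint first and then the regularity claims. The key structural fact for the unicycle model~\eqref{eq: unicycle_model} is that it has no drift, i.e., $\bff\equiv\boldsymbol{0}$, so $\mathcal{L}_{\bff}V\equiv 0$ and the inequality to be shown reduces to $\mathcal{L}_{\bfg}V(\bfx)\bfu(\bfx)+\alpha_V(V(\bfx))\leq 0$. I would split on the two branches of~\eqref{eq: min_norm_u_clf}. If $\phi(\bfx)=\bfq$, then $V(\bfx)=0$, $\alpha_V(V(\bfx))=\alpha_V(0)=0$, and $\bfu(\bfx)=\boldsymbol{0}$, so the left-hand side equals $0$. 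If $\phi(\bfx)\neq\bfq$, I first check that~\eqref{eq: min_norm_u_clf} is well defined, i.e., $\mathcal{L}_{\bfg}V(\bfx)^{\top}\neq\boldsymbol{0}$: from the explicit expression in~\eqref{eq: point_clf_derivative}, if the angular entry $-k_{\omega}\,\text{atan2}(e_v^{\perp},e_v)$ vanishes, then $e_v^{\perp}=0$ and $e_v>0$, hence $\|\bfq-\phi(\bfx)\|=|e_v|>0$ and the remaining entry equals $-k_v e_v<0$; thus $\mathcal{L}_{\bfg}V(\bfx)^{\top}\neq\boldsymbol{0}$ whenever $\phi(\bfx)\neq\bfq$. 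Substituting~\eqref{eq: min_norm_u_clf} then gives $\mathcal{L}_{\bfg}V(\bfx)\bfu(\bfx)=-\alpha_V(V(\bfx))\,\|\mathcal{L}_{\bfg}V(\bfx)^{\top}\|^{2}/\|\mathcal{L}_{\bfg}V(\bfx)^{\top}\|^{2}=-\alpha_V(V(\bfx))$, so the left-hand side is exactly $0\leq 0$. This establishes the CLC on all of $\calX$.

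For the Lipschitz claim away from $\bfq$, note that on the open set $\{\bfx\in\calX:\phi(\bfx)\neq\bfq\}$ the function $V$ is continuously differentiable and $\mathcal{L}_{\bfg}V$ is continuous, while the denominator $\|\mathcal{L}_{\bfg}V(\bfx)^{\top}\|^{2}$ is continuous and, by the argument above, strictly positive, hence locally bounded away from $0$. Since $\alpha_V$ is locally Lipschitz, the control law~\eqref{eq: min_norm_u_clf} is obtained from sums, products, and a quotient with non-vanishing denominator of locally Lipschitz functions, and is therefore locally Lipschitz on that set.

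The remaining, more delicate, claim is continuity at $\bfq$, i.e., $\bfu(\bfx)\to\boldsymbol{0}$ as $\phi(\bfx)\to\bfq$. From~\eqref{eq: min_norm_u_clf}, $\|\bfu(\bfx)\|=\alpha_V(V(\bfx))/\|\mathcal{L}_{\bfg}V(\bfx)^{\top}\|$. I would prove a gradient-domination estimate of the form $\|\mathcal{L}_{\bfg}V(\bfx)^{\top}\|\geq\kappa\sqrt{V(\bfx)}$, valid for $\bfx$ in a neighborhood of the goal set $\calE(\bfq)$ and some constant $\kappa>0$. Granting it, $\|\bfu(\bfx)\|\leq\kappa^{-1}\alpha_V(V(\bfx))/\sqrt{V(\bfx)}=\kappa^{-1}\sqrt{V(\bfx)}\cdot\bigl(\alpha_V(V(\bfx))/V(\bfx)\bigr)$, which tends to $0$ as $\phi(\bfx)\to\bfq$ since $V(\bfx)\to 0$ (by continuity of $V$) and $\lim_{r\to 0^{+}}\alpha_V(r)/r=0$; together with $\bfu=\boldsymbol{0}$ on $\calE(\bfq)$ this yields continuity at $\bfq$.

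I expect the domination estimate $\|\mathcal{L}_{\bfg}V(\bfx)^{\top}\|\geq\kappa\sqrt{V(\bfx)}$ to be the main obstacle, since the two entries of $\mathcal{L}_{\bfg}V(\bfx)^{\top}$ in~\eqref{eq: point_clf_derivative} can partially cancel. I would establish it by casework near $\calE(\bfq)$, using $e_v^{2}+(e_v^{\perp})^{2}=\|\bfq-\phi(\bfx)\|^{2}$ and $V(\bfx)=\tfrac{1}{2}\bigl(k_v\|\bfq-\phi(\bfx)\|^{2}+k_{\omega}\,\text{atan2}(e_v^{\perp},e_v)^{2}\bigr)$: (i) when the heading error $|\text{atan2}(e_v^{\perp},e_v)|$ dominates $\|\bfq-\phi(\bfx)\|$, the second (angular) entry $k_{\omega}|\text{atan2}(e_v^{\perp},e_v)|$ alone is already of order $\sqrt{V(\bfx)}$; (ii) when $\|\bfq-\phi(\bfx)\|$ dominates, the heading error is necessarily small near $\calE(\bfq)$, so the cross term $k_{\omega}\,\text{atan2}(e_v^{\perp},e_v)\,e_v^{\perp}/\|\bfq-\phi(\bfx)\|^{2}$ is negligible relative to $k_v e_v$ and $|e_v|$ is comparable to $\|\bfq-\phi(\bfx)\|$, so the first entry is of order $\sqrt{V(\bfx)}$. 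Combining the two cases yields the claimed lower bound and closes the argument.
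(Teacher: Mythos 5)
Your proof is correct and follows the paper's structure almost step for step: the trivial branch at $\phi(\bfx)=\bfq$, the contradiction argument showing $\mathcal{L}_{\bfg}V(\bfx)\neq\boldsymbol{0}$ off the goal set, exact satisfaction of the CLC by the min-norm/Sontag-type feedback, regularity away from $\bfq$, and a limit argument at $\bfq$ driven by $\lim_{r\to 0^+}\alpha_V(r)/r=0$. The one place you genuinely diverge is the continuity step: the paper writes $\|\bfu(\bfx)\|\le \bigl(\alpha_V(V)/V\bigr)\cdot\bigl(V/\|\mathcal{L}_{\bfg}V^\top\|\bigr)$ and simply asserts that the second factor stays bounded near the goal, whereas you propose the quantitative gradient-domination bound $\|\mathcal{L}_{\bfg}V(\bfx)^\top\|\ge\kappa\sqrt{V(\bfx)}$ near $\calE(\bfq)$. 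That bound is indeed provable by your casework: writing $\rho=\|\bfq-\phi(\bfx)\|$, $\psi=\operatorname{atan2}(e_v^\perp,e_v)$, so $e_v=\rho\cos\psi$, $e_v^\perp=\rho\sin\psi$, the angular entry $k_\omega|\psi|$ handles the regime $|\psi|\ge c\rho$, and in the regime $|\psi|<c\rho$ (with $\rho$ small so $\cos\psi\ge 1/2$) the cross term is bounded by $k_\omega\psi^2/\rho\le k_\omega c^2\rho$, which for small $c$ cannot cancel $k_v\rho\cos\psi$; both cases give $\|\mathcal{L}_{\bfg}V\|^2\gtrsim V$. Your route therefore actually supplies the justification the paper leaves implicit, and in fact proves more: $V/\|\mathcal{L}_{\bfg}V^\top\|\le\sqrt{V}/\kappa\to 0$. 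Two minor caveats, shared with the paper's own argument: your quotient-rule Lipschitz step tacitly assumes $\alpha_V$ is locally Lipschitz, which is not implied by class-$\calK$ alone, and what you obtain is local Lipschitzness on $\{\phi(\bfx)\neq\bfq\}$ rather than a uniform Lipschitz constant on that whole set; neither affects how the lemma is used.
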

\begin{proof}
First, note that for any $\bfx \in \calX$ such that $\phi(\bfx) = \bfq$, we have $V(\bfx) = 0$ and $\mathcal{L}_{\bfg}V(\bfx) = \boldsymbol{0}$, regardless of the orientation $\theta$. In this case, the control law $\bfu(\bfx)$ given by \eqref{eq: min_norm_u_clf} is simply $\boldsymbol{0}$, which trivially satisfies the CLC: $\mathcal{L}_{\bff} V(\bfx) + \mathcal{L}_{\bfg} V(\bfx)\bfu(\bfx) + \alpha_V( V(\bfx)) \leq 0$.

Next, we show that $\mathcal{L}_{\bfg}V(\bfx) \neq \boldsymbol{0}$ for all $\bfx \in \calX$ such that $\phi(\bfx) \neq \bfq$. Suppose, by contradiction, that $\mathcal{L}_{\bfg}V(\bfx) = \boldsymbol{0}$. This implies $k_{\omega} \text{atan2}(e_{v}^{\perp} , e_{v}) = 0$, which means $e_{v}^{\perp} = 0$. From \eqref{eq: err_terms_definition}, we have $e_{v}^{2} + e_{v}^{\perp^{2}} = ( \bfq - \phi(\bfx) )^2$. Since $e_{v}^{\perp} = 0$ and $\phi(\bfx) \neq \bfq$, we obtain $e_{v} = \pm | \bfq - \phi(\bfx) | \neq 0$, contradicting the assumption that $\mathcal{L}_{\bfg}V(\bfx) = \boldsymbol{0}$. Hence, $\mathcal{L}_{\bfg}V(\bfx) \neq \boldsymbol{0}$ for all $\bfx \in \calX$ such that $\phi(\bfx) \neq \bfq$.

Now, for all $\bfx \in \calX$ such that $\phi(\bfx) \neq \bfq$, the control law $\bfu(\bfx)$ given by \eqref{eq: min_norm_u_clf} is the closed-form solution to the optimization problem:
\begin{align}
&\min_{\bfu \in \bbR^m} \|\bfu\|^2,  \\
\mathrm{s.t.} \, \,  &\mathcal{L}_{\bfg} V(\bfx)\bfu + \alpha_{V} (V(\bfx)) \leq 0. \notag
\end{align}
Since $V(\bfx)$ is smooth with bounded derivatives for $\phi(\bfx) \neq \bfq$, it follows that $\bfu(\bfx)$ is Lipschitz continuous on the set $\{\bfx \in \calX : \phi(\bfx) \neq \bfq\}$.

Next, to show continuity on $\calX$, we prove that $\lim_{\bfx \rightarrow \bfx_G} \bfu(\bfx) = \boldsymbol{0}$ for any $\bfx_G \in \calX$ such that $\phi(\bfx_G) = \bfq$. Based on \eqref{eq: point_clf_derivative}, we can write
\begin{align*}
&\lim_{\bfx \rightarrow \bfx_G} |\bfu(\bfx)| = \lim_{\bfx \rightarrow \bfx_G} \left|\alpha_{V} (V(\bfx)) \frac{\mathcal{L}{\bfg}V(\bfx)^{\top}}{\| \mathcal{L}{\bfg}V(\bfx)^{\top} \|^{2}}\right| \\
&\leq \lim_{\bfx \rightarrow \bfx_G} \frac{\alpha_V(V(\bfx))}{V(\bfx)} \cdot \lim_{\bfx \rightarrow \bfx_G} \left|\frac{V(\bfx)}{\| \mathcal{L}{\bfg}V(\bfx)^{\top}\|} \right| = 0,
\end{align*}
where the last equality holds because the first limit is zero by the assumption on $\alpha_V$, and the second limit is bounded since $\frac{V(\bfx)}{| \mathcal{L}{\bfg}V(\bfx)^{\top} |}$ remains bounded as $\bfx \rightarrow \bfx_G$. 
Therefore, the control law $\bfu(\bfx)$ is continuous on $\calX$.
%
%
%
\qed
\end{proof}

The control law minimizes the norm of $\bfu$ while ensuring $\dot{V}(\bfx) < 0$. Therefore, the function $V$ in~\eqref{eq: clf_definition} together with the control law $\bfu(\bfx)$ in~\eqref{eq: min_norm_u_clf} satisfy Assumption~\ref{ass: clf_properties}. Therefore, Lemma~\ref{lemma: closed_loop_convergence} ensures that the system is guided towards the goal point $\bfq$. Since the control law $\bfu(\bfx) = \boldsymbol{0}$ whenever $\phi(\bfx) = \bfq$, the orientation $\theta(t)$ may converge to any value, depending on the initial conditions. Finally, by Theorem~\ref{theorem: interconnected_convergence}, the closed-loop dynamics interconnected with the governor dynamics has the unicycle robot track the planned path $\gamma$ in~$\bbR^2$.

\subsection{Signed Distance Function as Control Barrier Function}
\label{sec: sdf_to_cbf}


In unknown and dynamic environments, the precise computation of the barrier function $h(\bfx, t)$ or the construction of a probabilistic model is challenging. This section focuses on showing that the robot SDF \eqref{eq: robot_sdf_transformation} is a valid CBF for \eqref{eq: unicycle_model} under appropriate assumptions. 


We make the following assumptions: 

\begin{Assumption}\longthmtitle{Regularity of Robot Shape}
\label{ass: unicycle_shape_assumptions}
The robot shape $\calB_0$ has a smooth boundary and is compact. 
\end{Assumption}

Based on the robot SDF in \eqref{eq: robot_sdf_transformation}, we define the candidate CBF as: 
\begin{equation}
\label{eq: sdf_to_cbf}
h(\bfx, t) \!=\! d(\calB(\bfx), \calO(t)) \!:=\! \inf_{\bfq \in \calO(t)} d_{\calB_0}\left( \bfR(\bfx)^\top (\mathbf{q} - 
\phi(\bfx)) \right),
\end{equation}
where  \( \calO(t) \subset \mathbb{R}^2 \) is the obstacle set at time \( t \), and $\bfR(\bfx) \in \text{SO}(2)$ is the rotation matrix that describes the unicycle’s orientation. By Assumption~\ref{ass: unicycle_shape_assumptions}, we know that $d_{\calB_0}(\cdot)$ is 1-Lipschitz continuous \citep{fitzpatrick1980metric}. The presence of the infimum operator in \eqref{eq: sdf_to_cbf} ensures that the candidate CBF $h(\bfx, t)$ is Lipschitz continuous. However, it is important to note that $h(\bfx, t)$ is, in general, not smooth due to both the properties of $d_{\calB_0}(\cdot)$ and the infimum operation.

%
%

Therefore, we review some concepts from nonsmooth analysis to handle the potential non-differentiability of $h$.

\begin{definition}\longthmtitle{Generalized Gradient} 
\label{def: generalized_gradient}
Let $h: \calX \times \bbR_{\geq 0} \to \mathbb{R}$ be Lipschitz continuous near $\bfz := (\bfx, t) \in \mathbb{R}^4$, and suppose $\mathcal{Z}$ is any set of Lebesgue measure zero in $\bbR^4$. The \emph{Clarke generalized gradient} \citep{clarke1981generalized} of $h$ at $\bfz$ is defined as
\begin{equation}
\label{eq: generalized_gradient}
\partial h(\bfz) = \operatorname{co} \left\{ \lim_{i \to \infty} \nabla h(\bfz_i) \,\big|\, \bfz_i \to \bfz,\, \bfz_i \notin \Omega_h \cup \mathcal{Z} \right\},
\end{equation}
where $\operatorname{co}$ denotes the convex hull and $\Omega_h$ is the set of points where $h$ fails to be differentiable. 
\end{definition}

Using this definition, we extend the notion of time-varying control barrier functions to functions that are not necessarily differentiable everywhere.
%
%

\begin{definition}\longthmtitle{Nonsmooth Time-Varying Control Barrier Function}
\label{def: nonsmooth_cbf}
A Lipschitz continuous and regular function $h: \calX \times \bbR_{\geq 0} \to \mathbb{R}$ is called a \emph{nonsmooth time-varying control barrier function} on a set $\mathcal{H} \subset \mathbb{R}^4$ if there exists an extended class-$\mathcal{K}$ function $\alpha$ such that for all $\bfx \in \calX$ and all $t \geq 0$,
\begin{equation}
\label{eq: nonsmooth_cbf_condition}
\sup_{\bfu \in \mathcal{U}} \inf_{(\bfy, z) \in \partial h(\bfx, t)} \left\{ \bfy^\top \left( f(\bfx) + g(\bfx) \bfu \right) + z \right\} \geq -\alpha(h(\bfx, t)),
\end{equation}
where $\partial h(\bfx, t)$ is the generalized gradient of $h$. 
\end{definition}

Next, we show that $h(\bfx, t)$ in~\eqref{eq: sdf_to_cbf} is a valid time-varying CBF candidate for the dynamics in~\eqref{eq: unicycle_model} under the following assumptions. 

\begin{Assumption}\longthmtitle{Boundedness and Regularity of Obstacles}
\label{ass: unicycle_env_assumptions}
We assume that:
\begin{itemize}
    \item The initial obstacle set \( \mathcal{O}_0 \) is bounded, and the transformations \( A_{\mathcal{O}}(t) \) and \( \mathbf{b}_{\mathcal{O}}(t) \) in~\eqref{eq: obstacle_set_transformation} are uniformly bounded over \( t \).
    \item There exists a constant \( B > 0 \) such that for all \( (\mathbf{x}, t) \) and all \( (\bfy, z) \in \partial h(\mathbf{x}, t) \), it holds that $ |z | \leq B$.
\end{itemize}
\end{Assumption}

The following result establishes the boundedness of the obstacle set $\calO(t)$. Its proof is straightforward and therefore omitted.

\begin{lemma}\longthmtitle{Obstacle Set Boundedness}\label{lemma: obstacle_set_boundness}
Under Assumption~\ref{ass: unicycle_env_assumptions}, the obstacle set $\calO(t)$ is bounded for all $t \geq 0$.
\end{lemma}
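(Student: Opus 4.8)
The plan is to exploit the explicit transformation structure of $\calO(t)$ in~\eqref{eq: obstacle_set_transformation} together with the uniform bounds posited in Assumption~\ref{ass: unicycle_env_assumptions}, and simply chase norms. No compactness or continuity-in-$t$ argument is needed; uniform boundedness of the data is already enough.

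First I would fix an arbitrary $t \geq 0$ and an arbitrary point $\bfq \in \calO(t)$. By~\eqref{eq: obstacle_set_transformation}, there exists $\bfq_0 \in \calO_0$ with $\bfq = A_{\calO}(t)\bfq_0 + \bfb_{\calO}(t)$. Since $\calO_0$ is bounded, set $\rho_0 := \sup_{\bfq_0 \in \calO_0}\norm{\bfq_0} < \infty$; since $A_{\calO}$ and $\bfb_{\calO}$ are uniformly bounded over $t$, set $a := \sup_{t \geq 0}\norm{A_{\calO}(t)} < \infty$ (induced $L_2$ norm) and $b := \sup_{t \geq 0}\norm{\bfb_{\calO}(t)} < \infty$. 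Applying the triangle inequality and submultiplicativity of the operator norm then gives $\norm{\bfq} \leq \norm{A_{\calO}(t)}\,\norm{\bfq_0} + \norm{\bfb_{\calO}(t)} \leq a\rho_0 + b$. Because the right-hand side depends on neither $t$ nor the choice of $\bfq \in \calO(t)$, I conclude that $\calO(t)$ is contained in the closed ball of radius $a\rho_0 + b$ about the origin for every $t \geq 0$, which is exactly the claimed boundedness (uniform in $t$, in fact).

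There is essentially no obstacle here, and this is why the authors call the proof straightforward. The only point requiring a moment's care is the interpretation of ``uniformly bounded'' for the matrix-valued map $A_{\calO}$: it should be read as $\sup_{t\geq 0}\norm{A_{\calO}(t)} < \infty$ in the induced operator norm, which is what legitimizes the submultiplicativity step; under any equivalent finite-dimensional matrix norm the conclusion is unchanged up to a constant factor.
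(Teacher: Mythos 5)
Your argument is correct: the norm-chasing via the triangle inequality and submultiplicativity, using the boundedness of $\calO_0$ and the uniform bounds on $A_{\calO}(t)$ and $\bfb_{\calO}(t)$, is exactly the straightforward argument the paper alludes to when it omits the proof. Nothing is missing, and your remark on reading ``uniformly bounded'' in the induced operator norm is a reasonable clarification.
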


%
%

Next, we show that $h(\bfx, t)$ is Lipschitz.

\begin{lemma}\longthmtitle{Lipschitzness of Non-smooth CBF}\label{lemma: cbf_lipschitz}
Under Assumptions~\ref{ass: unicycle_shape_assumptions} and~\ref{ass: unicycle_env_assumptions}, the function $h(\bfx, t)$ is Lipschitz in $(\bfx, t)$. 
\end{lemma}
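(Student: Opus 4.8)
The plan is to prove Lipschitz continuity of $h(\bfx,t) = \inf_{\bfq \in \calO(t)} d_{\calB_0}(\bfR(\bfx)^\top(\bfq - \phi(\bfx)))$ by combining three facts: (i) $d_{\calB_0}$ is $1$-Lipschitz (Assumption~\ref{ass: unicycle_shape_assumptions}, via \citet{fitzpatrick1980metric}); (ii) the map $(\bfx,\bfq) \mapsto \bfR(\bfx)^\top(\bfq - \phi(\bfx))$ is locally Lipschitz jointly in $\bfx$ and $\bfq$, with a Lipschitz constant that is uniform once $\bfq$ ranges over the bounded set $\calO(t)$; and (iii) taking an infimum over a parameter set is a Lipschitz-preserving operation, while the time dependence enters only through the transformation $\calO(t) = A_\calO(t)\calO_0 + \bfb_\calO(t)$, which is continuously differentiable with uniformly bounded derivatives. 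The key structural fact I would invoke repeatedly is that if $g(\bfp, \bfq)$ is $L$-Lipschitz in $\bfp$ uniformly over $\bfq \in Q$, then $\bfp \mapsto \inf_{\bfq \in Q} g(\bfp, \bfq)$ is $L$-Lipschitz; this handles the infimum cleanly.

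First I would establish Lipschitz continuity in $\bfx$ for fixed $t$. Write $h(\bfx,t) = \inf_{\bfq \in \calO(t)} d_{\calB_0}(T(\bfx,\bfq))$ with $T(\bfx,\bfq) = \bfR(\bfx)^\top(\bfq - \phi(\bfx))$. For $\bfx_1, \bfx_2 \in \calX$ and any $\bfq$, the $1$-Lipschitzness of $d_{\calB_0}$ gives $|d_{\calB_0}(T(\bfx_1,\bfq)) - d_{\calB_0}(T(\bfx_2,\bfq))| \leq \|T(\bfx_1,\bfq) - T(\bfx_2,\bfq)\|$. Then I bound $\|T(\bfx_1,\bfq) - T(\bfx_2,\bfq)\| \leq \|\bfR(\bfx_1)^\top - \bfR(\bfx_2)^\top\|\,\|\bfq - \phi(\bfx_1)\| + \|\bfR(\bfx_2)^\top\|\,\|\phi(\bfx_1) - \phi(\bfx_2)\|$. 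Since $\bfR(\bfx)$ and $\phi(\bfx)$ are continuously differentiable on the compact set $\calX$ (Assumption~\ref{ass: compact_state_space}; for the unicycle $\bfR$ depends only on $\theta$ and $\phi$ is a projection, so they are globally Lipschitz on $\calX$), and since $\|\bfq - \phi(\bfx_1)\|$ is bounded uniformly in $\bfq$ because $\calO(t)$ is bounded (Lemma~\ref{lemma: obstacle_set_boundness}) and $\phi(\calX)$ is bounded, the right-hand side is bounded by $L_x\|\bfx_1 - \bfx_2\|$ for a constant $L_x$ independent of $\bfq$ and of $t$ (using the uniform boundedness of $\calO(t)$ over $t$ from Assumption~\ref{ass: unicycle_env_assumptions}). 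Taking infima over $\bfq$ on both sides, using fact (iii), yields $|h(\bfx_1,t) - h(\bfx_2,t)| \leq L_x \|\bfx_1 - \bfx_2\|$.

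Next I would handle Lipschitz continuity in $t$ for fixed $\bfx$. Here the obstacle set itself moves. Using the transformation $\calO(t) = A_\calO(t)\calO_0 + \bfb_\calO(t)$, I reparametrize: $h(\bfx,t) = \inf_{\bfo \in \calO_0} d_{\calB_0}(\bfR(\bfx)^\top(A_\calO(t)\bfo + \bfb_\calO(t) - \phi(\bfx)))$. Now the infimand is a function of $t$ (with $\bfo$ the fixed infimum parameter), and the argument of $d_{\calB_0}$ is continuously differentiable in $t$ because $A_\calO$ and $\bfb_\calO$ are; its $t$-derivative is $\bfR(\bfx)^\top(\dot A_\calO(t)\bfo + \dot\bfb_\calO(t))$, which is bounded uniformly in $\bfo \in \calO_0$ (bounded set, Assumption~\ref{ass: unicycle_env_assumptions}), uniformly in $t$ (uniform boundedness of the transformations and their derivatives), and uniformly in $\bfx$ (since $\|\bfR(\bfx)^\top\| = 1$). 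Applying $1$-Lipschitzness of $d_{\calB_0}$ and then fact (iii) over the infimum in $\bfo$ gives $|h(\bfx,t_1) - h(\bfx,t_2)| \leq L_t |t_1 - t_2|$ with $L_t$ independent of $\bfx$. Combining the two one-variable estimates gives joint Lipschitz continuity: $|h(\bfx_1,t_1) - h(\bfx_2,t_2)| \leq L_x\|\bfx_1-\bfx_2\| + L_t|t_1-t_2|$.

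The main obstacle — really a bookkeeping point rather than a deep difficulty — is ensuring all the Lipschitz constants are \emph{uniform} in the infimum parameter and in the other variable, so that the infimum-preserving step is legitimate and the final constant is finite; this is exactly where compactness of $\calX$, compactness of $\calB_0$, boundedness of $\calO(t)$ (Lemma~\ref{lemma: obstacle_set_boundness}), and the uniform boundedness of $A_\calO, \bfb_\calO$ and their derivatives (Assumption~\ref{ass: unicycle_env_assumptions}) are all used. I would also remark that, while $h$ is only Lipschitz and not differentiable, Rademacher's theorem guarantees it is differentiable almost everywhere, which is what licenses the Clarke generalized gradient used in Definition~\ref{def: nonsmooth_cbf} and in the subsequent analysis; but that is not needed for the present Lipschitzness claim itself.
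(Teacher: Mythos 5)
Your argument for Lipschitzness in $\bfx$ is essentially the paper's proof: fix $\bfq$, use the $1$-Lipschitzness of $d_{\calB_0}$, split the transformation difference into a rotation term and a translation term, bound $\|\bfq\|$ and $\|\phi(\bfx)\|$ via boundedness of $\calO(t)$ (Lemma~\ref{lemma: obstacle_set_boundness}) and compactness of $\calX$, and then pass the uniform constant through the infimum (the paper cites \citet[Proposition 9.10]{rockafellar2009variational} for exactly your ``fact (iii)''). Where you genuinely diverge is the time direction. The paper does not derive Lipschitzness in $t$ from the obstacle motion at all: it simply invokes the second bullet of Assumption~\ref{ass: unicycle_env_assumptions}, which bounds the temporal component $z$ of every element of $\partial h(\bfx,t)$ by $B$, and concludes that $h$ is Lipschitz in $t$ with constant $B$. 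You instead reparametrize the infimum over $\calO_0$ via $\calO(t)=A_\calO(t)\calO_0+\bfb_\calO(t)$ and bound the $t$-derivative of the argument of $d_{\calB_0}$. This is a more informative route --- it explains where the time-Lipschitz constant comes from (obstacle speeds) rather than postulating it, and it would in fact justify the paper's bound $|z|\leq B$ --- but it uses a hypothesis that is not among the lemma's stated assumptions: uniform boundedness of $\dot A_\calO$ and $\dot\bfb_\calO$. Assumption~\ref{ass: unicycle_env_assumptions} only asserts that $A_\calO(t)$ and $\bfb_\calO(t)$ themselves are uniformly bounded, and continuous differentiability on $[0,\infty)$ does not by itself give uniformly bounded derivatives; the derivative bound appears only informally in the problem formulation (``bounded velocities and accelerations''). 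So to make your version airtight under the lemma's hypotheses as written, either cite that modeling assumption explicitly or fall back on the $|z|\leq B$ bullet, which renders the time part immediate, as in the paper. Your closing remark about Rademacher is correct but, as you say, not needed for the claim.
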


\begin{proof}
For each fixed \( \mathbf{q} \in \mathcal{O}(t) \), we write
\begin{equation*}
    h_{\mathbf{q}}(\mathbf{x}) = d_{\mathcal{B}_0} \left( \mathbf{R}(\mathbf{x})^\top \left( \mathbf{q} - \phi(\mathbf{x}) \right) \right).
\end{equation*}
Since  \( \mathbf{R}(\mathbf{x}) \) and \( \phi(\mathbf{x}) \) are continuously differentiable, they are also Lipschitz continuous on the compact set \( \calX \). Let \( L_{\mathbf{R}} \) and \( L_{\phi} \) be their Lipschitz constants, respectively, so that 
$\left\| \mathbf{R}(\mathbf{x}_1) - \mathbf{R}(\mathbf{x}_2) \right\| \leq L_{\mathbf{R}} \left\| \mathbf{x}_1 - \mathbf{x}_2 \right\|$ and 
$\left\| \phi(\mathbf{x}_1) - \phi(\mathbf{x}_2) \right\| \leq L_{\phi} \left\| \mathbf{x}_1 - \mathbf{x}_2 \right\|$. Now, for any \( \mathbf{x}_1, \mathbf{x}_2 \in \calX \) and fixed \( \mathbf{q} \in \mathcal{O}(t) \), one has
\begin{align}
&\left| h_{\mathbf{q}}(\mathbf{x}_1) - h_{\mathbf{q}}(\mathbf{x}_2) \right| \notag \\
&= \left| d_{\mathcal{B}_0} \left( \mathbf{R}(\mathbf{x}_1)^\top \left( \mathbf{q} \!-\! \phi(\mathbf{x}_1) \right) \right) \!-\! d_{\mathcal{B}_0} \left( \mathbf{R}(\mathbf{x}_2)^\top \left( \mathbf{q} \!-\! \phi(\mathbf{x}_2) \right) \right) \right| \notag \\
&\leq \left\| \mathbf{R}(\mathbf{x}_1)^\top \left( \mathbf{q} - \phi(\mathbf{x}_1) \right) - \mathbf{R}(\mathbf{x}_2)^\top \left( \mathbf{q} - \phi(\mathbf{x}_2) \right) \right\|  \notag \\
&\leq \left\| \left[ \mathbf{R}(\mathbf{x}_1)^\top - \mathbf{R}(\mathbf{x}_2)^\top \right] \left( \mathbf{q} - \phi(\mathbf{x}_1) \right) \right\| + \notag \\
&\quad \left\| \mathbf{R}(\mathbf{x}_2)^\top \left( \phi(\mathbf{x}_2) - \phi(\mathbf{x}_1) \right) \right\|.
\label{eq: sdf_lipschitz_in_x}
\end{align}
where the first equality comes from the fact that $d_{\calB_0}$ is 1-Lipschitz \citep{fitzpatrick1980metric}.

For the first term in \eqref{eq: sdf_lipschitz_in_x}, we have 
\begin{multline*}
\left\| \left[ \mathbf{R}(\mathbf{x}_1)^\top - \mathbf{R}(\mathbf{x}_2)^\top \right] \left( \mathbf{q} - \phi(\mathbf{x}_1) \right) \right\|  \\
\leq L_{\mathbf{R}} \left\| \mathbf{x}_1 - \mathbf{x}_2 \right\| \cdot \left( \left\| \mathbf{q} \right\| + \left\| \phi(\mathbf{x}_1) \right\| \right).
\end{multline*}
Since $\calX$ is bounded by assumption and \( \mathcal{O}(t) \) is also bounded, cf. Lemma~\ref{lemma: obstacle_set_boundness}, there exist \( M_{\mathbf{q}}, M_{\phi} > 0 \) such that \( \left\| \mathbf{q} \right\| \leq M_{\mathbf{q}} \) and \( \left\| \phi(\mathbf{x}_1) \right\| \leq M_{\phi} \). Therefore:
\begin{multline*}
\left\| \left[ \mathbf{R}(\mathbf{x}_1)^\top - \mathbf{R}(\mathbf{x}_2)^\top \right] \left( \mathbf{q} - \phi(\mathbf{x}_1) \right) \right\| \\
\leq L_{\mathbf{R}} (M_{\mathbf{q}} + M_{\phi}) \left\| \mathbf{x}_1 - \mathbf{x}_2 \right\|.
\end{multline*}
For the second term in \eqref{eq: sdf_lipschitz_in_x}, since \( \left\| \mathbf{R}(\mathbf{x}) \right\| = 1 \), we have
\begin{align*}
\left\| \mathbf{R}(\mathbf{x}_2)^\top \left( \phi(\mathbf{x}_2) - \phi(\mathbf{x}_1) \right) \right\| \leq L_{\phi} \left\| \mathbf{x}_1 - \mathbf{x}_2 \right\|.
\end{align*}
Combining the bounds from the two terms, we conclude
\begin{equation}
    \left| h_{\mathbf{q}}(\mathbf{x}_1) - h_{\mathbf{q}}(\mathbf{x}_2) \right| \leq L_h \left\| \mathbf{x}_1 - \mathbf{x}_2 \right\|,
\label{eq: Lipschitz_constant_h}
\end{equation}
with $L_h = (L_{\mathbf{R}} (M_{\mathbf{q}} + M_{\phi}) + L_{\phi})$.
%
%
As established in \citet[Proposition 9.10]{rockafellar2009variational}, the pointwise infimum of a collection of Lipschitz functions is Lipschitz with a constant bounded by the maximum of the individual Lipschitz constants. Therefore, since \(h_{\mathbf{q}}(\mathbf{x})\) is uniformly Lipschitz with constant \(L_h\) and \(\mathcal{O}(t)\) is bounded, the infimum \(h(\mathbf{x}, t) = \inf_{\mathbf{q} \in \mathcal{O}(t)} h_{\mathbf{q}}(\mathbf{x})\) is Lipschitz with the same constant \(L_h\).

Furthermore, by Assumption~\ref{ass: unicycle_env_assumptions}, $h(\bfx, t)$ is Lipschitz in~$t$ with constant $B$. 
\qed
\end{proof}

Next, our goal is to show that for all \( (\bfx, t) \in \calX \times \bbR_{\geq 0}\), there exists a control input \( \mathbf{u} \in \mathcal{U} \) such that the CBF condition \eqref{eq: nonsmooth_cbf_condition} holds.

\begin{proposition}\longthmtitle{Sufficient Conditions for Signed Distance Function as a Nonsmooth Time-Varying Control Barrier Function}
\label{prop: sdf_as_cbf_general}
Under Assumptions~\ref{ass: unicycle_shape_assumptions} and~\ref{ass: unicycle_env_assumptions}, the function \( h(\mathbf{x}, t) \) defined in~\eqref{eq: sdf_to_cbf} satisfies the nonsmooth CBF condition~\eqref{eq: nonsmooth_cbf_condition} for the unicycle dynamics~\eqref{eq: unicycle_model}, provided that $\inf_{(\mathbf{0}, z) \in \partial h(\mathbf{x}, t)} z \ge -\alpha_h(h(\bfx, t))$. 
\end{proposition}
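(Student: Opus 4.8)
The plan is to verify the nonsmooth CBF condition \eqref{eq: nonsmooth_cbf_condition} directly, exploiting the structure of the unicycle dynamics \eqref{eq: unicycle_model}, where $f(\bfx) = \boldsymbol{0}$ and $g(\bfx)\bfu = [v\cos\theta,\, v\sin\theta,\, \omega]^\top$. The key observation is that for the unicycle, the drift term vanishes, so the inner expression in \eqref{eq: nonsmooth_cbf_condition} reduces to $\bfy^\top g(\bfx)\bfu + z$, and the control enters through $g(\bfx)$ whose columns we can analyze explicitly. First I would decompose any generalized gradient element $(\bfy, z) \in \partial h(\bfx, t)$, writing $\bfy = (\bfy_p, y_\theta) \in \bbR^2 \times \bbR$ for the position and orientation components. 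Then $\bfy^\top g(\bfx)\bfu = v\, \bfy_p^\top [\cos\theta, \sin\theta]^\top + \omega\, y_\theta$, so the quantity to be controlled is linear in $(v,\omega)$.

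The core argument splits into two cases depending on whether the position components of the generalized gradient can be made to vanish uniformly. If there exists $(\bfy, z) \in \partial h(\bfx,t)$ with $\bfy_p \neq \boldsymbol{0}$ — more precisely, if $\inf_{(\bfy,z)\in\partial h}\|\bfy_p\| > 0$, or if we can choose $v$ large enough in magnitude with the right sign — then by selecting $v$ to align with (the worst-case) $\bfy_p^\top[\cos\theta,\sin\theta]^\top$ and making $|v|$ sufficiently large, we can dominate the bounded terms: $z$ is bounded by $B$ via Assumption~\ref{ass: unicycle_env_assumptions}, $y_\theta$ is bounded (since $h$ is Lipschitz by Lemma~\ref{lemma: cbf_lipschitz}, its generalized gradient is bounded), and $\alpha_h(h(\bfx,t))$ is bounded on the compact domain. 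Hence the supremum over $\bfu \in \mathcal{U}$ — assuming $\mathcal{U}$ permits sufficiently large $v$, which is where the "provided that" hypothesis becomes essential for ruling out the degenerate subcase — makes the infimum over $\partial h$ exceed $-\alpha_h(h(\bfx,t))$. The degenerate case is exactly when the generalized gradient contains elements whose position part is zero, i.e., $(\boldsymbol{0}, z) \in \partial h(\bfx,t)$; here no choice of $v$ helps that particular element, and $\omega$ does not appear either (since the $\theta$-component of $g$ acting on $(\boldsymbol{0},z)$ contributes only through $y_\theta = 0$ in that element... wait, one must be careful: $(\boldsymbol{0},z)$ means $\bfy_p = \boldsymbol{0}$ but $y_\theta$ could still be nonzero). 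I would handle this by noting that when $\bfy_p = \boldsymbol{0}$ we still have the $\omega y_\theta$ term available, but the hypothesis $\inf_{(\boldsymbol{0},z)\in\partial h} z \ge -\alpha_h(h(\bfx,t))$ directly guarantees the needed inequality for precisely those problematic elements regardless of control, since $\bfy^\top g(\bfx)\bfu = \omega y_\theta$ can be taken to have any sign or, if $y_\theta = 0$ too, the bare $z$ term already suffices by assumption.

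Concretely, the steps in order are: (1) substitute the unicycle dynamics and rewrite the inner quantity as $v\,\bfy_p^\top(\cos\theta,\sin\theta) + \omega y_\theta + z$; (2) establish that $\partial h(\bfx,t)$ is a compact set (convex hull of a bounded set of limiting gradients, using Lipschitzness from Lemma~\ref{lemma: cbf_lipschitz}) and that $\alpha_h \circ h$ is bounded on $\calX \times \bbR_{\ge0}$ (using boundedness of $h$, which follows from obstacle set boundedness, Lemma~\ref{lemma: obstacle_set_boundness}); (3) partition $\partial h(\bfx,t)$ into the elements with $\bfy_p = \boldsymbol{0}$ and those with $\bfy_p \ne \boldsymbol{0}$; for the former, invoke the hypothesis on $\inf_{(\boldsymbol{0},z)} z$ together with a free choice of $\omega$ to cover any $y_\theta$; (4) for the latter, pick $\bfu$ (with $v$ of appropriate sign and magnitude, assuming $\mathcal{U}$ is large enough, and $\omega$ chosen accordingly) so that $v\,\bfy_p^\top(\cos\theta,\sin\theta) + \omega y_\theta$ outgrows $B + \sup|\alpha_h \circ h|$ uniformly over that portion of the gradient set; (5) combine to conclude the $\sup_{\bfu}\inf_{(\bfy,z)}$ inequality \eqref{eq: nonsmooth_cbf_condition}.

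The main obstacle I anticipate is handling the $\inf$ over $\partial h(\bfx,t)$ correctly when the generalized gradient set contains elements with differing directions of $\bfy_p$: a single control $\bfu$ must work against the worst-case element simultaneously, so I cannot pick $v$ adapted to each $\bfy_p$ separately. The resolution is that all elements of $\partial h$ with $\bfy_p \ne \boldsymbol{0}$ lie in the convex hull of limiting gradients of the SDF $h_{\bfq}$, whose gradients (where they exist) have a controlled structure — in particular, the position gradient of $h_{\bfq}$ points along $\bfR(\bfx)\nabla d_{\calB_0}$ with unit-norm direction — so the sign of $\bfy_p^\top(\cos\theta,\sin\theta)$ is coherent across the active gradients near a contact point, or else the problematic case collapses to $\bfy_p = \boldsymbol{0}$ which is exactly what the hypothesis covers. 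Making this coherence argument rigorous, likely via the explicit form of $\nabla h_{\bfq}$ and a case analysis on whether the nearest obstacle point lies in the direction of motion, is the technical heart of the proof.
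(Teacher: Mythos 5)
Your proposal follows essentially the same route as the paper's proof: substitute the unicycle dynamics so the inner expression becomes $a_v v + a_\omega \omega + z$ with $a_v=\bfy^\top[\cos\theta,\sin\theta,0]^\top$, $a_\omega = y_\theta$, bound $|z|\le B$ via Assumption~\ref{ass: unicycle_env_assumptions} and $\|\bfy\|\le L_h$ via Lemma~\ref{lemma: cbf_lipschitz}, choose a large control aligned with the gradient when it is nonzero, and fall back on the stated hypothesis when it is zero. Two remarks on where you deviate. First, your case split is on $\bfy_p=\boldsymbol{0}$ versus $\bfy_p\neq\boldsymbol{0}$, whereas the hypothesis (and the paper's Case 2) concerns elements with the \emph{full} spatial gradient zero, $(\boldsymbol{0},z)\in\partial h(\bfx,t)$ with $\boldsymbol{0}\in\bbR^3$; you catch this mid-argument and correctly route the subcase $\bfy_p=\boldsymbol{0},\,y_\theta\neq 0$ through the $\omega y_\theta$ term, which coincides with the paper's Case 1 (there $a_\omega\neq0$), so no harm done, but the cleaner split is on $\bfy=\boldsymbol{0}$ versus $\bfy\neq\boldsymbol{0}$. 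Second, the issue you label the ``technical heart''---that a single $\bfu$ must beat the infimum over all of $\partial h(\bfx,t)$ simultaneously---is not addressed in the paper at all: its proof simply takes $(v,\omega)=\beta(a_v,a_\omega)/(a_v^2+a_\omega^2)$ element-wise. If you do want that step airtight, the SDF-gradient ``coherence'' analysis you sketch is heavier than necessary: $\partial h(\bfx,t)$ is compact and convex and the expression $\bfy^\top(f(\bfx)+g(\bfx)\bfu)+z$ is affine in $\bfu$ and in $(\bfy,z)$, so a minimax argument (e.g.\ Sion) lets you exchange $\sup_{\bfu}$ and $\inf_{(\bfy,z)}$ and reduces the claim to the per-element computation both you and the paper already carry out; also note you only need the pointwise inequality against $-\alpha_h(h(\bfx,t))$ at the given $(\bfx,t)$, so no uniform bound on $\alpha_h\circ h$ over the domain is required.
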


\begin{proof}
Using the unicycle dynamics~\eqref{eq: unicycle_model}, for any \( (\bfy,z) \in \partial h(\mathbf{x}, t) \), the expression inside the infimum in \eqref{eq: nonsmooth_cbf_condition} becomes
\begin{align}
\label{eq: unicycle_nonsmooth_cbf_conditions}
&\bfy^\top \left( f(\mathbf{x}) + g(\mathbf{x}) \mathbf{u} \right) + z =  \\
&\bfy^\top \left( \begin{bmatrix} \cos\theta \\ \sin\theta \\ 0 \end{bmatrix} v + \begin{bmatrix} 0 \\ 0 \\ 1 \end{bmatrix} \omega \right) + z \nonumber = a_v v + a_\omega \omega + z, \label{eq: h_dot_expression}
\end{align}
where
\begin{align*}
a_v &= \bfy^\top \begin{bmatrix} \cos\theta \\ \sin\theta \\ 0 \end{bmatrix}, \quad a_\omega = \bfy^\top \begin{bmatrix} 0 \\ 0 \\ 1 \end{bmatrix}.
\end{align*}
From Lemma~\ref{lemma: cbf_lipschitz}, \( \| y \| \leq L_h \). It follows that $| a_v |, \; | a_\omega | \leq L_h$. We now consider two cases:

\textbf{Case 1:} $(\boldsymbol{0}, z) \notin \partial h(\bfx, t)$.
In this case, \( \bfy \neq \boldsymbol{0} \), so at least one of \( a_v \) or \( a_\omega \) is non-zero. Define $\varepsilon = a_v^2 + a_\omega^2 > 0$ and choose
\begin{equation}
\label{eq: control_inputs_case1}
v = \frac{a_v}{\varepsilon} \beta, \quad \omega = \frac{a_\omega}{\varepsilon} \beta,
\end{equation}
where \( \beta > 0 \) is a scaling factor. Then, since \( z \in [ -B, B ] \) (cf. Assumption~\ref{ass: unicycle_env_assumptions}), we get 
\begin{align}
\label{eq: non_smooth_cbf_bound}
a_v v + a_\omega \omega + z &= \left( \frac{a_v^2 + a_\omega^2}{\varepsilon} \right) \beta + z \geq \beta - B.
\end{align}
The scaling factor $\beta$ can be chosen sufficiently large to ensure that the nonsmooth CBF condition~\eqref{eq: nonsmooth_cbf_condition} is satisfied. 

\textbf{Case 2:} $(\boldsymbol{0}, z) \in \partial h(\bfx, t)$. In this case, the spatial gradient of \( h \) vanishes, and the expression in \eqref{eq: unicycle_nonsmooth_cbf_conditions} simplifies to:
\[
\bfy^\top \left( f(\mathbf{x}) + g(\mathbf{x}) \mathbf{u} \right) + z = z.
\]
The nonsmooth CBF condition~\eqref{eq: nonsmooth_cbf_condition} holds in this case because of the hypotheses of the statement. 
%
%
\qed
\end{proof}

Regarding the requirement of Proposition~\ref{prop: sdf_as_cbf_general}, intuitively, when \( \bfy = \boldsymbol{0} \), the spatial gradient does not provide directional guidance for the robot, and the safety condition depends solely on the temporal evolution of \( h \), i.e., the behavior of the obstacles. The requirement $\inf_{(\mathbf{0}, z) \in \partial h(\mathbf{x}, t)} z \ge -\alpha_h(h(\bfx, t))$ ensures that the nonsmooth CBF condition is satisfied in this case.

In practice, we need to ensure that the control inputs \( v \) and \( \omega \) satisfy the control constraints \( | v | \leq v_{\max} \) and \( | \omega | \leq \omega_{\max} \). Based on \eqref{eq: control_inputs_case1} and \eqref{eq: non_smooth_cbf_bound}, this means that the bound $B$ on the partial time derivative of the distance function must be compatible with the control bounds to ensure the inequality \eqref{eq: nonsmooth_cbf_condition} holds. In other words, to guarantee safety, the obstacles must move at speeds compatible with the motion capabilities of the robot.

\subsection{Sensor-Based CBF Sample Selection}
\label{sec: unicycle_dr_cbf_samples}

As the robot navigates through unknown and dynamic environments relying on noisy LiDAR measurements, precisely determining the obstacle set \( \mathcal{O}(t) \) and thus computing \( h(\mathbf{x}, t) \) is not feasible. Therefore, we leverage the distributionally robust CBF formulation from Sec.~\ref{sec: dro_cbf_navigation} and obtain samples \( \{ \boldsymbol{\xi}_i \}_{i=1}^N \), where recall
\begin{align*}
    \bfxi = [\bfF^\top(\bfx)\nabla_{\bfx} h(\bfx, t), \; \alpha_h(h(\bfx, t)), \; \frac{\partial h(\bfx, t)}{\partial t}], 
\end{align*}
directly from the distance measurements \( \boldsymbol{\eta}(\phi(\mathbf{x}), \mathbf{R}(\bfx)) = [\eta_1(\phi(\mathbf{x}), \mathbf{R}(\bfx)), \dots, \eta_K(\phi(\mathbf{x}), \mathbf{R}(\bfx))]^\top \). Note that the system dynamics $\bfF(\bfx)$ and $\calK_{\infty}$ function $\alpha_h$ are assumed to be known and deterministic.

In practical robotic systems, sensor measurements, such as LiDAR readings, are collected at discrete intervals determined by the sensor's frequency. These measurements must be processed, along with state estimation and control synthesis, within the time constraints of the control loop to ensure real-time applicability.

To account for state estimation uncertainty, we consider \( M \) samples of the robot's estimated pose, denoted as \( \{ \mathbf{x}^{(j)} \}_{j=1}^M \), where each sample represents a possible true pose of the differential-drive robot given its localization error distribution. For each sample \( \mathbf{x}^{(j)} \), we have \( K \) corresponding LiDAR measurements in the robot's local frame, \( \{ \boldsymbol{\eta}_i^{(j)} \}_{i=1}^K \). These measurements are transformed into the global frame using the estimated pose, yielding LiDAR hit $\mathbf{q}_i^{(j)}(t) \in \bbR^2$ with $i$-th LiDAR measurement of pose $\bfx^j$ at time $t$.
By aggregating these transformed measurements, we construct a comprehensive set of LiDAR points in the global frame:
\begin{equation}
\label{eq: aggregated_lidar_points}
\mathcal{P}(t) = \left\{ \mathbf{q}_i^{(j)}(t) \mid i = 1, \dots, K;\; j = 1, \dots, N \right\} \subset \mathbb{R}^2,
\end{equation}
resulting in a total of \( N \times K \) points. This set effectively captures the combined uncertainties in both state estimation (localization) and sensor measurements. To account for dynamic obstacles, we may obtain estimates of the time derivatives \( \{ \frac{\partial \mathbf{q}_{i}^j}{\partial t}\} \)
%
%
using a radar sensor, Doppler LiDAR, or a LiDAR velocity estimation algorithm \citep{Lidar_velocity_estimate}.

Since we cannot compute \( h(\mathbf{x}, t) \) exactly, we use the available LiDAR points set $\calP(t)$ to approximate the barrier function and its gradients. From this aggregated data, we select \( N \) samples that minimize the following criterion:
\begin{align}
\label{eq: sample_selection_criterion}
\frac{\partial}{\partial t} &d_{\mathcal{B}_0}\left( \mathbf{R}(\mathbf{x})^\top \left( \mathbf{q}_i(t) - \phi(\mathbf{x}) \right) \right) + \notag \\
&\alpha_h\left( d_{\mathcal{B}_0}\left( \mathbf{R}(\mathbf{x})^\top \left( \mathbf{q}_i(t) - \phi(\mathbf{x}) \right) \right) \right),
\end{align}
%
%
for \( i = 1, \dots, K \). Note that the generalized gradient $\partial h_i(\mathbf{x}, t)$ can be computed correspondingly by utilizing the robot SDF $d_{\calB_0}$. This criterion effectively identifies obstacle points where the combined effect of the barrier function's rate of change and its current value is most critical, highlighting the samples where the safety constraint, as defined in \eqref{eq:tvcbc_define}, is closest to being violated.

With this procedure, given each $\bfx \in \calX$, we have available samples $\{\bfxi_i\}_{i \in [N]}$ for the vector $\bfxi$.

\begin{remark}\longthmtitle{Sample selection for static environments}
{\rm
If $\frac{\partial h}{\partial t} = 0$, the selection of the $N$ samples reduces to finding the minimum $N$ values of $h_i(\bfx)$, representing the distance of $N$ closest detected obstacle points to the robot. This is illustrated in Fig.~\ref{fig:dr_safety}.}  \demo
\end{remark}

With the sample collection strategy for $\{\bfxi_i\}_{i \in [N]}$ and the CLF design in~\eqref{eq: clf_definition}, we formulate the CLF-DR-CBF optimization problem in \eqref{eq: clc_cbc_dr_result} to synthesize control inputs for the unicycle robot.

The differential-drive specific CLF is combined with the DR-CBF constraint, which is based on the CBF samples obtained from the range sensor measurements and estimated robot poses. This approach enables safe robot navigation in unknown dynamic environments, while directly utilizing the sensor measurements to ensure safety.

\subsection{Application to Other Robotic Systems}


Our CLF-DR-CBF formulation can be applied to other control-affine robotic systems described by \eqref{eq: dynamic}.
To apply this formulation to a different system, two key components are required: a CLF 
\(V(\bfx)\) and a CBF $h(\bfx,t)$:
\begin{itemize}
    \item \textbf{CLF:} $V(\bfx)$ should encode the stability with respect to a desired equilibirum. In many cases, it is more practical to design a CLF for a subset of the state space, such as position, rather than the full state, particularly for underactuated systems like unicycles;
    \item \textbf{CBF:} $h(\bfx, t)$ defines the safety specifications based on the robot's geometry and its interaction with the environment. The CBF samples can be obtained, for example, from onboard sensory data.
\end{itemize}

Note that systematic approaches exist for constructing CLFs, including sum-of-squares programming~\citep{jarvis_clf_sos,HD-CJ-HZ-AC:24} and neural network-based approaches~\citep{Chang2019NeuralLC, boffi2021learning}.  Additionally, specific classes of systems, such as linear systems and differentially flat systems, are more tractable in this context, as discussed in~\citep{mestres2024safe}. Once these components are defined, the CLF-DR-CBF formulation can be directly applied to synthesize a distributionally robust safe and stable controller.

%
%

For instance, a table-top manipulator evolves in a high-dimensional configuration space, where safety constraints must prevent both obstacle collisions and self-collisions. A CLF can be designed to stabilize the manipulator's end-effector toward a target pose in task space. This typically involves defining a Lyapunov function over task-space error (e.g., pose error in SE(3)) and mapping its gradient back to joint space via the manipulator's Jacobian. Alternatively, inverse kinematics can be used to convert the task-space target into a reference joint configuration, around which a CLF is constructed. The CBF, on the other hand, can be constructed using signed distance functions \citep{koptev_neural_jsdf_2022, li2024representing}, which encode collision avoidance constraints. CBF samples can similarly be obtained from onboard sensory data (e.g., images) and robot state estimations.

\section{Evaluation}
\label{sec: eva}

\begin{figure}[t]
  \centering
  \subcaptionbox{Static environment\label{fig:3a}}{\includegraphics[width=0.49\linewidth]{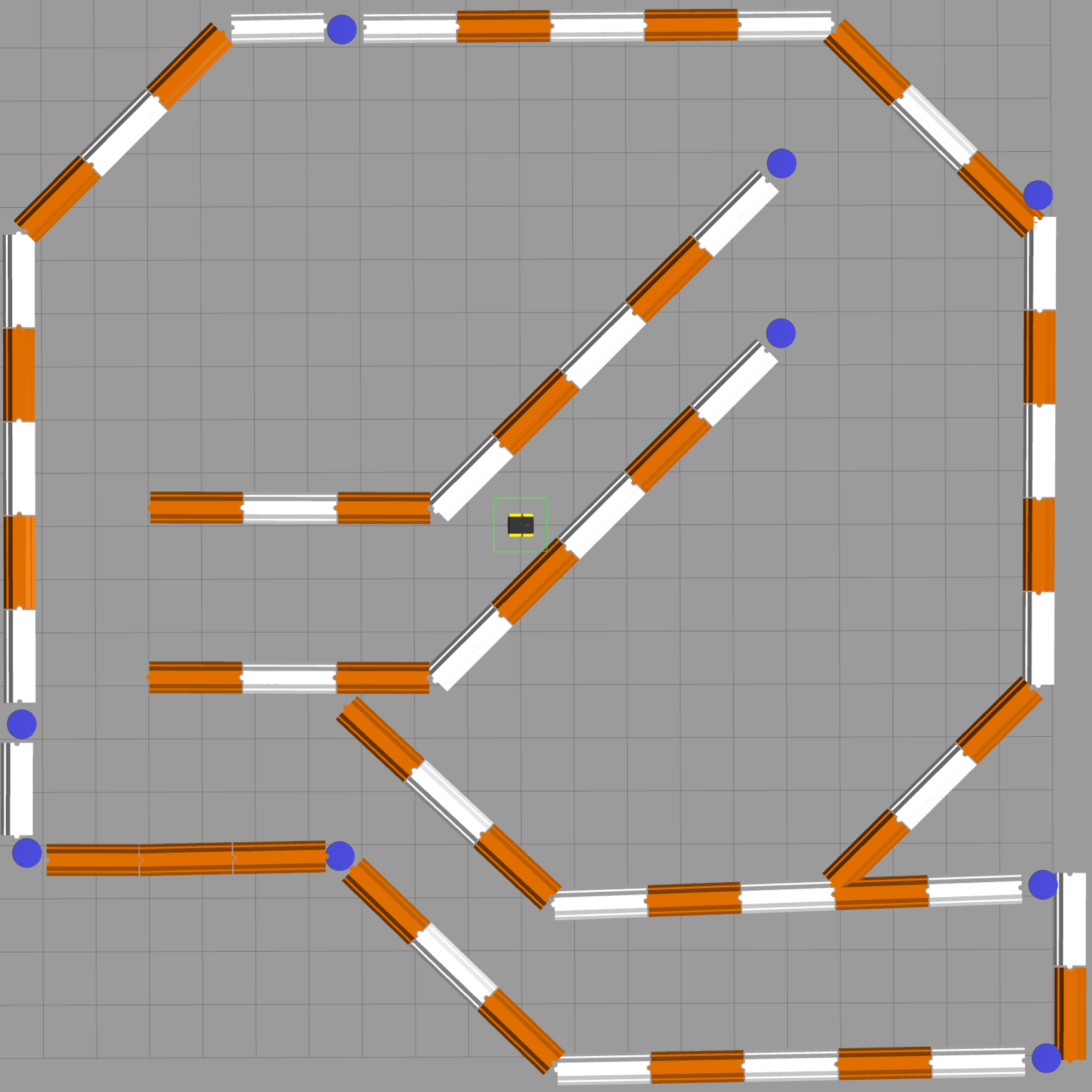}}%
  \hfill%
  \subcaptionbox{Dynamic environment\label{fig:3b}}{\includegraphics[width=0.49\linewidth]{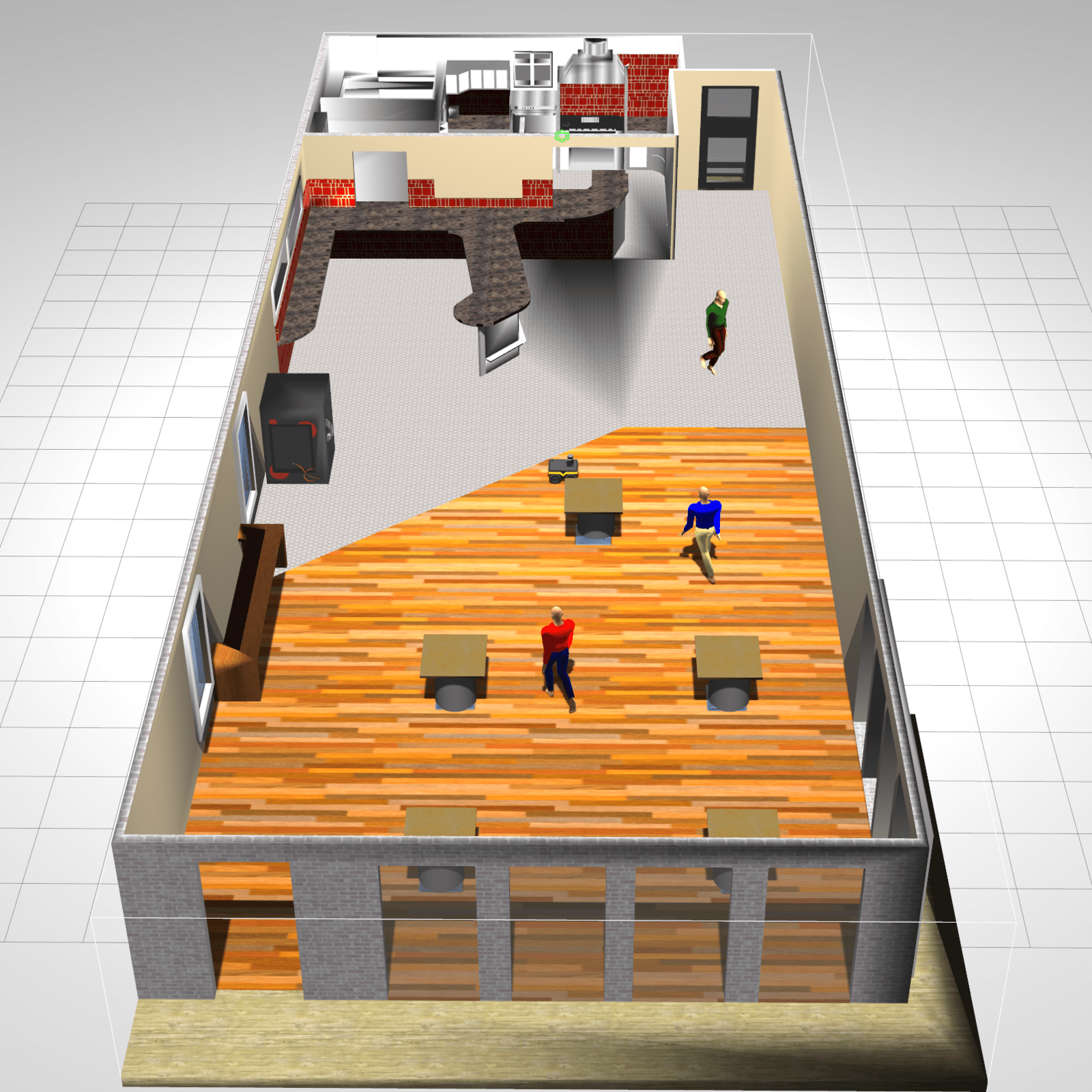}}\\
  \caption{Simulated environments in Gazebo.}
\end{figure}

In this section, we evaluate our CLF-DR-CBF QP formulation through several simulation and real-world experiments.

We compare our approach with two other safe control strategies, the nominal CLF-CBF QP in \eqref{eq: clf_cbf_qp} and a CLF-Gaussian Process (GP)-CBF second-order cone program (SOCP) \citep{Long2022RAL}. The nominal CLF-CBF QP approach utilizes the closest LiDAR point to define a single CBF $h(\bfx, t)$ and its gradients at each time step.
%
%
In the CLF-GP-CBF SOCP method, a real-time GP-SDF model \citep{log-gpis} of the unknown environment is constructed using LiDAR data, from which the CBF, its gradient, and uncertainty information are determined.  While the GP-SDF mapping process contributes to safety by continuously updating the environment representation, it also incurs computational overhead due to the real-time update of the GP-SDF model. For a fair evaluation, we solve each of the optimization programs to generate control signals using the Interior Point Optimizer through the CasADi framework \citep{Andersson2019}.

In the following simulations and experiments, a consistent set of parameters is utilized to ensure reproducibility of the results. The linear velocity is constrained in $[-1.2, 1.2]$ m/s and the angular velocity is limited within $[-1, 1]$ rad/s. The nominal control input $\bfk(\bfx)$ is set to $[1.2, 0]^\top$, directing the robot to move forward at $1.2$ m/s. While we use a constant nominal controller, note that our formulation supports more complex, state-dependent nominal controllers. The scaling factor is $\lambda = 50$. Table~\ref{table: parameters} summarizes other parameter values. 

\begin{table}[ht]
\caption{Simulation and experiment parameters. The class $\calK$ function $\alpha_V$ for CLF and the class $\calK_{\infty}$ for CBF are assumed to be linear. The parameters $k_v$ and $k_{\omega}$ are control gains for linear and angular velocities, respectively, $\epsilon$ the risk tolerance of the CLF-DR-CBF QP formulation, and $N$ the DR-CBF sample size.}
\label{table: parameters}
\centering
\begin{tabular}{|l|c|c|c|c|c|c|}
\hline
\textbf{Parameters} & $\alpha_V$ & $\alpha_h$ &  $k_v$ & $k_{\omega}$  & $\epsilon$ & $N$  \\ \hline
\textbf{Value}  & 1.0    & 1.5    & 0.05    & 0.4    & 0.1    & 5   \\ \hline
\end{tabular}
\end{table}

The layout of this section is as follows. Sec.~\ref{sec: eva_static_sim} presents simulation results and compares with the two baseline approaches in static environments (e.g., Fig.~\ref{fig:3a}). In Sec.~\ref{sec: eva_dynamic_sim}, we evaluate our approach in dynamic Gazebo environments \citep{gazebo} with pedestrians, shown in Fig.~\ref{fig:3b}. Finally, in Sect.~\ref{sec: eva_real_world}, we test our CLF-DR-CBF QP formulation in dynamic cluttered real-world environments. In all results, the $A^*$ algorithm is employed for path planning, operating at a frequency of $5$ Hz. Our CLF-DR-CBF QP formulation is used for real-time safe navigation, running at $50$ Hz.


\subsection{Simulated Static Environments}
\label{sec: eva_static_sim}

\begin{figure*}[ht]
\centering
\subfloat[Robot Shape]{\includegraphics[width=0.23\linewidth]{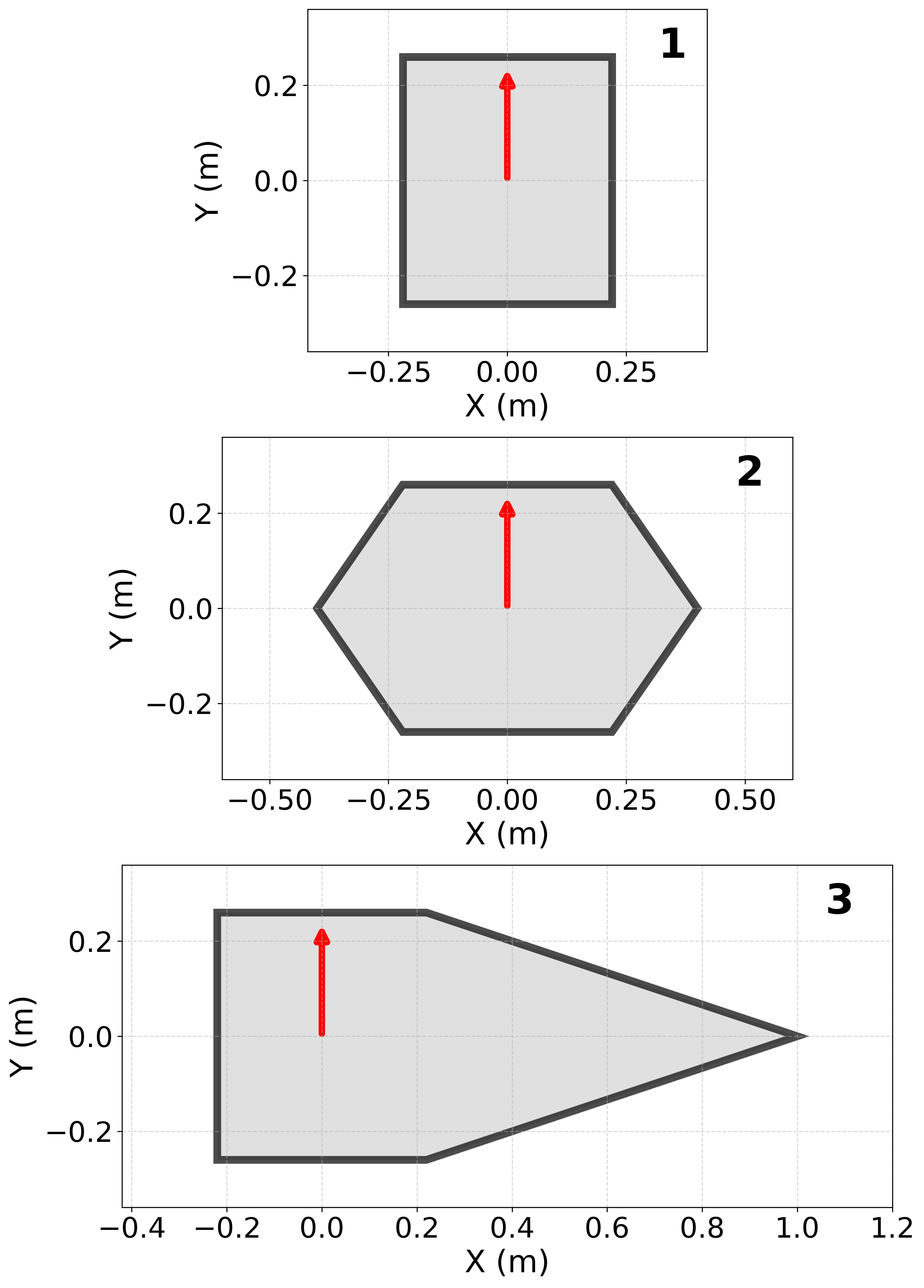}\label{fig:robot_shape_visual}}
\hfill
\subfloat[Trajectory under different robot shapes]{\includegraphics[width=0.24\linewidth]{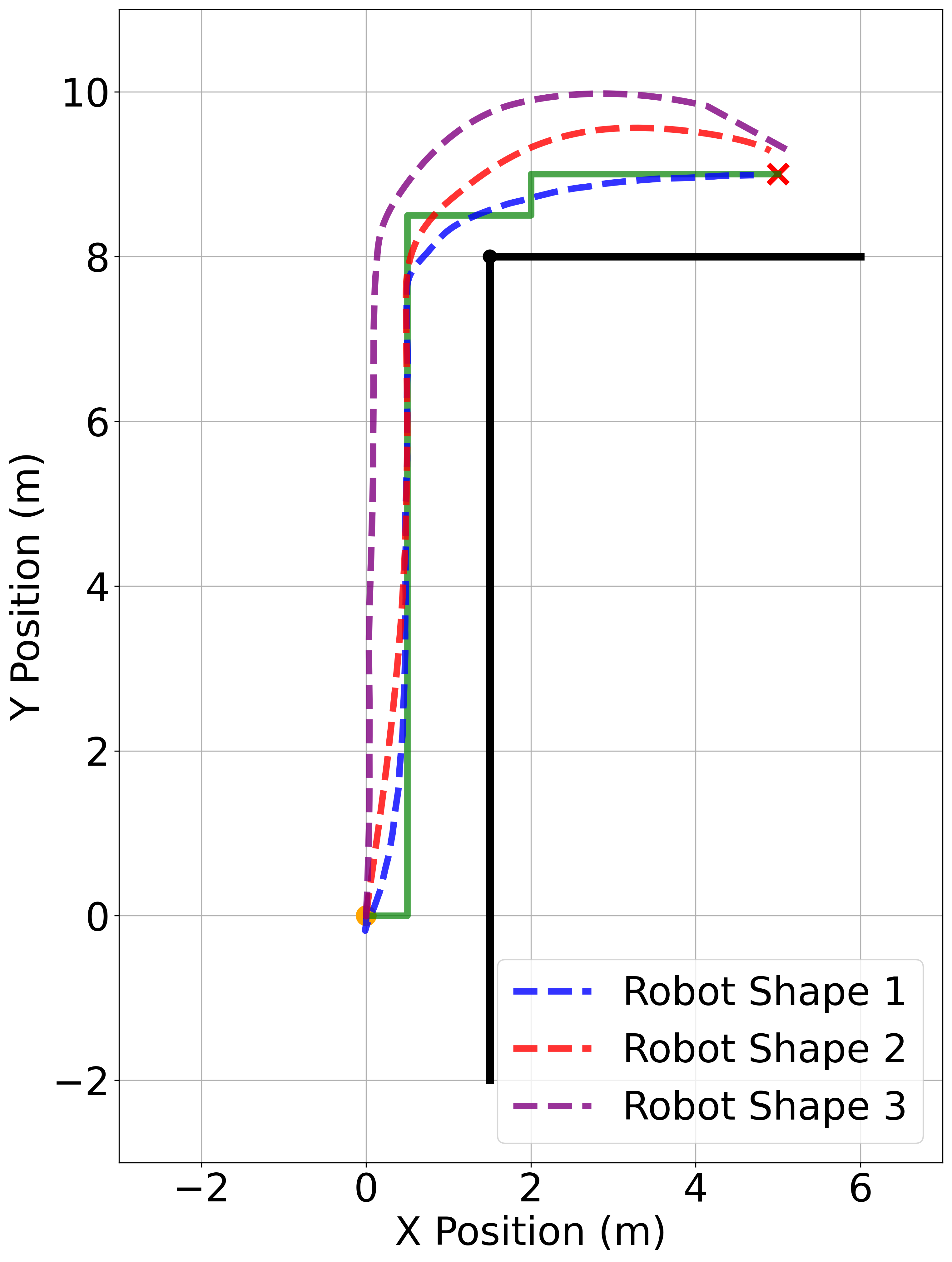}\label{fig:robot_shape_compare}}
\hfill
\subfloat[Trajectory under varying LiDAR Noise]{\includegraphics[width=0.24\linewidth]{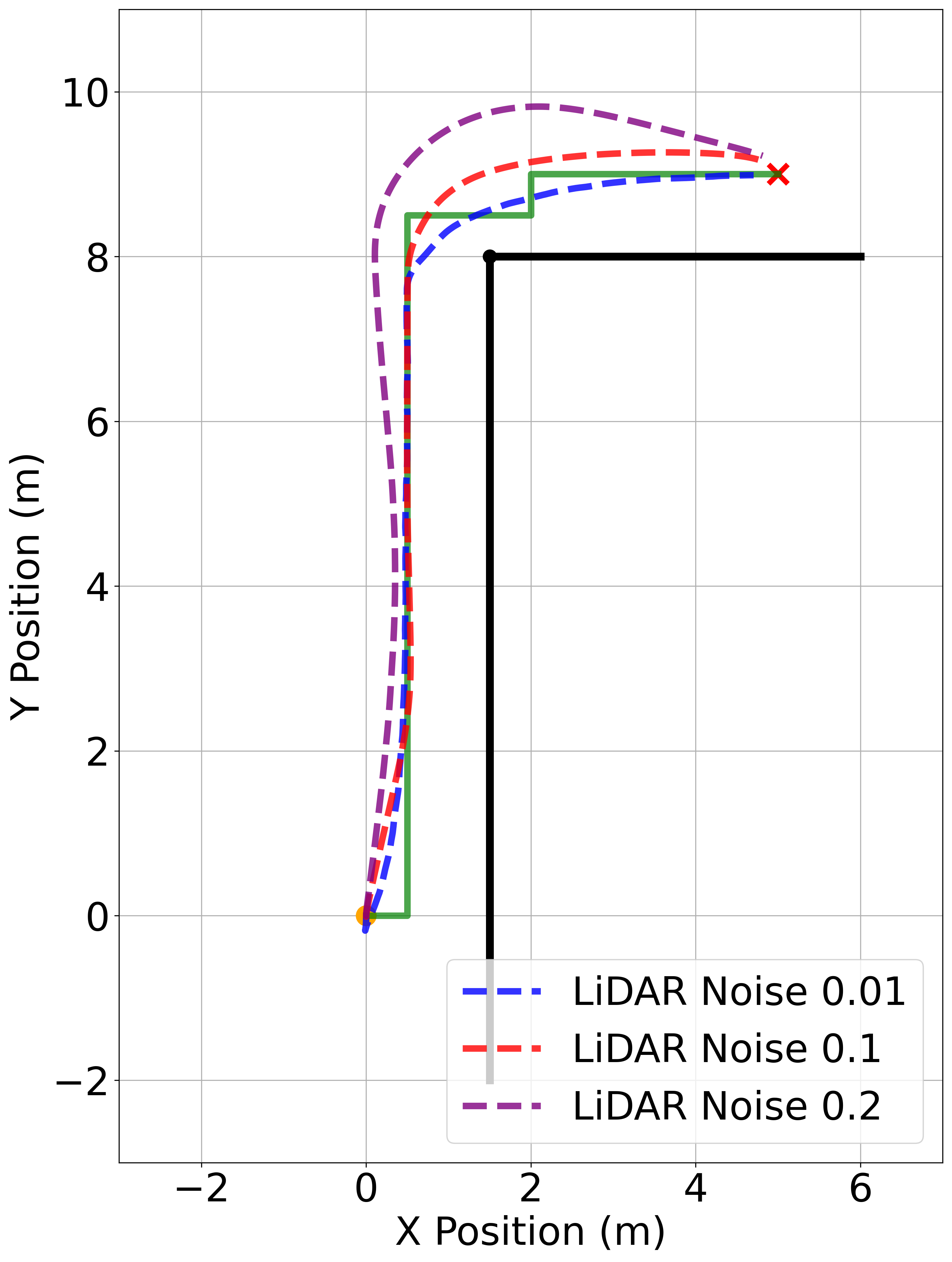}\label{fig:lidar_error_compare}}
\hfill
\subfloat[Trajectory under varying Localization Noise]{\includegraphics[width=0.24\linewidth]{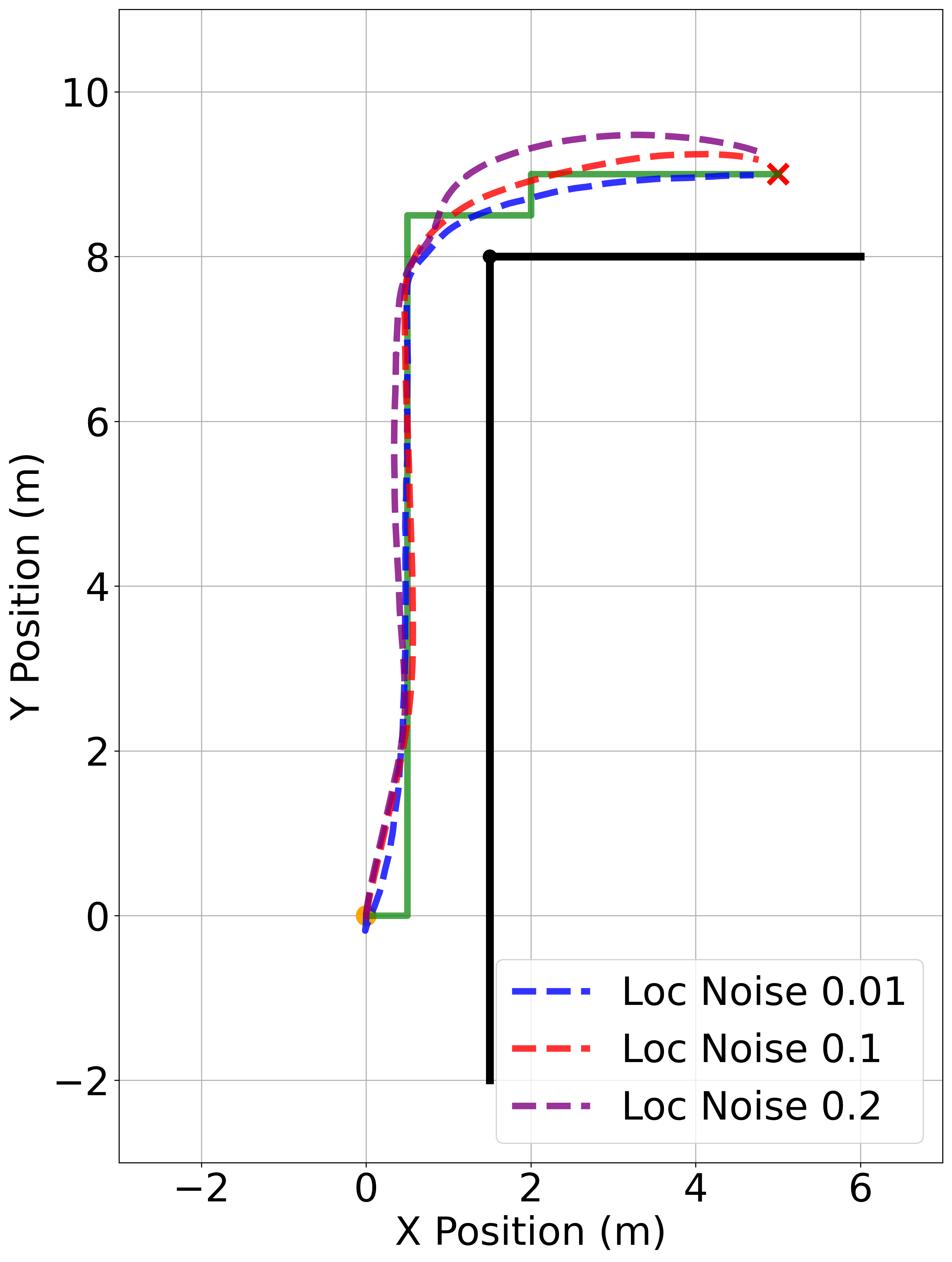}\label{fig:localization_error_compare}}\\
\includegraphics[width=0.75\linewidth,,trim={0 8mm 0 0mm},clip]{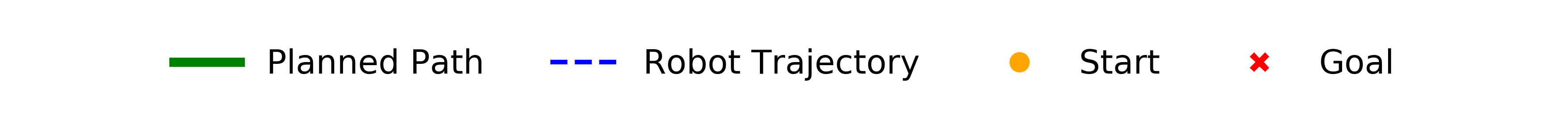}
\caption{Evaluation of robot navigation performance under varying conditions. The simulations demonstrate the system's behavior across different robot shapes and noise conditions, with (a) showing the tested robot shapes, (b) illustrating the impact of shape variation on navigation, and (c-d) analyzing the effects of sensor and localization uncertainties on trajectory execution. For (c) and (d), Shape 1 from (a) is used as the test case.}
\label{fig:sim_robot_trajectories}
\vspace{-1ex}
\end{figure*}

\begin{table*}[htbp]
\caption{Performance metrics for static environments under varying LiDAR and localization noise. The metrics include stuck rate and collision rate out of 1000 trials and tracking error (m) (mean $\pm$ std). The stuck rate reflects cases of infeasibility or the robot being trapped in local optima. The robot starts from the origin, and goals are placed at least 10 meters away in Figure~\ref{fig:3a}.}
\label{tab:static_env_metrics}
\centering
\resizebox{\linewidth}{!}{%
\begin{tabular}{|c|c|c|c|c|c|}
\hline
\textbf{LiDAR Noise} & \textbf{Localization Noise} & \textbf{Method} & \textbf{Stuck Rate} & \textbf{Collision Rate} & \textbf{Tracking Error (m)} \\ \hline
\multirow{6}{*}{$\sigma = 0.001$} & \multirow{3}{*}{$\sigma = 0.01$} & CLF-DR-CBF QP & 0.0 & 0.0 & 1.33 $\pm$ 0.19 \\ 
 &  & CLF-GP-CBF SOCP & 0.2 & 0.0 & 1.38 $\pm$ 0.31 \\ 
 &  & Nominal CLF-CBF QP & 0.0 & 0.2 & 1.26 $\pm$ 0.20 \\ \cline{2-6}
 & \multirow{3}{*}{$\sigma = 0.05$} & CLF-DR-CBF QP & 0.1 & 0.0 & 1.88 $\pm$ 0.35 \\ 
 &  & CLF-GP-CBF SOCP & 0.3 & 15.8 & 2.16 $\pm$ 0.47 \\ 
 &  & Nominal CLF-CBF QP & 0.1 & 15.5 & 1.92 $\pm$ 0.34 \\ \hline
\multirow{6}{*}{$\sigma = 0.05$} & \multirow{3}{*}{$\sigma = 0.01$} & CLF-DR-CBF QP & 0.0 & 0.0 & 1.77 $\pm$ 0.38 \\ 
 &  & CLF-GP-CBF SOCP & 25.1 & 0.2 & 1.97 $\pm$ 0.52 \\ 
 &  & Nominal CLF-CBF QP & 1.7 & 4.4 & 1.82 $\pm$ 0.41 \\ \cline{2-6}
 & \multirow{3}{*}{$\sigma = 0.05$} & CLF-DR-CBF QP & 1.8 & 0.7 & 2.22 $\pm$ 0.57 \\ 
 &  & CLF-GP-CBF SOCP & 31.3 & 13.9  & 2.58 $\pm$ 0.78 \\ 
 &  & Nominal CLF-CBF QP & 3.3 & 25.8 & 2.23 $\pm$ 0.59 \\ \hline
\multirow{6}{*}{$\sigma = 0.1$} & \multirow{3}{*}{$\sigma = 0.01$} & CLF-DR-CBF QP & 1.2 & 0.0 & 2.31 $\pm$ 0.66 \\ 
 &  & CLF-GP-CBF SOCP & 60.7 & 0.0 & 2.55 $\pm$ 0.79 \\ 
 &  & Nominal CLF-CBF QP & 4.7 & 7.7 & 2.52 $\pm$ 0.72 \\ \cline{2-6}
 & \multirow{3}{*}{$\sigma = 0.05$} & CLF-DR-CBF QP & 6.6 & 1.8 & 2.61 $\pm$ 0.75 \\ 
 &  & CLF-GP-CBF SOCP & 65.4 & 22.0 & 2.78 $\pm$ 0.91 \\ 
 &  & Nominal CLF-CBF QP & 8.9 & 58.3 & 2.55 $\pm$ 0.69 \\ \hline
\end{tabular}}
\end{table*}


The first set of simulations aims to validate the robustness and adaptability of the proposed CLF-DR-CBF QP formulation in static environments.

\textbf{Hypothesis:}
Our CLF-DR-CBF QP method can ensure safe navigation by dynamically adjusting to varying robot shapes and compensating for sensor and localization noise.

\textbf{Setup:}  
The robot is tasked to follow the planned path while avoiding obstacles. Gaussian noise with varying standard deviations (\( \sigma \)) is added to the LiDAR measurements, while the localization error is modeled as a Gaussian random vector with varying standard deviation levels as well. Diverse robot shape geometries (Figure~\ref{fig:robot_shape_visual}) are considered to evaluate the method's adaptability. Unless otherwise noted, the default robot shape is shape 1, representing the original Jackal robot. The trajectories are analyzed under different conditions to assess safety, adaptability, and smooth navigation.

\textbf{Results and discussion:}
Figure~\ref{fig:robot_shape_compare} demonstrates the trajectories for each robot shape. The results show that our approach dynamically adjusts the robot's path based on its geometry, ensuring safety by maintaining adequate clearance from obstacles while achieving smooth navigation. For symmetric shapes like robot shape 1 (original Jackal robot), the executed path is closer to the obstacle boundary. In contrast, for highly asymmetric shapes like robot shape 3, the executed path deviates more significantly to account for the robot's geometry to ensure safety. Unless otherwise noted, the simulations presented in the following are performed using robot shape 1.

Figures~\ref{fig:lidar_error_compare} and~\ref{fig:localization_error_compare} explore the effects of sensor and localization noise on navigation performance. In Figure~\ref{fig:lidar_error_compare}, our CLF-DR-CBF QP method successfully compensates for sensor inaccuracies, maintaining safety even at higher noise levels (\( \sigma = 0.2 \)). Similarly, Figure~\ref{fig:localization_error_compare} evaluates the impact of localization noise on trajectory execution, where the robot effectively adapts to inaccuracies in state estimation, showcasing the robustness of our distributionally robust formulation.

We next evaluate the performance of our CLF-DR-CBF QP formulation in static environments simulated in Gazebo (Fig.~\ref{fig:3a}), comparing it with two baseline approaches: Nominal CLF-CBF QP and CLF-GP-CBF SOCP \citep{Long2022RAL}.

\textbf{Hypothesis:}  Our CLF-DR-CBF QP formulation ensures robust and safe navigation under varying levels of sensor and localization noise. Compared to baseline methods, our approach should achieve lower failure rates (stuck and collision) and demonstrate superior adaptability to noise conditions without compromising computational efficiency.

\textbf{Setup:}  
The simulations are conducted in a static Gazebo environment where the robot is tasked to achieve a predefined goal while avoiding obstacles. Gaussian noise with standard deviation \( \sigma \) (ranging from 0.001 to 0.1) is added to the LiDAR measurements, and localization noise with \( \sigma \) values up to 0.05 is introduced. For each noise level, 1000 trials are conducted with randomly placed goal locations at least 10 meters away from the robot's starting position. The evaluation metrics include stuck rate, collision rate, and average tracking error (mean $\pm$ std), as summarized in Table~\ref{tab:static_env_metrics}. 

The stuck rate and collision rate together determine the success rate, representing the percentage of trials where the robot successfully reaches the goal without safety violations. The stuck rate captures two failure modes. First, the optimization program may become infeasible due to large uncertainties, particularly in the GP-CBF method, where the GP-SDF map's high variance at large LiDAR noise levels often renders the SOCP problem infeasible. Second, the robot may become trapped in a corner (local minimum), unable to make further progress toward the goal.

\begin{table*}[htbp]
\caption{Computation time comparison between different control approaches (in seconds). The values represent the mean $\pm$ standard deviation of the computation time along the robot trajectory. The total computation time for each method is the sum of the GP map training time (if applicable), inference time, and control synthesis solver time. The CLF-DR-CBF QP and Nominal CLF-CBF QP methods have similar total computation time, as they do not require map updates. For these two methods, the inference time refers to processing the LiDAR data as CBF samples and corresponding gradients. The CLF-GP-CBF SOCP method has the highest total computation time due to the additional overhead of GP map training.}
\label{tab:computation_time}
\centering
\begin{tabular}{|l|c|c|c|c|}
\hline
Method & Map Training & Inference & Controller Solver & Total Computation Time \\
\hline
CLF-DR-CBF QP & 0 & 0.0002 & 0.0071 $\pm$ 0.0022 & 0.0073 $\pm$ 0.0022 \\
\hline
CLF-GP-CBF SOCP & 0.0086 $\pm$ 0.0031 & 0.0003 & 0.0096 $\pm$ 0.0028 & 0.0185 $\pm$ 0.0059 \\
\hline
Nominal CLF-CBF QP& 0 & 0.0002 & 0.0064 $\pm$ 0.0023 & 0.0066 $\pm$ 0.0023 \\
\hline
\end{tabular}
\vspace{-2ex}
\end{table*}

\textbf{Results and discussion:}  
The results in Table~\ref{tab:static_env_metrics} highlight the robustness of the CLF-DR-CBF QP method, which consistently achieves low stuck and collision rates, even under high noise levels. In contrast, the GP-CBF and Nominal CLF-CBF QP methods exhibit significant performance degradation in challenging noise conditions.

As localization noise increases, the two baseline methods are more prone to collisions. This is because their formulations do not explicitly account for uncertainties in the robot state estimation. Our CLF-DR-CBF QP method remains robust by explicitly addressing localization uncertainties in its formulation. Similarly, at higher LiDAR noise levels, the GP-CBF method struggles due to significant variance in the GP-SDF estimation, often rendering its optimization program infeasible. The CLF-DR-CBF QP method, by directly using LiDAR measurements and incorporating distributionally robust constraints, avoids reliance on explicit map construction, exhibiting a higher probability of reliable performance at higher noise levels. Table~\ref{tab:computation_time} highlights the computational efficiency of the proposed method. The CLF-DR-CBF QP formulation achieves computation times comparable to the Nominal CLF-CBF QP method while significantly outperforming the GP-CBF method. This advantage stems from the CLF-DR-CBF QP method's direct use of LiDAR measurements without requiring computationally expensive GP map construction.



Overall, these results demonstrate the robustness and efficiency of the proposed CLF-DR-CBF QP formulation. By directly handling noisy sensor data and avoiding reliance on explicit map reconstruction, the method effectively balances computational efficiency and robust safety, making it suitable for real-world applications where sensing noise, localization noise, and computational constraints are significant challenges.

\subsection{Simulated Dynamic Environments}
\label{sec: eva_dynamic_sim}

\begin{table*}[htbp]
\caption{Performance metrics for dynamic environments over 1000 trials. The metrics include success rate, stuck rate, collision rate, and task completion time (mean ± std). The CLF-DR-CBF QP outperforms both baselines in terms of success rate and collision avoidance, demonstrating its robustness in dynamic settings.}
\label{tab:dynamic_metrics}
\centering
\begin{tabular}{|l|c|c|c|c|}
\hline
\textbf{Method} & \textbf{Success Rate (\%)} & \textbf{Stuck Rate (\%)} & \textbf{Collision Rate (\%)} & \textbf{Time (s)} \\ \hline
CLF-DR-CBF QP & \textbf{93.2} & \textbf{5.1} & \textbf{1.7} & $10.7 \pm 2.2$ \\ \hline
CLF-GP-CBF SOCP & 60.5 & 36.3 & 3.2 & $13.6 \pm 2.9$ \\ \hline
Nominal CLF-CBF QP & 61.7 & 8.5 & 29.8 & $10.1 \pm 2.1$ \\ \hline
\end{tabular}
\end{table*}

\begin{figure*}[ht]
\centering
\subfloat[Defensive Maneuver]{\includegraphics[width=0.195\linewidth]{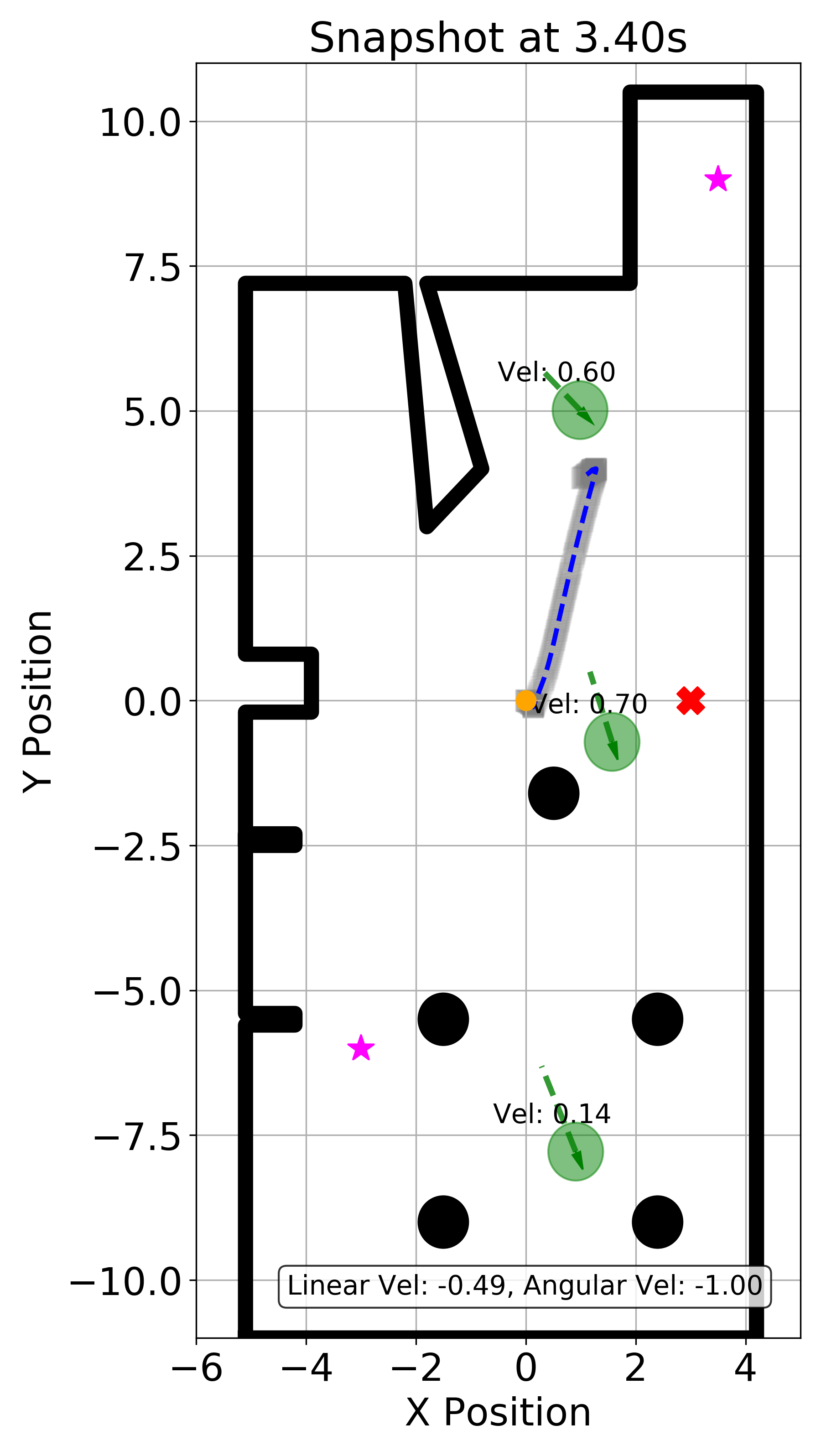}\label{fig:snapshot_170}}
\hfill
\subfloat[Resuming Tracking]{\includegraphics[width=0.195\linewidth]{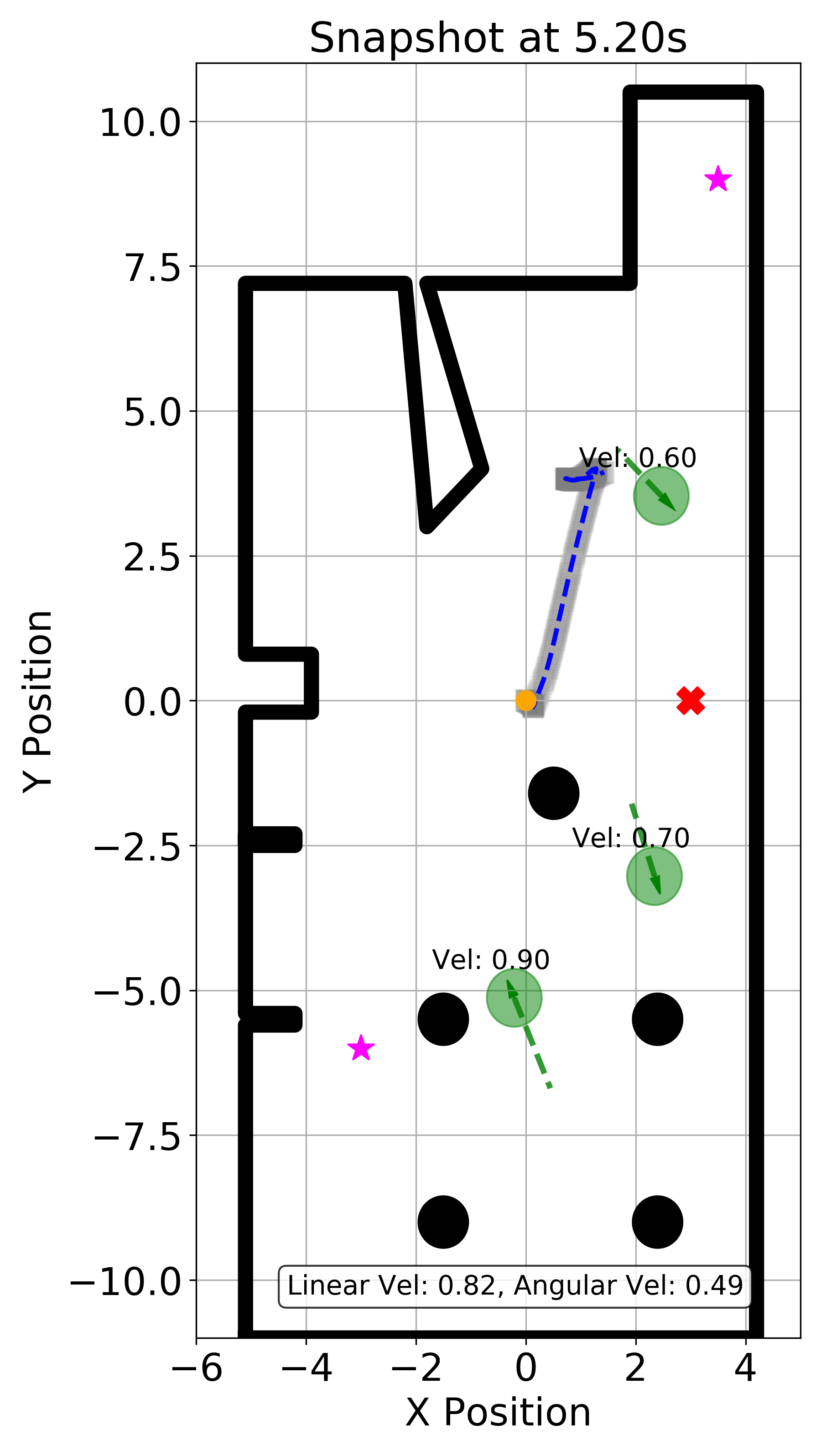}\label{fig:snapshot_260}}
\hfill
\subfloat[Wait Pedestrian]{\includegraphics[width=0.195\linewidth]{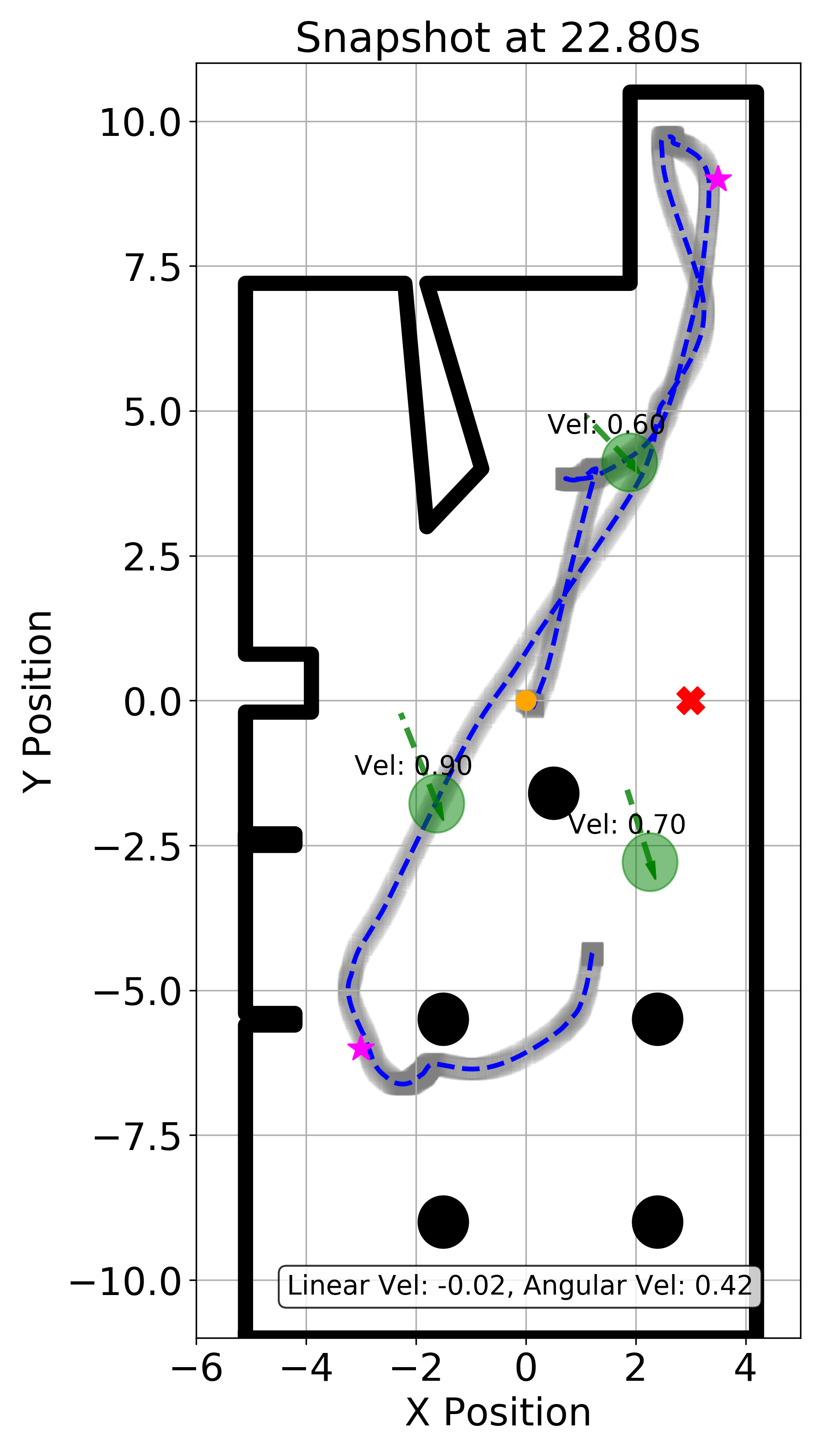}\label{fig:snapshot_1140}}
\hfill
\subfloat[Resuming Tracking]{\includegraphics[width=0.195\linewidth]{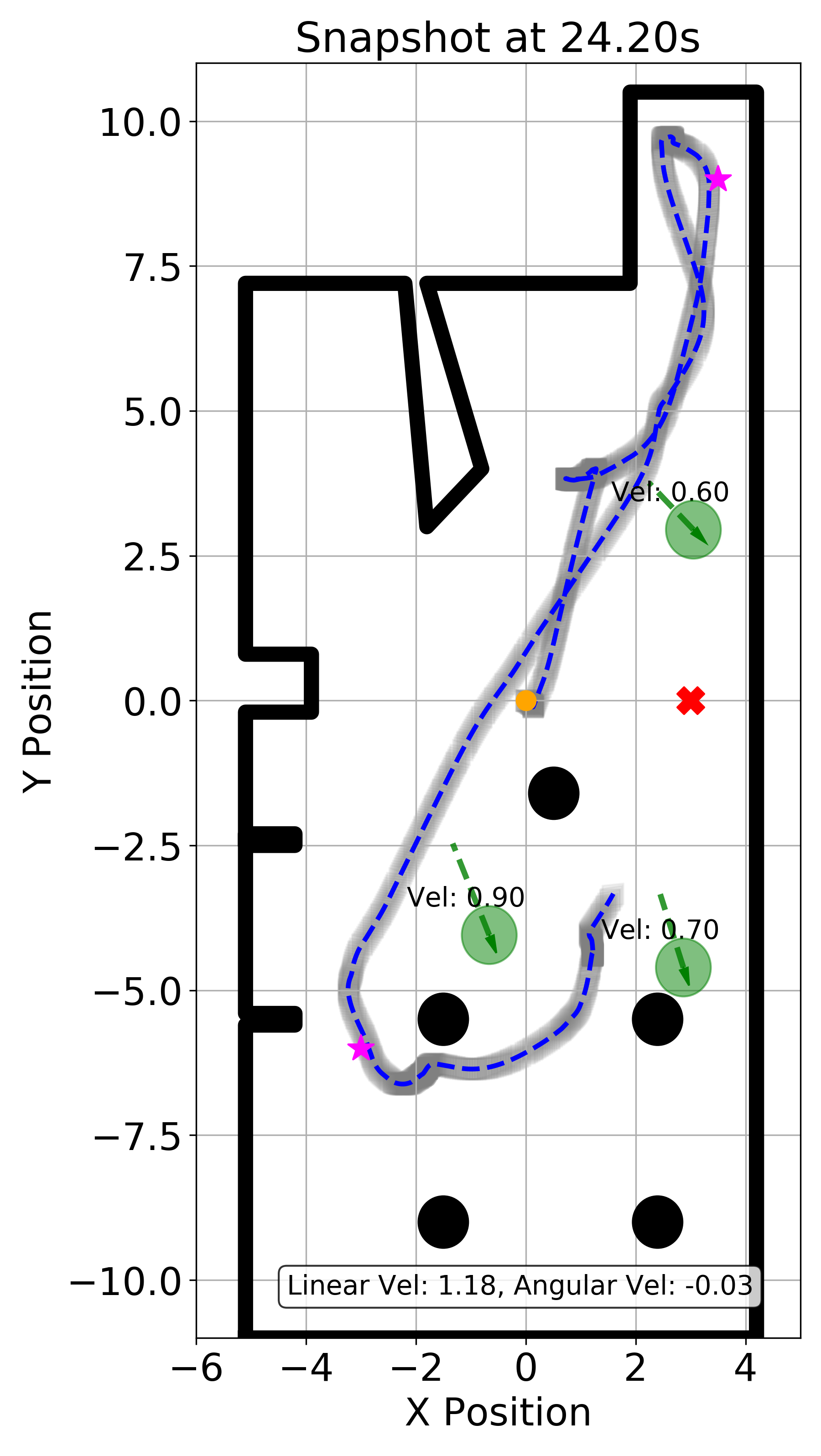}\label{fig:snapshot_1210}}
\hfill
\subfloat[Task Completion]{\includegraphics[width=0.195\linewidth]{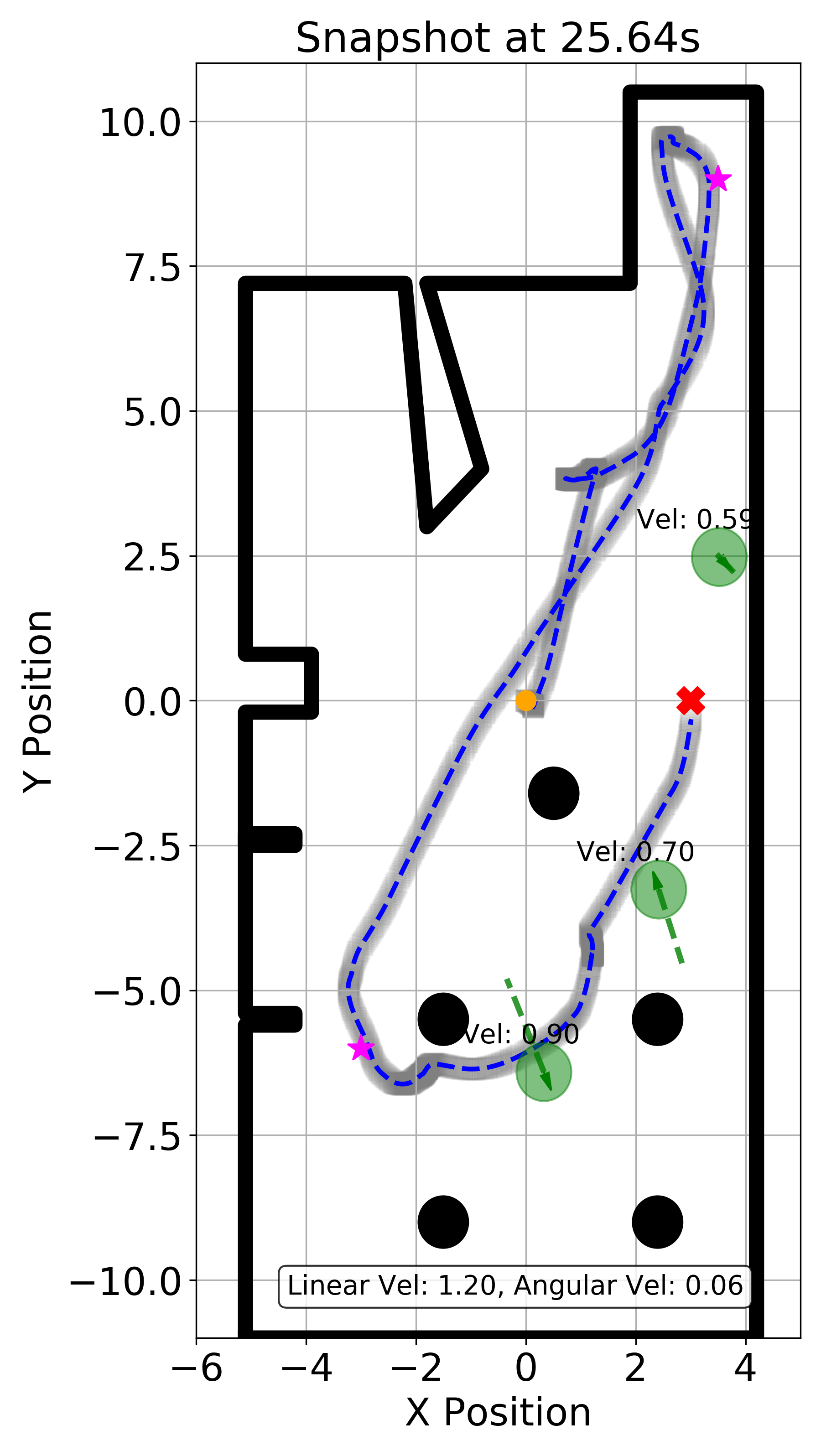}\label{fig:snapshot_1282}}\\
\includegraphics[width=0.75\linewidth,trim={0 10mm 0 5mm},clip]{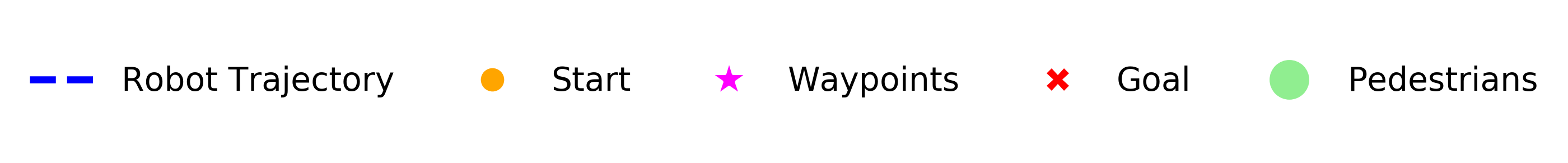}
\caption{Snapshots showing safe robot navigation in a simulated dynamic environment with three pedestrians, as depicted in Fig.~\ref{fig:3b}. The ground-truth static environment (e.g., walls, table base) is plotted in black. Each pedestrian is represented by a light green circle, with trajectory over the past second and current velocity also displayed. (a) At $t=3.4$s, the robot adjusts its trajectory due to an approaching pedestrian, adopting a defensive maneuver by rotating left ($-1$ rad/s) and moving backwards ($-0.49$m/s). (b) By $t=5.2$s, as the pedestrian clears, the robot accelerates forward ($0.89$m/s) to track its planned path towards the first waypoint. (c) At $t=22.8$s, facing another pedestrian crossing its planned path, the robot stops ($-0.02$m/s) to allow the pedestrian to pass. (d) At $t=24.2$s, the pedestrian has moved away, enabling the robot to resume its course towards the goal. (e) The complete trajectory at $t=25.6$s shows the robot successfully navigated to two waypoints and the final goal, ensuring safety in a dynamically changing environment.}
\label{fig:mov_robot_trajectories}
\vspace{-2ex}
\end{figure*}


The dynamic environment simulations are conducted in Gazebo (cf. Fig.~\ref{fig:3b}), designed to mimic real-world scenarios with pedestrians modeled using the social force model \citep{helbing1995social, moussaid2010walking}.

\textbf{Hypothesis:}  
Our CLF-DR-CBF QP approach will outperform baseline methods in handling time-varying constraints under noisy conditions. Specifically, we expect our method to achieve higher success rates, lower collision rates, and efficient task completion times, due to its ability to incorporate sensor noise and localization uncertainty directly into the control formulation.

\textbf{Setup:}  
The robot starts at $(0,0)$ with an initial orientation of $0^\circ$, and the goal locations are randomly placed at least $6$ meters away. Pedestrians are also randomly positioned in the environment, with velocities bounded by $B = 1$ m/s. Both the static and dynamic elements of the environment are unknown, and the robot relies on noisy LiDAR measurements (Gaussian noise with $\sigma = 0.05$) for collision avoidance. In all simulations below, the $A^*$ planning algorithm operates independently of the pedestrian motion, and the real-time pedestrian avoidance relies on our CLF-DR-CBF QP formulation (or the two baseline approaches). We conducted 1000 trials for each method, measuring metrics such as success rate, stuck rate, collision rate, and average task completion time. Success is defined as reaching the goal while maintaining safety, while a trial is considered stuck if the robot fails to reach the goal within 20 seconds.


\textbf{Results and discussion:}  
Table~\ref{tab:dynamic_metrics} summarizes the quantitative results. The proposed CLF-DR-CBF QP approach achieves the highest success rate (93.2\%) and the lowest collision rate (1.7\%) among the three methods.
By directly handling sensor noise through its distributionally robust formulation, the CLF-DR-CBF QP method ensures safety while maintaining efficient task completion times. The GP-CBF method exhibits a lower success rate and higher stuck rate due to its reliance on the GP-SDF map, which becomes computationally expensive and less reliable in dynamic environments. The Nominal CLF-CBF QP approach suffers from the highest collision rate (29.8\%), highlighting its limitations in handling dynamic obstacles with sensor noise. 


%
%

We next present some qualitative results in Fig.~\ref{fig:mov_robot_trajectories} in the same dynamic environment. The robot is tasked to sequentially visit two waypoints before reaching a designated goal at $(3,0)$. In Fig.~\ref{fig:snapshot_170}, at $t=3.4$s, the robot encounters a pedestrian on a collision course with its planned path to the first waypoint at the top right. With our CLF-DR-CBF QP controller, the robot employs a defensive maneuver. This adjustment shows the methodology's capability to anticipate potential hazards and react accordingly.

As the pedestrian clears the immediate area, the robot resumes its path tracking towards the first waypoint by $5.2$s (Fig.~\ref{fig:snapshot_260}). This behavior highlights the efficiency of our approach in balancing mission objectives with the need for safety.

The challenge intensifies at $t=22.8$s when another pedestrian intersects the robot's planned route (Fig.~\ref{fig:snapshot_1140}). In response, the robot stops to allow the pedestrian to pass safely. Once the pedestrian has passed, the robot continues its journey towards the goal, as observed at $t=24.2$s (Fig.~\ref{fig:snapshot_1210}). 

The successful completion of the task is shown in Fig.~\ref{fig:snapshot_1282}, where the robot reaches its final goal after safely navigating past all dynamic obstacles at $t=25.64$s. This simulation shows the CLF-DR-CBF QP controller's ability for robust path tracking and obstacle avoidance in a dynamic environment. 


\subsection{Real-World Experiments}
\label{sec: eva_real_world}

\begin{figure}[t]
  \centering
  \subcaptionbox{Robot Shape 1\label{fig:experiment_shape1}}{\includegraphics[width=0.49\linewidth]{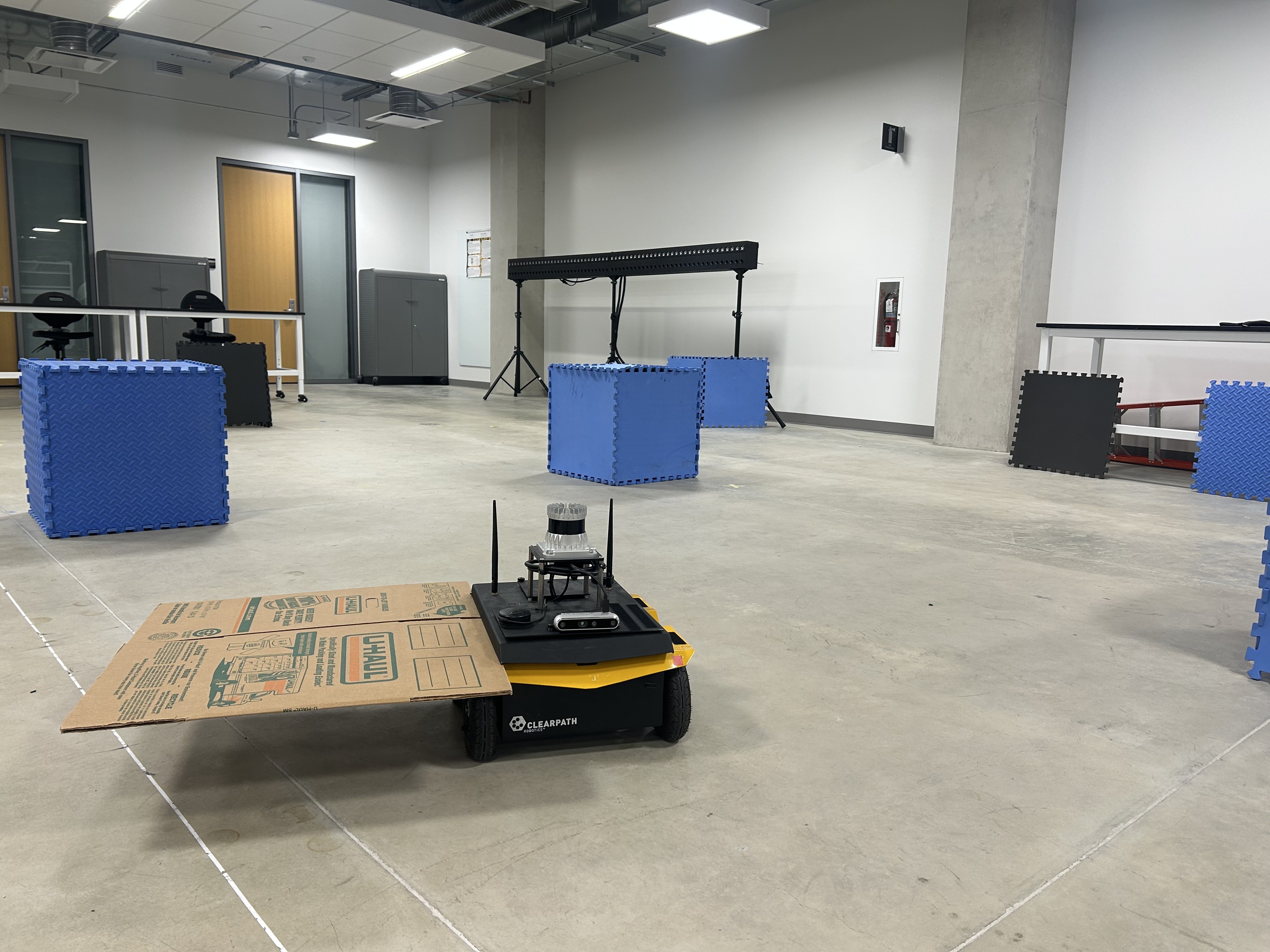}}%
  \hfill%
  \subcaptionbox{Robot Shape 2\label{fig:experiment_shape2}}{\includegraphics[width=0.49\linewidth]{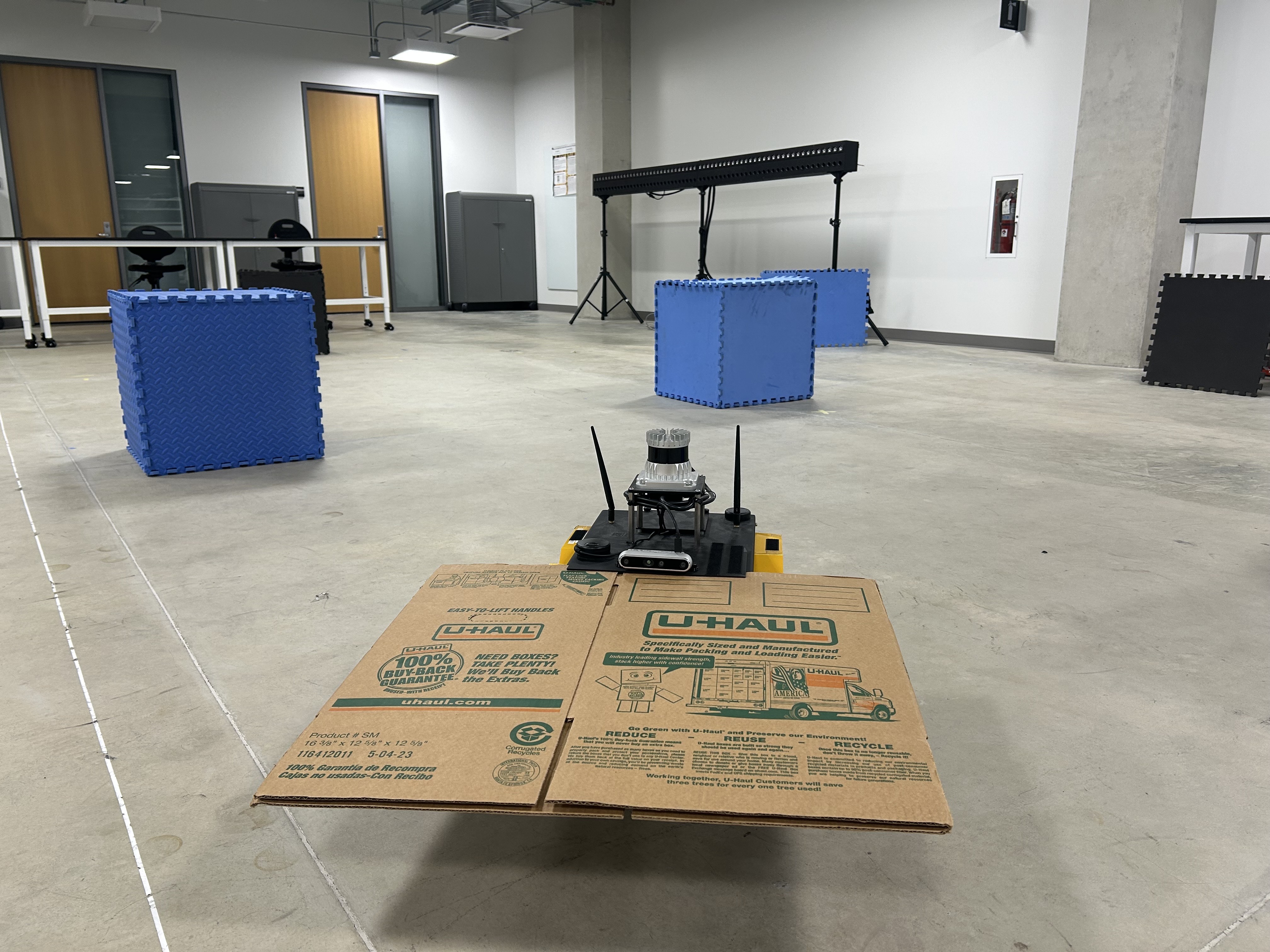}}\\
  \caption{Robot shapes used in real-world experiments.}
  \vspace{-2ex}
\end{figure}

\begin{figure}[t]
  \centering
  \includegraphics[width=0.8\linewidth]{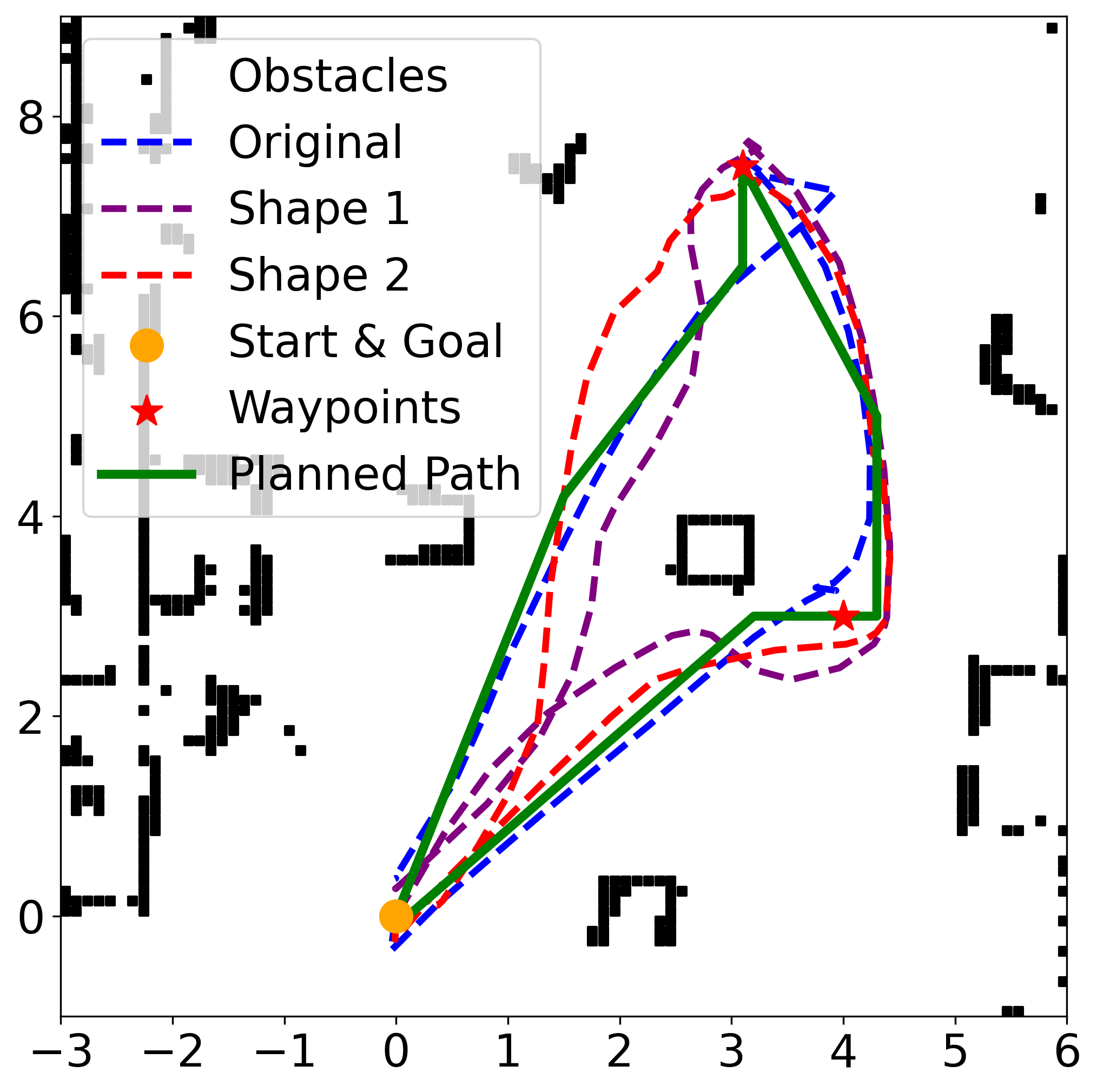}\\
  \caption{Comparison of trajectories for the three robot shapes.}
  \label{fig: exp_traj_compare}
  \vspace{-1ex}
\end{figure}


\begin{figure*}[ht]
\centering
\subfloat[Thin chair legs]{\includegraphics[width=0.33\textwidth]{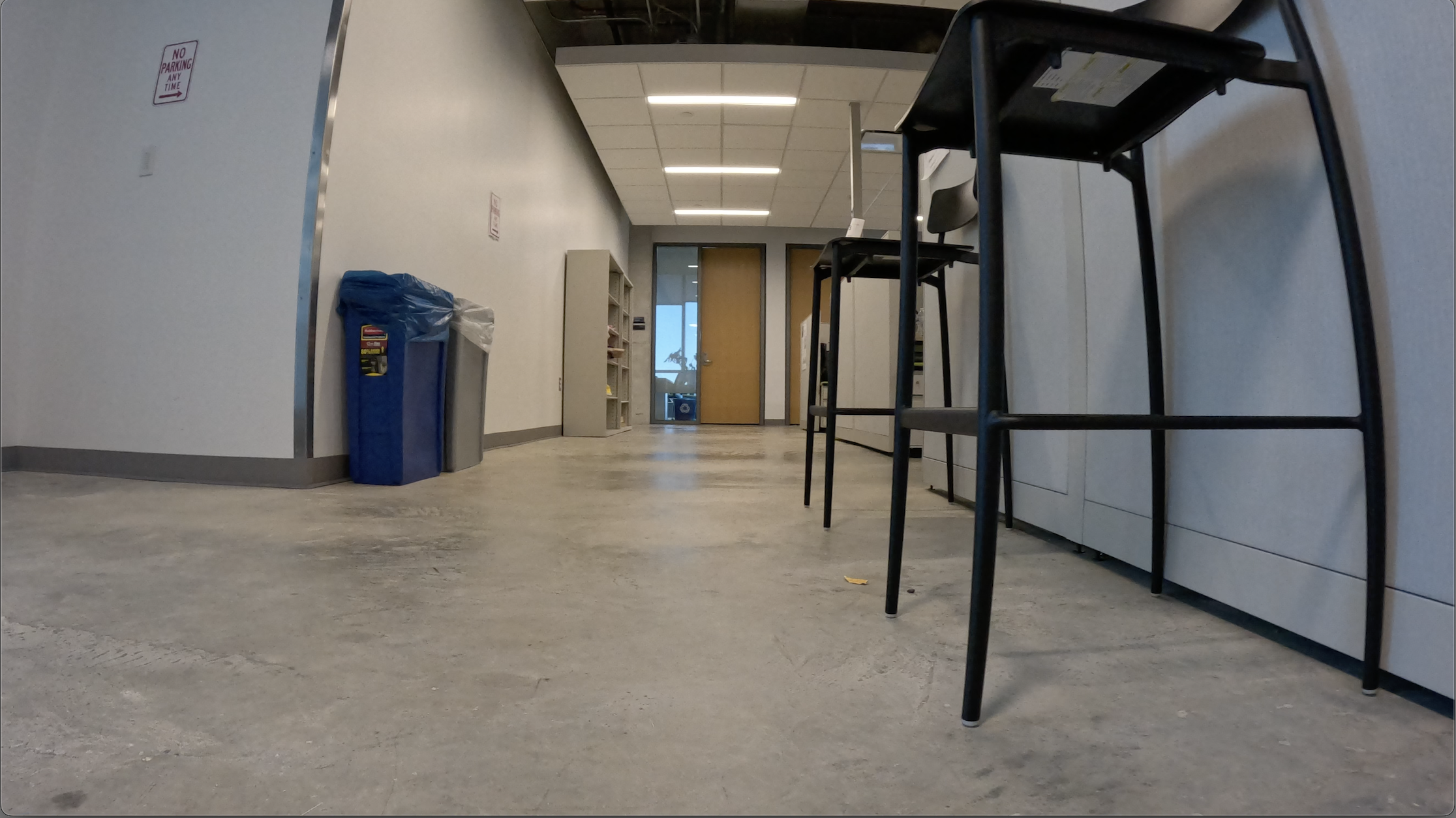}\label{fig:lab_example_1}}
\hfill
\subfloat[Narrow passage with pedestrians]{\includegraphics[width=0.33\textwidth]{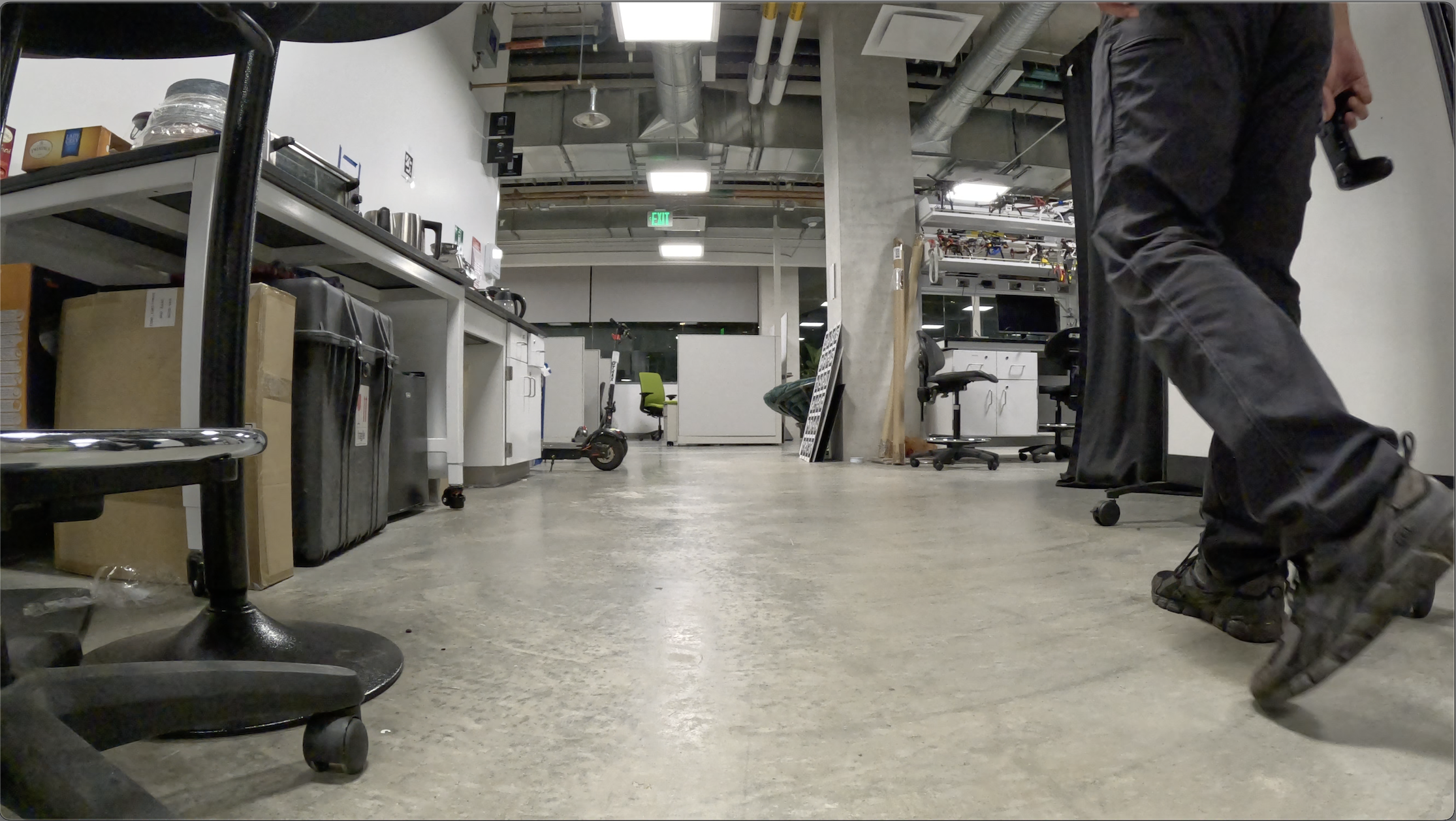}\label{fig:lab_example_2}}
\hfill
\subfloat[Pedestrian approaching]{\includegraphics[width=0.33\textwidth]{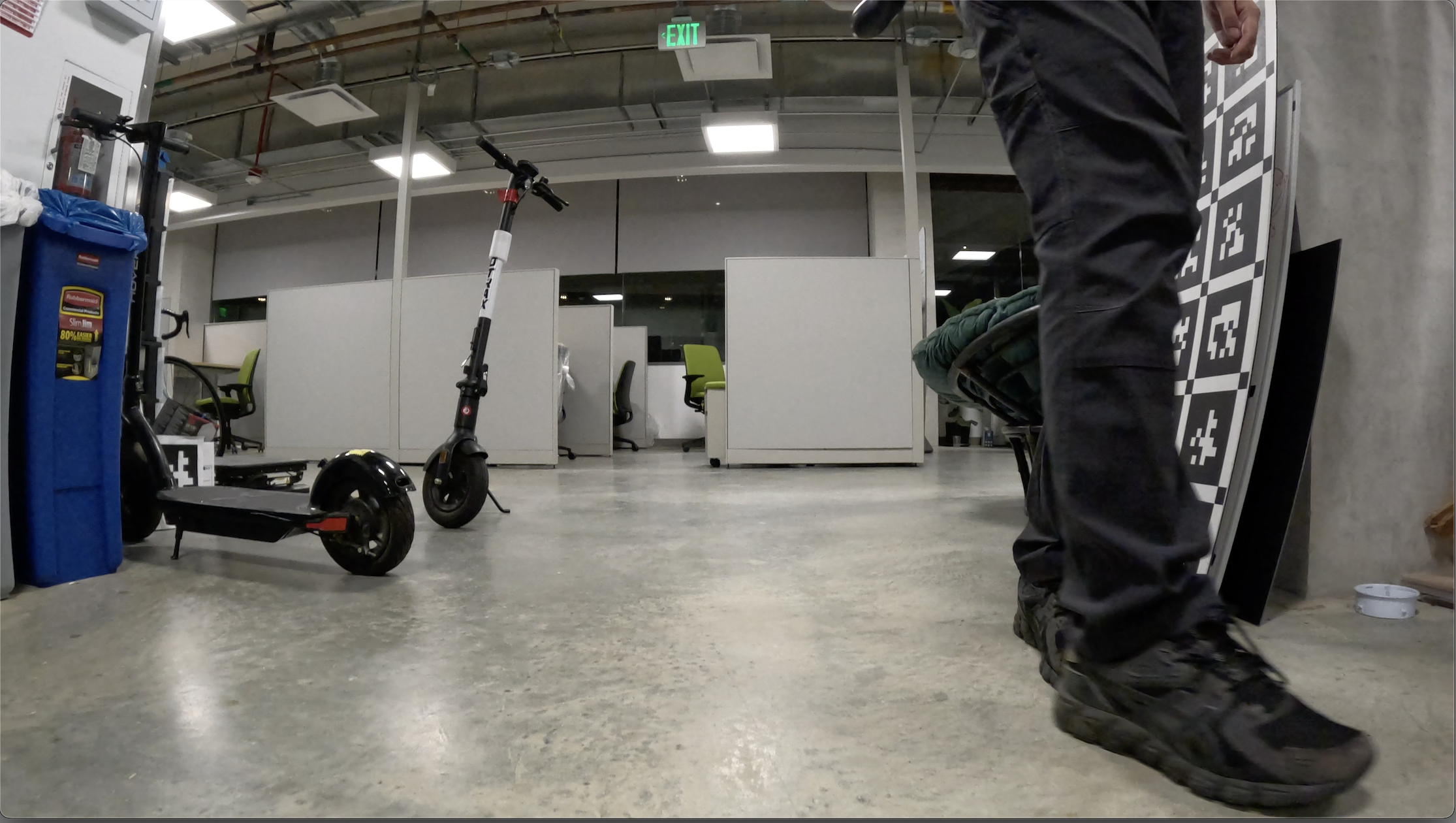}\label{fig:lab_example_3}}\\[1ex]
\includegraphics[width=\linewidth,trim={0 5mm 0 0},clip]{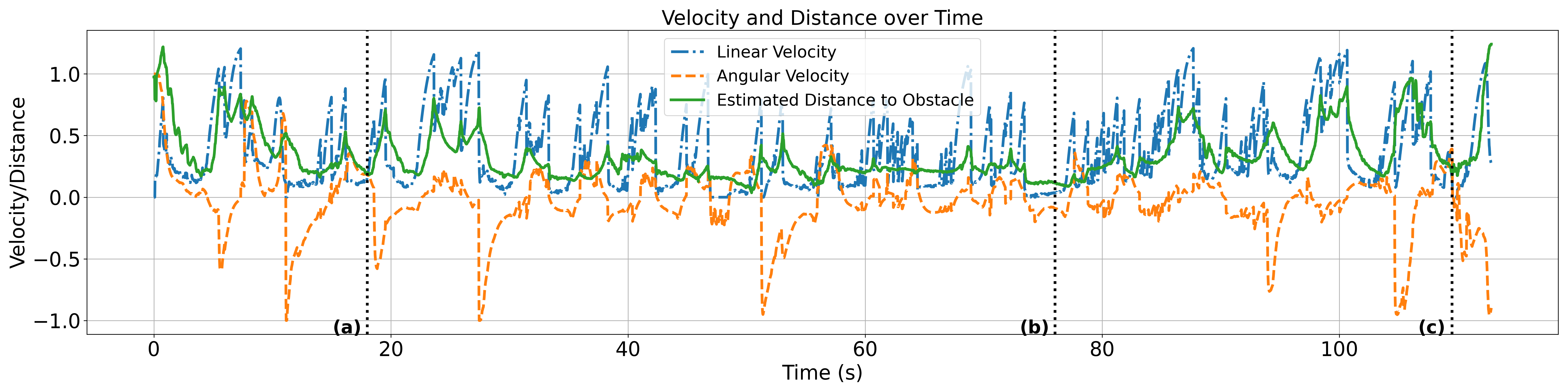}
\caption{Evaluation of our CLF-DR-CBF QP approach in a real lab environment. The top row illustrates challenging scenarios, including (a) navigating around thin chair legs, (b) passing through a narrow passage with pedestrians, and (c) handling an approaching pedestrian. The bottom plot shows the robot's velocity profile and distance to obstacles over time, with 3 vertical dotted lines marking the specific time instances corresponding to the challenging scenarios in the top row.}
\label{fig:experiment_velo_dis}
\end{figure*}

\begin{figure}[t]
\centering
\includegraphics[width=\linewidth]{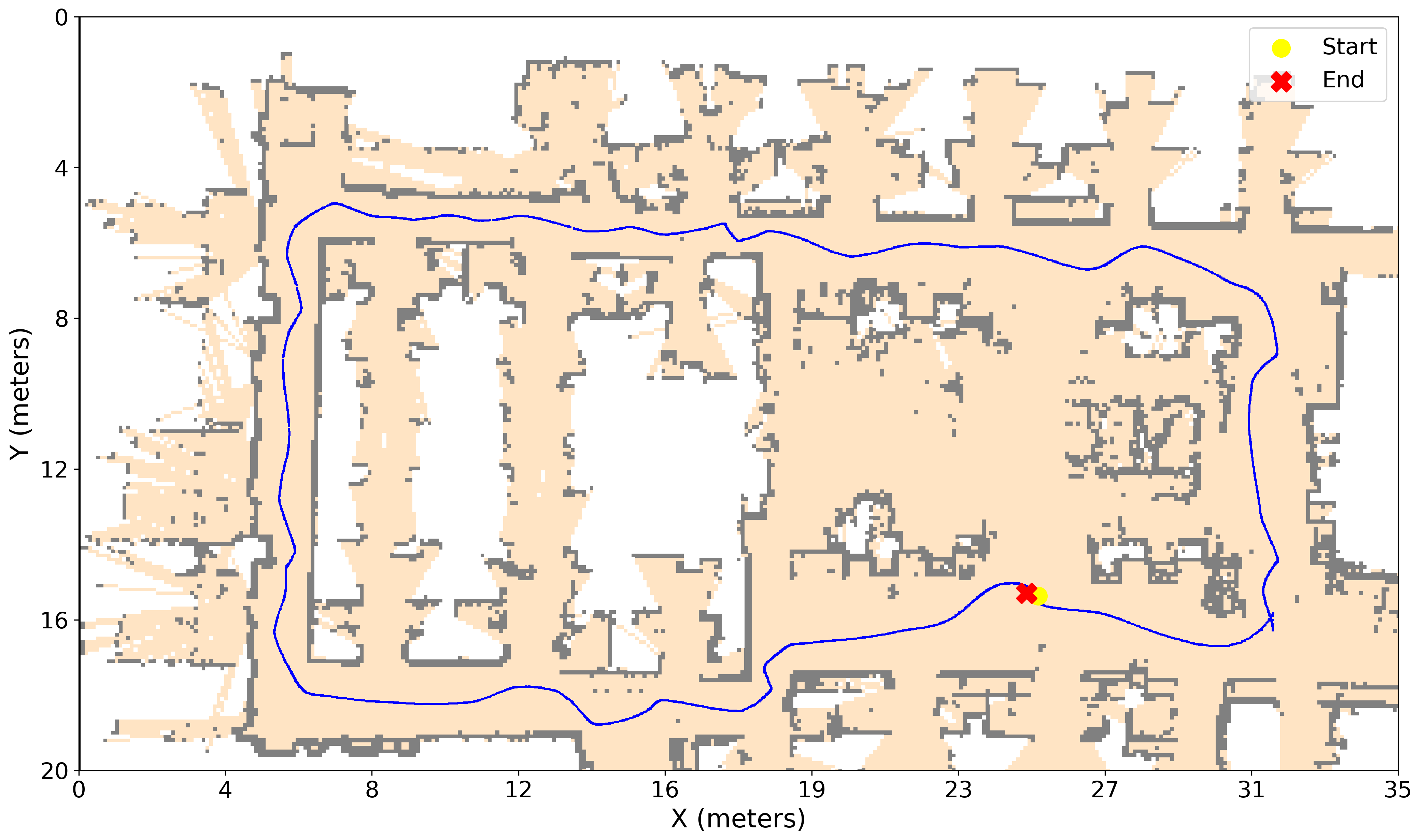}
\caption{Robot trajectory (blue) and estimated occupancy map (yellow and gray) of the lab environment.}
\label{fig:experiment_map_traj}
\end{figure}

We carried out real-world experiments using a differential-drive ClearPath Jackal robot (Fig.~\ref{fig:jackal_robot}). The robot was equipped with an Intel i7-9700TE CPU with 32GB RAM, an Ouster OS1-32 LiDAR, and a UM7 9-axis IMU, and a velocity controller accepting linear and angular velocity.

\textbf{Setup:}
The experiments took place in a lab environment, designed to test various challenging scenarios. The robot relied solely on noisy LiDAR measurements for navigation. We first tested our approach in an area of the lab with static obstacles, using three different robot shapes: the original shape (Fig.~\ref{fig:1a}), Shape 1 (Fig.~\ref{fig:experiment_shape1}), and Shape 2 (Fig.~\ref{fig:experiment_shape2}).  For quantitative evaluation, we ran 50 trials per shape in environments populated with randomly placed obstacles (e.g., cubes and pyramids) and three randomly walking pedestrians. 

\textbf{Results and discussion:}
Figure~\ref{fig: exp_traj_compare} shows the trajectories of the robot with the three shapes navigating the same environment by following a pre-planned path. Notably, the planned path was generated assuming the nominal robot shape and did not account for the differences introduced by Shape 1 and Shape 2. Consequently, the original shape demonstrates a minimal deviation from the planned path, whereas Shape 1 exhibits the largest deviations due to its wider, asymmetrical design.

Table~\ref{tab:dynamic_metrics_shapes} summarizes the results, showing the success, stuck, and collision rates for the three robot shapes under the proposed CLF-DR-CBF QP formulation. Our approach achieved high success rates across all robot shapes, demonstrating its robustness to variations in geometry and dynamic obstacles in the environment. In our evaluation, a robot was considered “stuck” if it did not reach its goal within 60 seconds.

When the robot shape becomes larger (Shapes 1 and 2), it is more likely to get stuck due to limited maneuverability in tight spaces. This is particularly evident for Shape 1, where the asymmetry makes the robot significantly wider on one side, further complicating its ability to bypass obstacles. Additionally, the larger size of Shapes 1 and 2 reduces their ability to navigate around obstacles within the required time, leading to reduced success rates. Despite these challenges, all three shapes achieved a 0\% collision rate, demonstrating the effectiveness of our CLF-DR-CBF QP formulation in maintaining safe navigation.

\begin{table}[t]
\caption{Performance metrics for real-world experiments with different robot shapes. Metrics include success rate, stuck rate, and collision rate over 50 trials per shape.}
\label{tab:dynamic_metrics_shapes}
\centering
\begin{tabular}{|l|c|c|c|}
\hline
\textbf{Shape} & \textbf{Success(\%)} & \textbf{Stuck (\%)} & \textbf{Collision(\%)} \\ \hline
Original      & 100             & 0                      & 0                         \\ \hline
Shape 1              & 84             & 16                      & 0               \\ \hline
Shape 2              & 90             & 10                      & 0               \\ \hline
\end{tabular}
\vspace{-2ex}
\end{table}

Next, we demonstrated the performance of our CLF-DR-CBF QP formulation using the original robot shape in a full-lab navigation task. The robot successfully handled various real-world challenges, such as thin chair legs, narrow passages with pedestrians, and approaching pedestrians (Fig.~\ref{fig:experiment_velo_dis}). In contrast to the CLF-GP-CBF SOCP formulation, which relies on GP regression to construct CBFs and cannot handle dynamic environments effectively, our CLF-DR-CBF QP formulation maintains safety while solely depending on noisy LiDAR measurements. The bottom plot in Fig.~\ref{fig:experiment_velo_dis} presents the distance to the obstacles and the robot's velocity profile over time, highlighting the robot's ability to maintain a safe distance while efficiently navigating towards its goal. For more details, please refer to the accompanying videos on the project webpage\footnote{\href{https://existentialrobotics.org/DRO_Safe_Navigation/}{https://existentialrobotics.org/DRO\_Safe\_Navigation/}}.

Fig.~\ref{fig:experiment_map_traj} depicts the estimated occupancy map and the executed trajectory using our CLF-DR-CBF QP controller. The robot successfully navigates through the cluttered environment, avoiding both static and dynamic obstacles, and reaches its desired goal position.

\section{Conclusion and Future Work}

We introduced a novel strategy for ensuring safety of mobile robots navigating autonomously in unknown dynamically changing environments. Our distributionally robust control barrier function formulation leverages sensor measurements and state estimates directly, eliminating the need for precise knowledge of CBFs, which may be slow and inaccurate to obtain in dynamic environments. By combining the DR-CBF with a control Lyapunov function for path tracking, we developed a CLF-DR-CBF quadratic program that enables safe autonomous navigation for robots with control-affine dynamics and arbitrary shape. Our approach underscores the efficiency and effectiveness of employing sensor-based DR-CBF constraints to handle uncertainty in measurements and state estimates. The simulation and experiment results suggest that further exploration into integrating sensor measurements and perception estimates directly into the robot planning and control methods may lead to significant progress in deploying reliable autonomous robot systems in the real world.

While our methodology demonstrates robustness and effectiveness, it has certain limitations. Sections \ref{sec: dro_cbf_navigation} present a general framework for control-affine systems, relying on the assumption that valid CBFs with relative degree 1 exist for these systems. However, identifying such functions can be nontrivial, particularly for complex or high-dimensional systems. To provide a practical demonstration, we employed a signed distance function as a CBF for a differential-drive robot in Section \ref{sec: unicycle} and conducted our evaluations on this specific model.

Future work will aim to extend this methodology to more complex robot systems, such as mobile manipulators and humanoids. Additionally, we plan to explore high-order control barrier functions to address challenges associated with designing valid CBFs, further bridging the gap between theoretical safety assurances and practical deployments in real-world scenarios.

\begin{funding}
We gratefully acknowledge support from ONR Award N00014-23-1-2353, NSF Award CMMI-2044900, and NSF Award CCF-2112665 (TILOS).
\end{funding}


%
%
\bibliographystyle{SageH}
\bibliography{ref.bib}

\begin{thebibliography}{111}
\providecommand{\natexlab}[1]{#1}
\providecommand{\url}[1]{\texttt{#1}}
\providecommand{\urlprefix}{URL }
\expandafter\ifx\csname urlstyle\endcsname\relax
  \providecommand{\doi}[1]{DOI:\discretionary{}{}{}#1}\else
  \providecommand{\doi}{DOI:\discretionary{}{}{}\begingroup \urlstyle{rm}\Url}\fi

\bibitem[{Abdi et~al.(2023)Abdi, Raja and Ghabcheloo}]{abdi2023safe}
Abdi H, Raja G and Ghabcheloo R (2023) Safe control using vision-based control barrier function ({V-CBF}).
\newblock In: \emph{IEEE International Conference on Robotics and Automation (ICRA)}. pp. 782--788.

\bibitem[{Abuaish et~al.(2023)Abuaish, Srinivasan and Vela}]{Abuaish_2023_radial_NN_ACC}
Abuaish A, Srinivasan M and Vela PA (2023) Geometry of radial basis neural networks for safety biased approximation of unsafe regions.
\newblock In: \emph{2023 American Control Conference (ACC)}. pp. 1459--1466.

\bibitem[{Agrawal and Panagou(2021)}]{Agrawal_2021_cdc}
Agrawal DR and Panagou D (2021) Safe control synthesis via input constrained control barrier functions.
\newblock In: \emph{2021 60th IEEE Conference on Decision and Control (CDC)}. pp. 6113--6118.

\bibitem[{Ames et~al.(2019)Ames, Coogan, Egerstedt, Notomista, Sreenath and Tabuada}]{cbf}
Ames A, Coogan S, Egerstedt M, Notomista G, Sreenath K and Tabuada P (2019) Control barrier functions: Theory and applications.
\newblock In: \emph{European Control Conference (ECC)}. pp. 3420--3431.

\bibitem[{Ames et~al.(2017)Ames, Xu, Grizzle and Tabuada}]{ames2016control}
Ames AD, Xu X, Grizzle JW and Tabuada P (2017) Control barrier function based quadratic programs for safety critical systems.
\newblock \emph{IEEE Transactions on Automatic Control} 62(8): 3861--3876.

\bibitem[{Andersson et~al.(2019)Andersson, Gillis, Horn, Rawlings and Diehl}]{Andersson2019}
Andersson JAE, Gillis J, Horn G, Rawlings JB and Diehl M (2019) {CasADi} -- {A} software framework for nonlinear optimization and optimal control.
\newblock \emph{Mathematical Programming Computation} 11(1): 1--36.
\newblock \doi{10.1007/s12532-018-0139-4}.

\bibitem[{Aolaritei et~al.(2023)Aolaritei, Fochesato, Lygeros and D{\"o}rfler}]{aolaritei2023wasserstein}
Aolaritei L, Fochesato M, Lygeros J and D{\"o}rfler F (2023) Wasserstein tube {MPC} with exact uncertainty propagation.
\newblock In: \emph{IEEE Conference on Decision and Control (CDC)}. pp. 2036--2041.

\bibitem[{Arslan and Koditschek(2019)}]{arslan2019sensor}
Arslan O and Koditschek DE (2019) Sensor-based reactive navigation in unknown convex sphere worlds.
\newblock \emph{The International Journal of Robotics Research} 38(2-3): 196--223.

\bibitem[{Artstein(1983)}]{Artstein1983StabilizationWR}
Artstein Z (1983) Stabilization with relaxed controls.
\newblock \emph{Nonlinear Analysis: Theory, Methods \& Applications} 7: 1163--1173.

\bibitem[{Bajcsy et~al.(2019)Bajcsy, Bansal, Bronstein, Tolani and Tomlin}]{bajcsy_2019_cdc_safety_static}
Bajcsy A, Bansal S, Bronstein E, Tolani V and Tomlin CJ (2019) An efficient reachability-based framework for provably safe autonomous navigation in unknown environments.
\newblock In: \emph{2019 IEEE 58th Conference on Decision and Control (CDC)}. pp. 1758--1765.

\bibitem[{Boffi et~al.(2021)Boffi, Tu, Matni, Slotine and Sindhwani}]{boffi2021learning}
Boffi N, Tu S, Matni N, Slotine JJ and Sindhwani V (2021) Learning stability certificates from data.
\newblock In: \emph{Conference on Robot Learning}. PMLR, pp. 1341--1350.

\bibitem[{Borenstein et~al.(1991)Borenstein, Koren et~al.}]{borenstein1991vector}
Borenstein J, Koren Y et~al. (1991) The vector field histogram-fast obstacle avoidance for mobile robots.
\newblock \emph{IEEE Transactions on Robotics and Automation} 7(3): 278--288.

\bibitem[{Boskos et~al.(2023)Boskos, Cort{\'e}s and Mart{\'\i}nez}]{DB-JC-SM:23-cdc}
Boskos D, Cort{\'e}s J and Mart{\'\i}nez S (2023) Data-driven distributionally robust coverage control by mobile robots.
\newblock In: \emph{IEEE Conference on Decision and Control (CDC)}. pp. 2030--2035.

\bibitem[{Boskos et~al.(2024)Boskos, Cort{\'e}s and Mart{\'i}nez}]{DB-JC-SM:24-tac}
Boskos D, Cort{\'e}s J and Mart{\'i}nez S (2024) High-confidence data-driven ambiguity sets for time-varying linear systems.
\newblock \emph{IEEE Transactions on Automatic Control} 69(2): 797--812.

\bibitem[{Breeden and Panagou(2023)}]{breeden2023robust}
Breeden J and Panagou D (2023) Robust control barrier functions under high relative degree and input constraints for satellite trajectories.
\newblock \emph{Automatica} 155: 111109.

\bibitem[{Brito et~al.(2019)Brito, Floor, Ferranti and Alonso-Mora}]{brito2019model}
Brito B, Floor B, Ferranti L and Alonso-Mora J (2019) Model predictive contouring control for collision avoidance in unstructured dynamic environments.
\newblock \emph{IEEE Robotics and Automation Letters} 4(4): 4459--4466.

\bibitem[{Chang et~al.(2019)Chang, Roohi and Gao}]{Chang2019NeuralLC}
Chang YC, Roohi N and Gao S (2019) Neural {L}yapunov control.
\newblock In: \emph{Advances in Neural Information Processing Systems}, volume~32.

\bibitem[{Chatila and Laumond(1985)}]{chatila1985position}
Chatila R and Laumond J (1985) Position referencing and consistent world modeling for mobile robots.
\newblock In: \emph{IEEE International Conference on Robotics and Automation (ICRA)}, volume~2. pp. 138--145.

\bibitem[{Chen et~al.(2022)Chen, Pei, Lu and Li}]{chen2022deep_rl}
Chen P, Pei J, Lu W and Li M (2022) A deep reinforcement learning based method for real-time path planning and dynamic obstacle avoidance.
\newblock \emph{Neurocomputing} 497: 64--75.

\bibitem[{Choi et~al.(2023)Choi, Casta{\~n}eda, Jung, Zhang, Tomlin and Sreenath}]{choi2023constraint}
Choi JJ, Casta{\~n}eda F, Jung W, Zhang B, Tomlin CJ and Sreenath K (2023) Constraint-guided online data selection for scalable data-driven safety filters in uncertain robotic systems.
\newblock \emph{arXiv preprint arXiv:2311.13824} .

\bibitem[{Chriat and Sun(2024)}]{chriat2023wasserstein}
Chriat A and Sun C (2024) Wasserstein distributionally robust control barrier function using conditional value-at-risk with differentiable convex programming.
\newblock In: \emph{AIAA SCITECH 2024 Forum}. p. 0725.

\bibitem[{Clark(2019)}]{clark_2019_acc_robust_cbf}
Clark A (2019) Control barrier functions for complete and incomplete information stochastic systems.
\newblock In: \emph{2019 American Control Conference (ACC)}. pp. 2928--2935.
\newblock \doi{10.23919/ACC.2019.8814901}.

\bibitem[{Clarke(1981)}]{clarke1981generalized}
Clarke FH (1981) Generalized gradients of lipschitz functionals.
\newblock \emph{Advances in Mathematics} 40(1): 52--67.

\bibitem[{Corke et~al.(2000)Corke, Trevelyan, Brock and Khatib}]{corke2000elastic}
Corke P, Trevelyan J, Brock O and Khatib O (2000) Elastic strips: A framework for integrated planning and execution.
\newblock In: \emph{Experimental Robotics VI}. Springer, pp. 329--338.

\bibitem[{Coulson et~al.(2021)Coulson, Lygeros and D{\"o}rfler}]{coulson2021distributionally}
Coulson J, Lygeros J and D{\"o}rfler F (2021) Distributionally robust chance constrained data-enabled predictive control.
\newblock \emph{IEEE Transactions on Automatic Control} 67(7): 3289--3304.

\bibitem[{Dai et~al.(2024)Dai, Jiang, Zhang and Clark}]{HD-CJ-HZ-AC:24}
Dai H, Jiang C, Zhang H and Clark A (2024) Verification and synthesis of compatible control lyapunov and control barrier functions.
\newblock In: \emph{2024 IEEE 63rd Conference on Decision and Control (CDC)}. pp. 8178--8185.
\newblock \doi{10.1109/CDC56724.2024.10885943}.

\bibitem[{Das and Burdick(2024)}]{das2024robust}
Das E and Burdick JW (2024) Robust control barrier functions using uncertainty estimation with application to mobile robots.
\newblock \emph{arXiv preprint arXiv:2401.01881} .

\bibitem[{Dawson et~al.(2022)Dawson, Lowenkamp, Goff and Fan}]{dawson2022learning}
Dawson C, Lowenkamp B, Goff D and Fan C (2022) Learning safe, generalizable perception-based hybrid control with certificates.
\newblock \emph{IEEE Robotics and Automation Letters} 7(2): 1904--1911.

\bibitem[{Daş and Murray(2022)}]{Das_2022_cdc_robust}
Daş E and Murray RM (2022) Robust safe control synthesis with disturbance observer-based control barrier functions.
\newblock In: \emph{IEEE Conference on Decision and Control (CDC)}. pp. 5566--5573.

\bibitem[{De~Lima and Pereira(2013)}]{de2013navigation}
De~Lima DA and Pereira GAS (2013) Navigation of an autonomous car using vector fields and the dynamic window approach.
\newblock \emph{Journal of Control, Automation and Electrical Systems} 24: 106--116.

\bibitem[{Desai and Ghaffari(2022)}]{Manavendra_2022_navigate}
Desai M and Ghaffari A (2022) {CLF-CBF} based quadratic programs for safe motion control of nonholonomic mobile robots in presence of moving obstacles.
\newblock In: \emph{IEEE/ASME International Conference on Advanced Intelligent Mechatronics (AIM)}. pp. 16--21.

\bibitem[{Dhiman et~al.(2023)Dhiman, Khojasteh, Franceschetti and Atanasov}]{dhiman_2023_tac_probabilistic}
Dhiman V, Khojasteh MJ, Franceschetti M and Atanasov N (2023) Control barriers in bayesian learning of system dynamics.
\newblock \emph{IEEE Transactions on Automatic Control} 68(1): 214--229.

\bibitem[{Dijkstra(1959)}]{DIJKSTRA1959}
Dijkstra EW (1959) A note on two problems in connexion with graphs.
\newblock \emph{Numerische Mathematik} 1: 269--271.

\bibitem[{Dixit et~al.(2023)Dixit, Lindemann, Wei, Cleaveland, Pappas and Burdick}]{dixit2023adaptive}
Dixit A, Lindemann L, Wei SX, Cleaveland M, Pappas GJ and Burdick JW (2023) Adaptive conformal prediction for motion planning among dynamic agents.
\newblock In: \emph{Learning for Dynamics and Control Conference}. PMLR, pp. 300--314.

\bibitem[{Djuric et~al.(2003)Djuric, Kotecha, Zhang, Huang, Ghirmai, Bugallo and Miguez}]{djuric2003particle}
Djuric PM, Kotecha JH, Zhang J, Huang Y, Ghirmai T, Bugallo MF and Miguez J (2003) Particle filtering.
\newblock \emph{IEEE signal processing magazine} 20(5): 19--38.

\bibitem[{Dudek et~al.(1978)Dudek, Jenkin, Milios and Wilkes}]{dudek1978robotic}
Dudek G, Jenkin M, Milios E and Wilkes D (1978) Robotic exploration as graph construction.
\newblock \emph{J. Comput., vol} 7(3).

\bibitem[{Emam et~al.(2022)Emam, Glotfelter, Wilson, Notomista and Egerstedt}]{yousef_2021_tro}
Emam Y, Glotfelter P, Wilson S, Notomista G and Egerstedt M (2022) Data-driven robust barrier functions for safe, long-term operation.
\newblock \emph{IEEE Transactions on Robotics} 38(3): 1671--1685.

\bibitem[{Esfahani and Kuhn(2018)}]{Esfahani2018DatadrivenDR}
Esfahani PM and Kuhn D (2018) Data-driven distributionally robust optimization using the {W}asserstein metric: performance guarantees and tractable reformulations.
\newblock \emph{Mathematical Programming} 171: 115--166.

\bibitem[{Everett et~al.(2021)Everett, Chen and How}]{everett_2021_rl}
Everett M, Chen YF and How JP (2021) Collision avoidance in pedestrian-rich environments with deep reinforcement learning.
\newblock \emph{IEEE Access} 9: 10357--10377.

\bibitem[{Fitzpatrick(1980)}]{fitzpatrick1980metric}
Fitzpatrick S (1980) Metric projections and the differentiability of distance functions.
\newblock \emph{Bulletin of the Australian Mathematical Society} 22(2): 291--312.

\bibitem[{Fox et~al.(1997)Fox, Burgard and Thrun}]{Fox1997TheDW}
Fox D, Burgard W and Thrun S (1997) The dynamic window approach to collision avoidance.
\newblock \emph{IEEE Robotics Autom. Mag.} 4: 23--33.

\bibitem[{Frazzoli et~al.(2002)Frazzoli, Dahleh and Feron}]{frazzoli2002real}
Frazzoli E, Dahleh MA and Feron E (2002) Real-time motion planning for agile autonomous vehicles.
\newblock \emph{Journal of guidance, control, and dynamics} 25(1): 116--129.

\bibitem[{Garone and Nicotra(2015)}]{garone2015explicit}
Garone E and Nicotra MM (2015) Explicit reference governor for constrained nonlinear systems.
\newblock \emph{IEEE Transactions on Automatic Control} 61(5): 1379--1384.

\bibitem[{Grandia et~al.(2021)Grandia, Taylor, Ames and Hutter}]{grandia_2021_legged}
Grandia R, Taylor AJ, Ames AD and Hutter M (2021) Multi-layered safety for legged robots via control barrier functions and model predictive control.
\newblock In: \emph{IEEE International Conference on Robotics and Automation (ICRA)}. pp. 8352--8358.

\bibitem[{Grisetti et~al.(2010)Grisetti, K{\"u}mmerle, Stachniss and Burgard}]{grisetti2010tutorial}
Grisetti G, K{\"u}mmerle R, Stachniss C and Burgard W (2010) A tutorial on graph-based slam.
\newblock \emph{IEEE Intelligent Transportation Systems Magazine} 2(4): 31--43.

\bibitem[{Hakobyan and Yang(2022)}]{hakobyan2022distributionally}
Hakobyan A and Yang I (2022) Distributionally robust risk map for learning-based motion planning and control: A semidefinite programming approach.
\newblock \emph{IEEE Transactions on Robotics} 39(1): 718--737.

\bibitem[{Hamdipoor et~al.(2023)Hamdipoor, Meskin and Cassandras}]{hamdipoor2023safe}
Hamdipoor V, Meskin N and Cassandras CG (2023) Safe control synthesis using environmentally robust control barrier functions.
\newblock \emph{European Journal of Control} 74: 100840.
\newblock 2023 European Control Conference Special Issue.

\bibitem[{Han et~al.(2019)Han, Gao, Zhou and Shen}]{luxin2019fiesta}
Han L, Gao F, Zhou B and Shen S (2019) Fiesta: Fast incremental {e}uclidean distance fields for online motion planning of aerial robots.
\newblock In: \emph{IEEE/RSJ International Conference on Intelligent Robots and Systems (IROS)}. pp. 4423--4430.

\bibitem[{Hart et~al.(1968)Hart, Nilsson and Raphael}]{A_star_planning}
Hart PE, Nilsson NJ and Raphael B (1968) A formal basis for the heuristic determination of minimum cost paths.
\newblock \emph{IEEE Transactions on Systems Science and Cybernetics} 4(2): 100--107.

\bibitem[{Helbing and Molnar(1995)}]{helbing1995social}
Helbing D and Molnar P (1995) Social force model for pedestrian dynamics.
\newblock \emph{Physical Review E} 51(5): 4282.

\bibitem[{Herbert et~al.(2017)Herbert, Chen, Han, Bansal, Fisac and Tomlin}]{herbert2017fastrack}
Herbert SL, Chen M, Han S, Bansal S, Fisac JF and Tomlin CJ (2017) Fa{ST}rack: A modular framework for fast and guaranteed safe motion planning.
\newblock In: \emph{IEEE Conference on Decision and Control (CDC)}. pp. 1517--1522.

\bibitem[{Hota et~al.(2019)Hota, Cherukuri and Lygeros}]{Hota2019DataDrivenCC}
Hota AR, Cherukuri AK and Lygeros J (2019) Data-driven chance constrained optimization under {W}asserstein ambiguity sets.
\newblock In: \emph{American Control Conference (ACC)}. pp. 1501--1506.

\bibitem[{Huang and Grizzle(2023)}]{Huang2023_TRO_planning_control}
Huang JK and Grizzle JW (2023) Efficient anytime clf reactive planning system for a bipedal robot on undulating terrain.
\newblock \emph{IEEE Transactions on Robotics} 39(3): 2093--2110.

\bibitem[{{\.I}{\c{s}}leyen et~al.(2023){\.I}{\c{s}}leyen, van~de Wouw and Arslan}]{icsleyen2023feedback}
{\.I}{\c{s}}leyen A, van~de Wouw N and Arslan {\"O} (2023) Feedback motion prediction for safe unicycle robot navigation.
\newblock In: \emph{IEEE/RSJ International Conference on Intelligent Robots and Systems (IROS)}. pp. 10511--10518.

\bibitem[{Jarvis-Wloszek et~al.(2003)Jarvis-Wloszek, Feeley, Tan, Sun and Packard}]{jarvis_clf_sos}
Jarvis-Wloszek Z, Feeley R, Tan W, Sun K and Packard A (2003) Some controls applications of sum of squares programming.
\newblock In: \emph{42nd IEEE International Conference on Decision and Control}, volume~5. pp. 4676--4681.
\newblock \doi{10.1109/CDC.2003.1272309}.

\bibitem[{Jiang and Guan(2016)}]{Jiang2016DatadrivenCC}
Jiang R and Guan Y (2016) Data-driven chance constrained stochastic program.
\newblock \emph{Mathematical Programming} 158: 291--327.

\bibitem[{Kalman(1960)}]{kalman1960new}
Kalman RE (1960) A new approach to linear filtering and prediction problems.
\newblock \emph{ASME Journal of Basic Engineering} 82: 35--45.

\bibitem[{Keyumarsi et~al.(2024)Keyumarsi, Atman and Gusrialdi}]{keyumarsi_LiDAR_CBF}
Keyumarsi S, Atman MWS and Gusrialdi A (2024) Lidar-based online control barrier function synthesis for safe navigation in unknown environments.
\newblock \emph{IEEE Robotics and Automation Letters} 9(2): 1043--1050.

\bibitem[{Khalil(2002)}]{khalil2002nonlinear}
Khalil HK (2002) \emph{Control of nonlinear systems}.
\newblock Prentice Hall, New York, NY.

\bibitem[{Khatib(1986)}]{potential-field}
Khatib O (1986) Real-time obstacle avoidance for manipulators and mobile robots.
\newblock \emph{The International Journal of Robotics Research} 5(1): 90--98.

\bibitem[{Khazoom et~al.(2022)Khazoom, Gonzalez-Diaz, Ding and Kim}]{khazoom_humanoid_2022}
Khazoom C, Gonzalez-Diaz D, Ding Y and Kim S (2022) Humanoid self-collision avoidance using whole-body control with control barrier functions.
\newblock In: \emph{IEEE International Conference on Humanoid Robots (Humanoids)}. pp. 558--565.

\bibitem[{Koenig and Howard(2004)}]{gazebo}
Koenig N and Howard A (2004) Design and use paradigms for gazebo, an open-source multi-robot simulator.
\newblock In: \emph{IEEE/RSJ International Conference on Intelligent Robots and Systems (IROS)}, volume~3. pp. 2149--2154.

\bibitem[{Kohlbrecher et~al.(2011)Kohlbrecher, von Stryk, Meyer and Klingauf}]{hector_slam_2011}
Kohlbrecher S, von Stryk O, Meyer J and Klingauf U (2011) A flexible and scalable slam system with full 3d motion estimation.
\newblock In: \emph{IEEE International Symposium on Safety, Security, and Rescue Robotics}. pp. 155--160.

\bibitem[{Kondo et~al.(2023)Kondo, Figueroa, Rached, Tordesillas, Lusk and How}]{kondo2023robust}
Kondo K, Figueroa R, Rached J, Tordesillas J, Lusk PC and How JP (2023) Robust mader: Decentralized multiagent trajectory planner robust to communication delay in dynamic environments.
\newblock \emph{IEEE Robotics and Automation Letters} .

\bibitem[{Koptev et~al.(2023)Koptev, Figueroa and Billard}]{koptev_neural_jsdf_2022}
Koptev M, Figueroa N and Billard A (2023) Neural joint space implicit signed distance functions for reactive robot manipulator control.
\newblock \emph{IEEE Robotics and Automation Letters} 8(2): 480--487.
\newblock \doi{10.1109/LRA.2022.3227860}.

\bibitem[{Lathrop et~al.(2021)Lathrop, Boardman and Mart{\'\i}nez}]{lathrop2021distributionally}
Lathrop P, Boardman B and Mart{\'\i}nez S (2021) Distributionally safe path planning: {W}asserstein safe {RRT}.
\newblock \emph{IEEE Robotics and Automation Letters} 7(1): 430--437.

\bibitem[{Li et~al.(2023)Li, Liu, Jin, Qin and Hirche}]{Li_2023_RAL}
Li J, Liu Q, Jin W, Qin J and Hirche S (2023) Robust safe learning and control in an unknown environment: An uncertainty-separated control barrier function approach.
\newblock \emph{IEEE Robotics and Automation Letters} 8(10): 6539--6546.

\bibitem[{Li et~al.(2024)Li, Zhang, Razmjoo and Calinon}]{li2024representing}
Li Y, Zhang Y, Razmjoo A and Calinon S (2024) Representing robot geometry as distance fields: Applications to whole-body manipulation.
\newblock In: \emph{Proc. IEEE Intl Conf. on Robotics and Automation (ICRA)}. pp. 15351--15357.

\bibitem[{Li et~al.(2020)Li, Arslan and Atanasov}]{li2020fast}
Li Z, Arslan {\"O} and Atanasov N (2020) Fast and safe path-following control using a state-dependent directional metric.
\newblock In: \emph{IEEE International Conference on Robotics and Automation (ICRA)}. pp. 6176--6182.

\bibitem[{Lindemann et~al.(2023)Lindemann, Cleaveland, Shim and Pappas}]{Lindemann_2022_conformal_mpc}
Lindemann L, Cleaveland M, Shim G and Pappas GJ (2023) Safe planning in dynamic environments using conformal prediction.
\newblock \emph{IEEE Robotics and Automation Letters} 8(8): 5116--5123.

\bibitem[{Liu et~al.(2023{\natexlab{a}})Liu, Adu, Lymburner, Kaushik, Trang and Vasudevan}]{liu2023radius}
Liu J, Adu CE, Lymburner L, Kaushik V, Trang L and Vasudevan R (2023{\natexlab{a}}) Radius: Risk-aware, real-time, reachability-based motion planning.
\newblock In: \emph{Robotics: Science and Systems (RSS)}.

\bibitem[{Liu et~al.(2023{\natexlab{b}})Liu, Li, Huang and Grizzle}]{liu2023realtime}
Liu J, Li M, Huang JK and Grizzle JW (2023{\natexlab{b}}) Realtime safety control for bipedal robots to avoid multiple obstacles via {CLF}-{CBF} constraints.
\newblock \emph{arXiv preprint arXiv:2301.01906} .

\bibitem[{Liu et~al.(2023{\natexlab{c}})Liu, Liu and Dolan}]{liu2023safe}
Liu S, Liu C and Dolan J (2023{\natexlab{c}}) Safe control under input limits with neural control barrier functions.
\newblock In: \emph{Conference on Robot Learning}. PMLR, pp. 1970--1980.

\bibitem[{Long et~al.(2024)Long, Cortés and Atanasov}]{long2024distributionally_policy}
Long K, Cortés J and Atanasov N (2024) Distributionally robust policy and lyapunov-certificate learning.
\newblock \emph{IEEE Open Journal of Control Systems} 3: 375--388.

\bibitem[{Long et~al.(2022)Long, Dhiman, Leok, Cortés and Atanasov}]{Long2022RAL}
Long K, Dhiman V, Leok M, Cortés J and Atanasov N (2022) Safe control synthesis with uncertain dynamics and constraints.
\newblock \emph{IEEE Robotics and Automation Letters} 7(3): 7295--7302.

\bibitem[{Long et~al.(2021)Long, Qian, Cortés and Atanasov}]{Long_learningcbf_ral21}
Long K, Qian C, Cortés J and Atanasov N (2021) Learning barrier functions with memory for robust safe navigation.
\newblock \emph{IEEE Robotics and Automation Letters} 6(3): 4931--4938.

\bibitem[{Long et~al.(2023{\natexlab{a}})Long, Yi, Cortes and Atanasov}]{long2023dro_lf}
Long K, Yi Y, Cortes J and Atanasov N (2023{\natexlab{a}}) Distributionally robust {L}yapunov function search under uncertainty.
\newblock In: \emph{Learning for Dynamics and Control Conference}. PMLR, pp. 864--877.

\bibitem[{Long et~al.(2023{\natexlab{b}})Long, Yi, Cortés and Atanasov}]{Long2023_acc_drccp}
Long K, Yi Y, Cortés J and Atanasov N (2023{\natexlab{b}}) Safe and stable control synthesis for uncertain system models via distributionally robust optimization.
\newblock In: \emph{American Control Conference (ACC)}. pp. 4651--4658.

\bibitem[{Lozano-Perez(1983)}]{lozano_1983_planning}
Lozano-Perez (1983) Spatial planning: A configuration space approach.
\newblock \emph{IEEE Transactions on Computers} C-32(2): 108--120.

\bibitem[{Majd et~al.(2021)Majd, Yaghoubi, Yamaguchi, Hoxha, Prokhorov and Fainekos}]{majd_rrt_cbf_iros}
Majd K, Yaghoubi S, Yamaguchi T, Hoxha B, Prokhorov D and Fainekos G (2021) Safe navigation in human occupied environments using sampling and control barrier functions.
\newblock In: \emph{IEEE/RSJ International Conference on Intelligent Robots and Systems (IROS)}. pp. 5794--5800.

\bibitem[{Mestres et~al.(2023)Mestres, Allibhoy and Cort\'es}]{PM-AA-JC:23-scl}
Mestres P, Allibhoy A and Cort\'es J (2023) Robinson’s counterexample and regularity properties of optimization-based controllers.
\newblock \emph{arXiv preprint arXiv:2311.13167} .

\bibitem[{Mestres et~al.(2024{\natexlab{a}})Mestres, Long, Atanasov and Cortés}]{PM-KL-NA-JC:23-csl}
Mestres P, Long K, Atanasov N and Cortés J (2024{\natexlab{a}}) Feasibility analysis and regularity characterization of distributionally robust safe stabilizing controllers.
\newblock \emph{IEEE Control Systems Letters} 8: 91--96.

\bibitem[{Mestres et~al.(2024{\natexlab{b}})Mestres, Nieto-Granda and Cort{\'e}s}]{mestres2024safe}
Mestres P, Nieto-Granda C and Cort{\'e}s J (2024{\natexlab{b}}) Safe and dynamically-feasible motion planning using control lyapunov and barrier functions.
\newblock \emph{arXiv preprint arXiv:2410.08364} .

\bibitem[{Moussa{\"\i}d et~al.(2010)Moussa{\"\i}d, Perozo, Garnier, Helbing and Theraulaz}]{moussaid2010walking}
Moussa{\"\i}d M, Perozo N, Garnier S, Helbing D and Theraulaz G (2010) The walking behaviour of pedestrian social groups and its impact on crowd dynamics.
\newblock \emph{PloS one} 5(4): e10047.

\bibitem[{Nemirovski and Shapiro(2006)}]{Nemirovski2006ConvexAO}
Nemirovski A and Shapiro A (2006) Convex approximations of chance constrained programs.
\newblock \emph{SIAM J. Optim.} 17: 969--996.

\bibitem[{Oleynikova et~al.(2017)Oleynikova, Taylor, Fehr, Siegwart and Nieto}]{voxblox_Oleynikova}
Oleynikova H, Taylor Z, Fehr M, Siegwart R and Nieto J (2017) Voxblox: Incremental 3d {E}uclidean signed distance fields for on-board {MAV} planning.
\newblock In: \emph{2017 IEEE/RSJ International Conference on Intelligent Robots and Systems (IROS)}. pp. 1366--1373.

\bibitem[{Park et~al.(2019)Park, Florence, Straub, Newcombe and Lovegrove}]{deepsdf}
Park JJ, Florence P, Straub J, Newcombe R and Lovegrove S (2019) {Deep{SDF}: Learning Continuous Signed Distance Functions for Shape Representation}.
\newblock In: \emph{Proceedings of the IEEE/CVF conference on computer vision and pattern recognition}. pp. 165--174.

\bibitem[{Parwana et~al.(2022)Parwana, Mustafa and Panagou}]{hardik_2022_rate_cbf}
Parwana H, Mustafa A and Panagou D (2022) Trust-based rate-tunable control barrier functions for non-cooperative multi-agent systems.
\newblock In: \emph{2022 IEEE 61st Conference on Decision and Control (CDC)}. pp. 2222--2229.

\bibitem[{Pfeiffer et~al.(2018)Pfeiffer, Shukla, Turchetta, Cadena, Krause, Siegwart and Nieto}]{pfeiffer2018reinforced}
Pfeiffer M, Shukla S, Turchetta M, Cadena C, Krause A, Siegwart R and Nieto J (2018) Reinforced imitation: Sample efficient deep reinforcement learning for mapless navigation by leveraging prior demonstrations.
\newblock \emph{IEEE Robotics and Automation Letters} 3(4): 4423--4430.

\bibitem[{Prajna and Jadbabaie(2004)}]{prajna2004safety}
Prajna S and Jadbabaie A (2004) Safety verification of hybrid systems using barrier certificates.
\newblock In: \emph{International Workshop on Hybrid Systems: Computation and Control}. Springer, pp. 477--492.

\bibitem[{Ren and Majumdar(2022)}]{ren2022distributionally_ral}
Ren AZ and Majumdar A (2022) Distributionally robust policy learning via adversarial environment generation.
\newblock \emph{IEEE Robotics and Automation Letters} 7(2): 1379--1386.

\bibitem[{{Rimon} and {Koditschek}(1992)}]{navigation-function}
{Rimon} E and {Koditschek} DE (1992) Exact robot navigation using artificial potential functions.
\newblock \emph{IEEE Transactions on Robotics and Automation} 8(5): 501--518.

\bibitem[{Rockafellar and Uryasev(2000)}]{Rockafellar00optimizationof}
Rockafellar RT and Uryasev S (2000) Optimization of conditional value-at-risk.
\newblock \emph{Journal of Risk} 2: 21--41.

\bibitem[{Rockafellar and Wets(2009)}]{rockafellar2009variational}
Rockafellar RT and Wets RJB (2009) \emph{Variational analysis}, volume 317.
\newblock Springer Science \& Business Media.

\bibitem[{Ryu and Mehr(2024)}]{Ryu_2024_icra_mpc_dro}
Ryu K and Mehr N (2024) Integrating predictive motion uncertainties with distributionally robust risk-aware control for safe robot navigation in crowds.
\newblock In: \emph{2024 IEEE International Conference on Robotics and Automation (ICRA)}. pp. 2410--2417.

\bibitem[{Schaefer et~al.(2021)Schaefer, Leung, Ivanovic and Pavone}]{schaefer2021leveraging}
Schaefer S, Leung K, Ivanovic B and Pavone M (2021) Leveraging neural network gradients within trajectory optimization for proactive human-robot interactions.
\newblock In: \emph{2021 IEEE International Conference on Robotics and Automation (ICRA)}. IEEE, pp. 9673--9679.

\bibitem[{Shafer and Vovk(2008)}]{shafer2008tutorial}
Shafer G and Vovk V (2008) A tutorial on conformal prediction.
\newblock \emph{Journal of Machine Learning Research} 9(3).

\bibitem[{Sontag(1989)}]{SONTAG1989117}
Sontag ED (1989) {A ‘universal’ construction of {A}rtstein's theorem on nonlinear stabilization}.
\newblock \emph{Systems \& Control Letters} 13(2): 117--123.

\bibitem[{Summers(2018)}]{summers2018_dr_rrt}
Summers T (2018) Distributionally robust sampling-based motion planning under uncertainty.
\newblock In: \emph{IEEE/RSJ International Conference on Intelligent Robots and Systems (IROS)}. pp. 6518--6523.

\bibitem[{Van~Parys et~al.(2016)Van~Parys, Kuhn, Goulart and Morari}]{Parys2015monent}
Van~Parys BPG, Kuhn D, Goulart PJ and Morari M (2016) Distributionally robust control of constrained stochastic systems.
\newblock \emph{IEEE Transactions on Automatic Control} 61(2): 430--442.

\bibitem[{Wang and Xu(2023)}]{Wang_2023_acc_dob}
Wang Y and Xu X (2023) Disturbance observer-based robust control barrier functions.
\newblock In: \emph{American Control Conference (ACC)}. pp. 3681--3687.

\bibitem[{Wu et~al.(2023)Wu, Lee, Le~Gentil and Vidal-Calleja}]{wu_2023_tro}
Wu L, Lee KMB, Le~Gentil C and Vidal-Calleja T (2023) Log-gpis-mop: A unified representation for mapping, odometry, and planning.
\newblock \emph{IEEE Transactions on Robotics} 39(5): 4078--4094.

\bibitem[{Wu et~al.(2021)Wu, Lee, Liu and Vidal-Calleja}]{log-gpis}
Wu L, Lee KMB, Liu L and Vidal-Calleja T (2021) Faithful {E}uclidean distance field from log-gaussian process implicit surfaces.
\newblock \emph{IEEE Robotics and Automation Letters} 6(2): 2461--2468.

\bibitem[{Xiao et~al.(2022)Xiao, Wang, Chahine, Amini, Hasani and Rus}]{xiao2022differentiable}
Xiao W, Wang TH, Chahine M, Amini A, Hasani R and Rus D (2022) Differentiable control barrier functions for vision-based end-to-end autonomous driving.
\newblock \emph{arXiv preprint arXiv:2203.02401} .

\bibitem[{Xie(2021)}]{Xie2021OnDR}
Xie W (2021) On distributionally robust chance constrained programs with {W}asserstein distance.
\newblock \emph{Math. Program.} 186: 115--155.

\bibitem[{Yang et~al.(2023)Yang, Pappas, Mangharam and Lindemann}]{yang2023safe}
Yang S, Pappas GJ, Mangharam R and Lindemann L (2023) Safe perception-based control under stochastic sensor uncertainty using conformal prediction.
\newblock In: \emph{IEEE Conference on Decision and Control (CDC)}. pp. 6072--6078.

\bibitem[{Yang et~al.(2022)Yang, Gong, Huang and Hong}]{Lidar_velocity_estimate}
Yang W, Gong Z, Huang B and Hong X (2022) Lidar with velocity: Correcting moving objects point cloud distortion from oscillating scanning lidars by fusion with camera.
\newblock \emph{IEEE Robotics and Automation Letters} 7(3): 8241--8248.

\bibitem[{Yu et~al.(2023)Yu, Hirayama, Yu, Herbert and Gao}]{Yu_sequential_CBF_2023}
Yu H, Hirayama C, Yu C, Herbert S and Gao S (2023) Sequential neural barriers for scalable dynamic obstacle avoidance.
\newblock In: \emph{IEEE/RSJ International Conference on Intelligent Robots and Systems (IROS)}. pp. 11241--11248.

\bibitem[{Zhang et~al.(2023)Zhang, Garg and Fan}]{zhang2023neural}
Zhang S, Garg K and Fan C (2023) Neural graph control barrier functions guided distributed collision-avoidance multi-agent control.
\newblock In: \emph{Conference on Robot Learning (CoRL)}. PMLR, pp. 2373--2392.

\bibitem[{Zhang et~al.(2024)Zhang, Tian, Wen, Yao, Zhang, Bing, He and Knoll}]{zhang2024online}
Zhang Y, Tian G, Wen L, Yao X, Zhang L, Bing Z, He W and Knoll A (2024) Online efficient safety-critical control for mobile robots in unknown dynamic multi-obstacle environments.
\newblock \emph{arXiv preprint arXiv:2402.16449} .

\bibitem[{Zhao et~al.(2024)Zhao, Yu, Deshmukh and Lindemann}]{zhao2024conformal}
Zhao Y, Yu X, Deshmukh JV and Lindemann L (2024) Conformal predictive programming for chance constrained optimization.
\newblock \emph{arXiv preprint arXiv:2402.07407} .

\end{thebibliography}

\end{document}